\newtheorem{theorem}{Theorem}[section]
\newtheorem{lemma}[theorem]{Lemma}
\newtheorem{assumption}[theorem]{Assumption}
\newtheorem{remark}[theorem]{Remark}
\newenvironment{manualtheorem}[1]{%
  \manualtheoreminner
}{\endmanualtheoreminner}
\newcommand{\norm}[1]{\left\Vert#1\right\Vert}
\newcommand{\subscript}[2]{$#1 _ #2$} 
\newcommand{\op}[1]{\operatorname{#1}}
\newcommand{\rank}{\op{rank}}
\newcommand{\dd}{\,{\mathrm d}}
\renewcommand{\b}[1]{\bf{#1}}
\newcommand{\bb}[1]{\mathbb{#1}}
\newcommand{\R}{\mathbb{R}}
\newcommand*\bdot{\mathpalette\bdot@{.65}}
\newcommand*\bdot@[2]{\mathbin{\vcenter{\hbox{\scalebox{#2}{$\m@th#1\bullet$}}}}}
\newcommand{\appendixtitle}[1]{
\rule{\linewidth}{4pt}
\vskip 0.1in
\hfil {\Large \textbf{
#1}
}
\vspace{2mm} \hfil
\vspace{-3mm}
\vskip 0.05in
\rule{\linewidth}{1pt}
}
\title{Choose a Transformer: Fourier or Galerkin}
\author{%
  Shuhao Cao
\\
  Department of Mathematics and Statistics\\
  Washington University in St. Louis\\
  \texttt{s.cao@wustl.edu} \\
}
\begin{document}

\maketitle

\begin{abstract}
In this paper, we apply the self-attention from the state-of-the-art Transformer in \emph{Attention Is All You Need}~\cite{Vaswani;Shazeer;Parmar:2017Attention} for the first time to a data-driven operator learning problem related to partial differential equations. 
An effort is put together to explain the heuristics of, and to improve
the efficacy of the attention mechanism. By employing the operator approximation theory in Hilbert spaces, it is demonstrated for the first time that the softmax normalization in the scaled dot-product attention is sufficient but not necessary. Without softmax, the approximation capacity of a linearized Transformer variant can be proved to be comparable to a Petrov-Galerkin projection layer-wise, and the estimate is independent with respect to the sequence length. 
A new layer normalization scheme mimicking the Petrov-Galerkin projection is proposed to allow a scaling to propagate through attention layers, which helps the model achieve remarkable accuracy in operator learning tasks with unnormalized data. Finally, we present three operator learning experiments, including the viscid Burgers' equation, an interface Darcy flow, and an inverse interface coefficient identification problem. The newly proposed simple attention-based operator learner, Galerkin Transformer, shows
significant improvements in both training cost and evaluation accuracy over
its softmax-normalized counterparts.
\end{abstract}

\section{Introduction}

Partial differential equations (PDEs) arise from almost every multiphysics and biological systems, from the interaction of atoms to the merge of galaxies, from the formation of cells to the change of climate. Scientists and engineers have been working on approximating the governing PDEs of these physical systems for centuries. The emergence of the computer-aided simulation facilitates a cost-friendly way to study these challenging problems. 
Traditional methods, such as finite element/difference \cite{Ciarlet:2002finite,Courant.Friedrichs.ea:1967partial}, spectral methods \cite{Bracewell.Bracewell:1986Fourier}, etc., leverage a discrete structure to reduce an infinite dimensional operator map to a finite dimensional approximation problem. 
Meanwhile, in the field practice of many scientific disciplines, substantial data for PDE-governed phenomena available on discrete grids enable modern black-box models like Physics-Informed Neural Network (PINN) \cite{Raissi.Perdikaris.ea:2019Physics,Lu.Meng.ea:2021DeepXDE,Karniadakis.Kevrekidis.ea:2021Physics} to exploit measurements on collocation points to approximate PDE solutions.

Nonetheless, for traditional methods or data-driven function learners such as PINN, given a PDE, the focus is to approximate a single instance, for example, solving for an approximated solution for one coefficient with a fixed boundary condition. A slight change to this coefficient invokes a potentially expensive re-training of any data-driven function learners. In contrast, an operator learner aims to learn a map between infinite-dimensional function spaces, which is much more difficult yet rewarding. A well-trained operator learner can evaluate many instances without re-training or collocation points, thus saving valuable resources, 
and poses itself as a more efficient approach in the long run. Data-driven resolution-invariant operator learning is a booming new research
direction \cite{LuJinKarniadakis2019Deeponet,Li.Kovachki.ea:2020Neural,Li.Kovachki.ea:2020Multipole,Nelsen.Stuart:2020Random,Wang.Teng.ea:2021Understanding,Li.Kovachki.ea:2021Fourier,Lu.Jin.ea:2021Deeponet,Wang.Wang.ea:2021Learning,GuptaXiaoBogdan2021Multiwavelet,RobertsKhodakDaoLiEtAl2021Rethinking}. The pioneering model, DeepONet \cite{LuJinKarniadakis2019Deeponet}, attributes architecturally to a universal approximation theorem for operators \cite{ChenChen1995Universal}. Fourier Neural Operator (FNO) \cite{Li.Kovachki.ea:2021Fourier} notably shows an awing state-of-the-art performance outclassing classic models such as the one in \cite{Zhu.Zabaras:2018Bayesian} by orders of magnitudes in certain benchmarks.

Under a supervised setting, an operator learner is trained with the operator's input functions and their responses to the inputs as targets. Since both functions are sampled at discrete grid points, this is a special case of a \texttt{seq2seq} problem \cite{Sutskever.Vinyals.ea:2014Sequence}.
The current state-of-the-art \texttt{seq2seq} model is the Transformer first introduced in \cite{Vaswani;Shazeer;Parmar:2017Attention}.
As the heart and soul of the Transformer, the scaled dot-product attention mechanism is capable of unearthing the hidden structure of an operator by capturing long-range interactions. Inspired by many insightful pioneering work in Transformers \cite{Katharopoulos.Vyas.ea:2020Transformers, Choromanski.Likhosherstov.ea:2021Rethinking, Schlag.Irie.ea:2021Linear,Tsai.Bai.ea:2019Transformer,Xiong.Yang.ea:2020Layer,Xiong.Zeng.ea:2021Nystromformer,Wright.Gonzalez:2021Transformers,Lu.Grover.ea:2021Pretrained,Shen.Zhang.ea:2021Efficient,Nguyen.Salazar:2019Transformers}, we have modified the attention mechanism minimally yet in a mathematically profound manner to better serve the purpose of operator learning. 

Among our new Hilbert space-inspired adaptations of the scaled dot-product attention, the first and foremost change is: no softmax, or the approximation thereof. In the vanilla attention \cite{Vaswani;Shazeer;Parmar:2017Attention}, the softmax succeeding the matrix multiplication convexifies the weights for combining different positions' latent representations, which is regarded as an indispensable ingredient in the positive kernel interpretation of the attention mechanism \cite{Tsai.Bai.ea:2019Transformer}. 
However, softmax acts globally in the sequence length dimension for each row of the attention matrix, and further adds to the quadratic complexity of the attention in the classic Transformer. Theory-wise, instead of viewing ``$\text{row}\approx\text{word}$'' in the Natural Language Processing (NLP) tradition, the columns of the query/keys/values are seen as sampling of functions in Hilbert spaces on discretized grids. Thus, taking the softmax away allows us to verify a discrete Ladyzhenskaya–Babu\v{s}ka–Brezzi (LBB) condition, which further amounts to the proof that the newly proposed Galerkin-type attention can explicitly represent a Petrov-Galerkin projection, and this approximation capacity is independent of the sequence length (Theorem \ref{theorem:cea-lemma}).

Numerically, the softmax-free models save valuable computational resources, outperforming the ones with the softmax in terms of training FLOP and memory consumption (Section \ref{sec:experiments}). Yet in an ablation study, the training becomes unstable for softmax-free models (Table \ref{table:burgers-init}). To remedy this, a new Galerkin projection-type layer normalization scheme is proposed to act as a cheap diagonal alternative to the normalizations explicitly derived in the proof of the Petrov-Galerkin interpretation (equation \eqref{eq:attention-update}). Since a learnable scaling can now be propagated through the encoder layers, the attention-based operator learner with this new layer normalization scheme exhibits better comprehension of certain physical properties associated with the PDEs such as the energy decay. Combining with other approximation theory-inspired tricks including a diagonally dominant rescaled initialization for the projection matrices and a layer-wise enrichment of the positional encodings, the evaluation accuracies in various operator learning tasks are boosted by a significant amount.

\paragraph{Main contributions.}
\label{paragraph:contributions}
The main contributions of this work are summarized as follows. 

\begin{itemize}[topsep=0pt, leftmargin=1.5em]
    \item \textbf{Attention without softmax.} We propose a new simple  self-attention operator and its linear variant without the softmax normalization. Two new interpretations are offered, together with the approximation capacity of the linear variant proved comparable to a Petrov-Galerkin projection.

    \item \textbf{Operator learner for PDEs.} We combine the newly proposed attention operators with the current best state-of-the-art operator learner Fourier Neural Operator (FNO) \cite{Li.Kovachki.ea:2021Fourier} to significantly improve its evaluation accuracy in PDE solution operator learning benchmark problems.
    Moreover, the new model is capable of recovering coefficients based on noisy measurements that traditional methods or FNO cannot accomplish. 

    \item \textbf{Experimental results.} We present three benchmark problems to show that operator learners using the newly proposed attentions are superior in computational/memory efficiency, as well as in accuracy versus those with the conventional softmax normalization. The PyTorch codes to reproduce our results
    are available as an open-source software. 
    \footnote{\url{https://github.com/scaomath/galerkin-transformer}}
\end{itemize}

\section{Related Works}
\label{sec:related-works}

\paragraph{Operator learners related to PDEs.} 
In \cite{Alet.Jeewajee.ea:2019Graph,Li.Kovachki.ea:2020Neural}, certain kernel forms of the solution operator of parametric PDEs are approximated using graph neural networks. The other concurrent notable approach is DeepONet \cite{LuJinKarniadakis2019Deeponet,Lu.Jin.ea:2021Deeponet}. 
\cite{Li.Kovachki.ea:2020Multipole} further improves the kernel approach by exploiting the multilevel grid structure. \cite{Li.Kovachki.ea:2021Fourier} proposes a discretization-invariant operator learner to achieve a state-of-the-art performance in certain benchmark problems. \cite{Wang.Teng.ea:2021Understanding,Wang.Wang.ea:2021Learning} proposed a DeepONet roughly equivalent to an additive attention, similar to the one in the Neural Turing Machine (NMT) in \cite{Bahdanau.Cho.ea:2016Neural}.
Model/dimension reduction combined with neural nets is another popular approach to learn the solution operator for parametric PDEs
\cite{Bhattacharya.Hosseini.ea:2020Model,Nelsen.Stuart:2020Random,LiZhangZhao2020data,DalSantoDeparisPegolotti2020Data}. 
Deep convolutional neural networks (DCNN) are widely applied to learn the solution maps with a fixed discretization size \cite{Adler.Oektem:2017Solving,Bhatnagar.Afshar.ea:2019Prediction,He.Xu:2019MgNet,Guo.Li.ea:2016Convolutional,GuoJiang2021Construct,Zhu.Zabaras:2018Bayesian,Ummenhofer.Prantl.ea:2020Lagrangian}. Recently, DCNN has been successfully applied in various inverse problems \cite{GuoJiang2021Construct,JiangLiGuo2021learn} such as Electrical Impedance Tomography (EIT).
To our best knowledge, there is no work on data-driven approaches to an inverse interface coefficient identification for a class of coefficients with random interface geometries.

\paragraph{Attention mechanism and variants.}
Aside from the ground-breaking scaled dot-product attention in \cite{Vaswani;Shazeer;Parmar:2017Attention}, earlier \cite{Bahdanau.Cho.ea:2016Neural} proposed an additive content-based attention, 
however, with a vanishing gradient problem due to multiple nonlinearity composition. \cite{Brebisson.Vincent:2016Cheap} shows the first effort in removing the softmax normalization in \cite{Bahdanau.Cho.ea:2016Neural} after the projection, however, it still uses a Sigmoid nonlinearity before the additive interpolation propagation stage, and performs worse than its softmax counterpart. The current prevailing approach to linearize the attention leverages the assumption of the existence of a feature map to approximate the softmax kernel \cite{Katharopoulos.Vyas.ea:2020Transformers, Choromanski.Likhosherstov.ea:2021Rethinking,PengPappasYogatamaSchwartzEtAl2021Random}. 
Another type of linearization exploits the low-rank nature of the matrix product using various methods such as sampling or projection \cite{RawatChenEtAl2019rfa,BlancRendle2018adaptive,SongJungKimMoon2021implicit,Wang.Li.ea:2020Linformer}, 
or fast multipole decomposition \cite{NguyenSuliafuOsherChenEtAl2021FMMformer}. The conjecture in \cite{Schlag.Irie.ea:2021Linear} inspires us to remove the softmax overall. \cite{Shen.Zhang.ea:2021Efficient} first proposed the inverse sequence length scaling normalization for a linear complexity attention without the softmax, however, the scaling normalization has not been extensively studied in examples and performs worse.

\paragraph{Various studies on Transformers.}
The kernel interpretation in \cite{Tsai.Bai.ea:2019Transformer} inspires us to reformulate the attention using the Galerkin projection. \cite[Theorem 2]{Wright.Gonzalez:2021Transformers} gives a theoretical foundation of removing the softmax normalization to formulate the Fourier-type attention. 
The Nystr{\"o}m approximation \cite{Xiong.Zeng.ea:2021Nystromformer} 
essentially acknowledges the similarity between the attention matrix and an integral kernel. \cite{Xiong.Yang.ea:2020Layer,Nguyen.Salazar:2019Transformers,Lu.Grover.ea:2021Pretrained} inspires us to try different layer normalization and the rescaled diagonally dominant initialization schemes.
 The practices of enriching the latent representations with the positional encoding recurrently in our work trace back to \cite{AlRfouChoeConstantGuoEtAl2019Character,DehghaniGouwsVinyalsUszkoreitEtAl2019Universal}, and more recently, contribute to the success of AlphaFold 2 \cite{JumperEvansPritzelGreenEtAl2021Highly}, as it is rewarding to exploit the universal approximation if the target has a dependence ansatz in the coordinate frame and/or transformation group but hard to be explicitly quantified. Other studies on adapting the attention mechanisms to conserve important physical properties are in \cite{Tai.Bailis.ea:2019Equivariant,Fuchs.Worrall.ea:2020SE3,Hutchinson.Lan.ea:2021LieTransformer}.

\section{Operator learning related to PDEs}
\label{sec:operator-learning}
Closely following the setup in \cite{Li.Kovachki.ea:2020Multipole,Li.Kovachki.ea:2021Fourier}, we consider a data-driven model to approximate a densely-defined operator $T: \mathcal{H}_1 \to \mathcal{H}_2$ between two Hilbert spaces with an underlying bounded spacial domain $\Omega\subset \R^m$. The operator $T$ to be learned is usually related to certain physical problems, of which the formulation is to seek the solution to a PDE of the following two types.

Parametric PDE: given coefficient $a\in \mathcal{A}$, and source $f\in \mathcal{Y}$, find $u\in \mathcal{X}$ such that $L_a (u) = f$.
  \begin{itemize}[topsep=0pt, leftmargin=2.5em]
  \item[(i)] To approximate the nonlinear mapping from the varying parameter $a$ to the solution with a fixed right-hand side, $T: \mathcal{A} \to \mathcal{X}, \; a\mapsto u$.
  \item[(ii)] The inverse coefficient identification problem to recover the coefficient from a noisy measurement $\tilde{u}$ of the steady-state solution $u$, in this case,  $T: \mathcal{X} \to \mathcal{A}, \; \tilde{u}\mapsto a$.
\end{itemize}

Nonlinear initial value problem: given $u_0\in \mathcal{H}_0$, find $u\in C([0, T]; \mathcal{H})$ such that $\partial_t u + N(u)=0$.
\begin{itemize}[topsep=0pt, leftmargin=2.5em]
    \item[(iii)] Direct inference from the initial condition to the solution.  $T: \mathcal{H}_0\to \mathcal{H}, \;u_0(\cdot) \mapsto u(t_1, \cdot)$ with $t_1\gg \Delta t$ with $t_1$ much greater than the step-size in traditional explicit integrator schemes. 
\end{itemize}

Using (i) as an example, based on the given $N$ observations $\{a^{(j)}, u^{(j)}\}^N_{j=1}$ and their approximations $\{a^{(j)}_h, u^{(j)}_h\}$ defined at a discrete grid of size $h\ll 1$, 
the goal of our operator learning problem is 
to build an approximation $T_{\theta}$ to $T$, such that $T_{\theta}(a_h)$ is a good approximation to 
$u =L_a^{-1} f=:T(a)\approx u_h$ independent of the mesh size $h$, where $a_h$ and $u_h$ are in finite dimensional spaces $\bb{A}_h, \bb{X}_h$ on this grid. 
We further assume that
$a^{(j)}\sim \nu$ for a measure $\nu$ compactly supported on $\mathcal{A}$, 
and the sampled data form a reasonably sized subset of $\mathcal{A}$ representative of field applications. 
The loss $\mathcal{J}(\theta)$ is
\begin{equation}
\label{eq:loss-continuous}
\mathcal{J}(\theta) := \bb{E}_{a\sim \nu} 
\left[\|\big(T_{\theta}(a) - u\|_{\mathcal{H}}^2 + \mathfrak{G}(a, u;\theta) \right]
\end{equation}
and in practice is approximated using the sampled observations on a discrete grid
\vspace*{-5pt}
\begin{equation}
\label{eq:loss-discrete}
\mathcal{J}(\theta)  \approx 
\frac{1}{N} \sum_{j=1}^N
\Big\{ \big\|\big(T_{\theta}\big(a_h^{(j)}\big) - u_h^{(j)}\big\|_{\mathcal{H}}^2 
+ \mathfrak{G}\big(a_h^{(j)}, u_h^{(j)};\theta\big) \Big\}.
\vspace*{-5pt}
\end{equation}
In example (i), $\norm{\cdot}_{\mathcal{H}}$ is the standard $L^2$-norm, 
and $\mathfrak{G}(a, u;\theta)$ serves as a regularizer with strength $\gamma$ and is problem-dependent. In Darcy flow where $L_a := -\nabla\cdot(a\nabla (\cdot))$, it is $\gamma\|a\nabla (T_\theta(a) - u)\|_{L^2(\Omega)}^2$, since $u\in H^{1+\alpha}(\Omega)$ ($\alpha>0$ depends on the regularity of $a$) and 
$a\nabla u\in \bm{H}(\mathrm{div};\Omega)$ a priori. For the evaluation metric, we drop the $\mathfrak{G}(a, u;\theta)$ term, and monitor the minimization of \eqref{eq:loss-discrete} using $\|\cdot\|_{\mathcal{H}}$. 

\section{Attention-based operator learner}
\label{sec:network}

\paragraph{Feature extractor.} We assume the functions in both inputs and targets are sampled on a uniform grid. In an operator learning problem on $\Omega\subset \R^1$, 
a simple feedforward neural network (FFN) is used as the feature extractor that is shared by every position (grid point).

\paragraph{Interpolation-based CNN.} If $\Omega\subset \R^2$, inspired by  the multilevel graph kernel network in \cite{Li.Kovachki.ea:2020Multipole}, we use two 3-level interpolation-based CNNs (CiNN) as the feature extractor, but also as the downsampling and upsampling layer, respectively, in which we refer to restrictions/prolongations between the coarse/fine grids both as interpolations. 
For the full details of the network structure please refer to Appendix \ref{sec:appendix-network}.

\paragraph{Recurrent enrichment of positional encoding.} The Cartesian coordinates of the grid, on which the attention operator's input latent representation reside, are concatenated as additional feature dimension(s) to the input, as well as to each latent representation in every attention head.

\paragraph{Problem-dependent decoder.} The decoder is a problem-dependent admissible network that maps the learned representations from the encoder back to the target dimension. For smooth and regular solutions in $H^{1+\alpha}(\Omega)$, we opt for a 2-layer spectral convolution that is the core component in \cite{Li.Kovachki.ea:2021Fourier}. A simple pointwise feedforward neural network (FFN) is used for nonsmooth targets in $L^{\infty}(\Omega)$.

\subsection{Simple self-attention encoder}

\begin{figure}[htp]
\begin{center}
\begin{subfigure}[b]{0.9\linewidth}
  \centering
\includegraphics[width=0.97\linewidth]{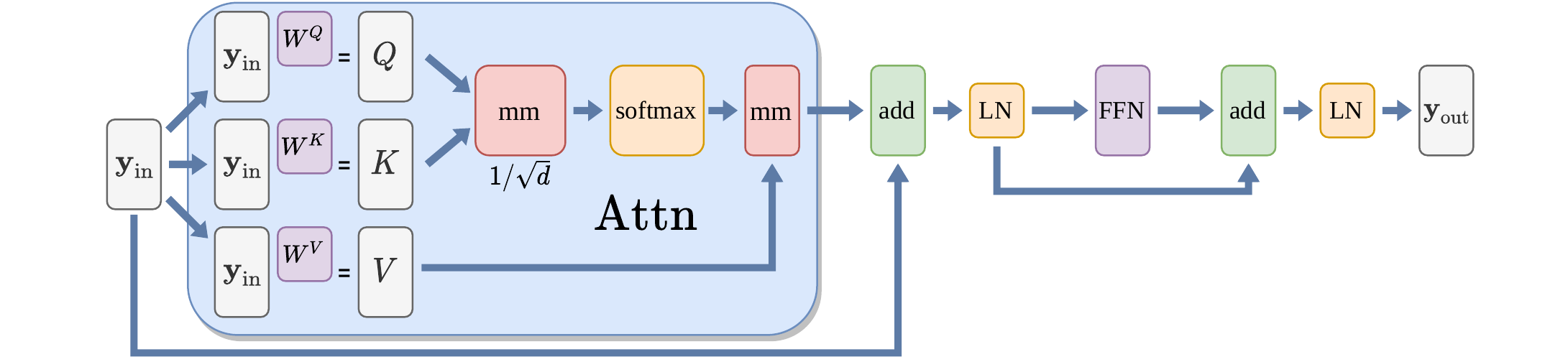}
  \caption{\label{fig:attention-softmax}}
\end{subfigure}%
\\
\begin{subfigure}[b]{0.8\linewidth}
      \centering
      \includegraphics[width=0.97\linewidth]{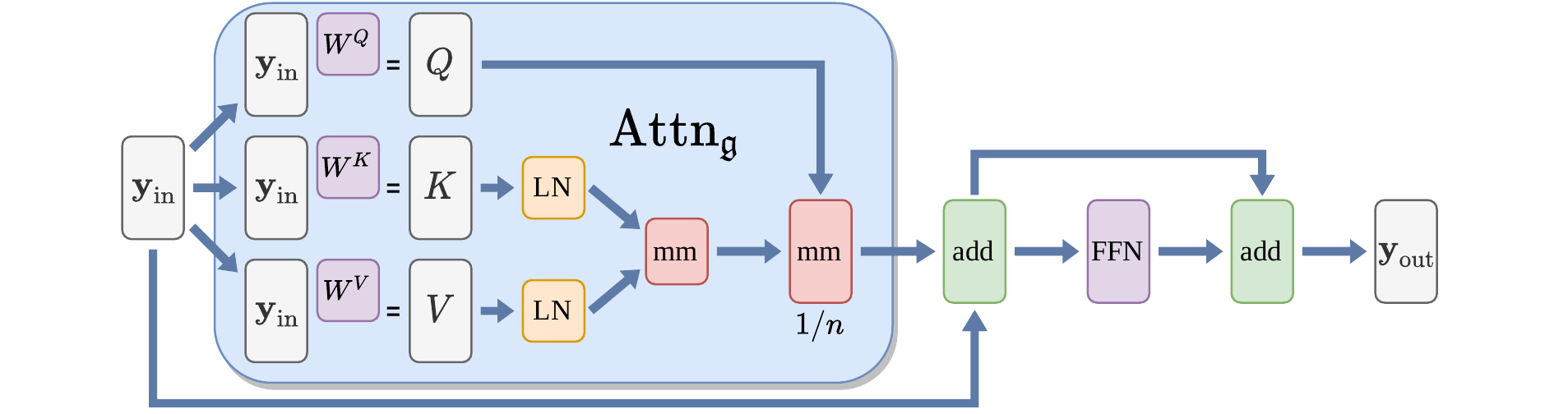}
      \caption{\label{fig:attention-simple}}
\end{subfigure}
\end{center}
\caption{Comparison of the vanilla attention \cite{Vaswani;Shazeer;Parmar:2017Attention} with the Galerkin-type simple self-attention in a single head; (\subref{fig:attention-softmax}) in the standard softmax attention, the softmax is applied row-wise after the matrix product \texttt{matmul}; (\subref{fig:attention-simple}) a mesh-weighted normalization allows an integration-based interpretation.} 
\label{fig:attention}
\end{figure}

The encoder contains a stack of identical simple attention-based encoder layers. For simplicity, we consider a single attention head that 
maps $\mathbf{y}\in \R^{n\times d}$ to another element in $\R^{n\times d}$, and define the trainable projection matrices, and the latent representations $Q/K/V$ as follows.
\begin{equation}
\label{eq:attention-weights}
W^Q, W^K, W^V \in \R^{d\times d}, \quad \text{ and } \quad Q := \mathbf{y} W^Q, 
\quad K := \mathbf{y} W^K,\quad V := \mathbf{y}W^V.
\end{equation}
We propose the following simple attention that (i) uses a mesh (inverse sequence length)-weighted normalization without softmax, (ii) allows a scaling to propagate through the encoder layers. 
\begin{equation}
\label{eq:attention-simple}
\op{Attn}_{\text{sp}}: \R^{n\times d}\to \R^{n\times d},\quad 
\widetilde{\mathbf{y}} \gets \mathbf{y} + \op{Attn}_{\dagger} (\mathbf{y} ), \quad 
\mathbf{y} \mapsto  \widetilde{\mathbf{y}} + g(\widetilde{\mathbf{y}}),
\end{equation}
where the head-wise normalizations are applied pre-dot-product: 
for $\dagger \in \{\mathfrak{f},\mathfrak{g}\}$, 
\begin{align}
\label{eq:attention-fourier}
\text{(Fourier-type attention)} \qquad & \mathbf{z} = \text{Attn}_{\mathfrak{f}} (\mathbf{y}) 
:=  (\widetilde{Q}\widetilde{K}^{\top}) V/n,
\\
\label{eq:attention-galerkin}
\text{(Galerkin-type attention)}\qquad  & \mathbf{z} = \text{Attn}_{\mathfrak{g}} (\mathbf{y}) 
:=  Q(\widetilde{K}^{\top} \widetilde{V})/n,
\end{align}  
and $\widetilde{\diamond}$ denotes a trainable non-batch-based normalization. As in the classic Transformer \cite{Vaswani;Shazeer;Parmar:2017Attention}, and inspired by the Galerkin projection interpretation, we choose $\widetilde{\diamond}$ as the layer normalization $\op{Ln}(\diamond)$, and
$g(\cdot)$ as the standard 2-layer FFN identically applied on every position (grid point). 
In simple attentions, 
the weight for each row of $V$, or column of $Q$ in the linear variant, is not all positive anymore. This can be viewed as a cheap alternative to the cosine similarity-based attention. 

\begin{remark}
If we apply the regular layer normalization rule that eliminates any scaling:
\begin{equation}
\label{eq:attention-ln}
\mathbf{y} \mapsto \op{Ln}\!
\left(\mathbf{y} + \op{Attn}_{\dagger} (\mathbf{y} )+g\big(\op{Ln}(\mathbf{y} 
+ \op{Attn}_{\dagger} (\mathbf{y} ) ) \big)\right),
\;\;\text{where }\op{Attn}_{\dagger}(\mathbf{y}) := Q(K^{\top} V)/n,
\end{equation}
then this reduces to the efficient attention first proposed in \cite{Shen.Zhang.ea:2021Efficient}. 
\end{remark}

\subsubsection{Structure-preserving feature map as a function of positional encodings}
Consider an operator learning problem with an underlying domain $\Omega\subset \R^1$. $\{x_i\}_{i=1}^{n}$ denotes the set of grid points in the discretized $\Omega$ such that the weight $1/n=h$ is the mesh size. Let $\zeta_q(\cdot), \phi_k(\cdot), \psi_v(\cdot) : \Omega \to \R^{1\times d}$ denote the feature maps of $Q, K, V$, i.e., the $i$-th row of $Q, K, V$ written as 
$\bm{q}_i=\zeta_q(x_i)$, $\bm{k}_i=\phi_k(x_i)$, $\bm{v}_i=\psi_v(x_i)$. They are, in the NLP convention, viewed as the feature (embedding) vector at the $i$-th position, respectively. The inter-position topological structure such as continuity/differentiability in the same feature dimension is learned thus not explicit. 
The following ansatz for $Q/K/V$ in the same attention head is fundamental to our new interpretations.
\begin{assumption}
\label{assumption:attn}
The columns of $Q/K/V$, respectively, contain the vector representations of the learned basis functions spanning certain subspaces of the latent representation Hilbert spaces.
\end{assumption}

Using $V \in \R^{n\times d}$ with a full column rank as an example, its columns contain potentially a set of bases $\{v_j(\cdot)\}_{j=1}^{d}$ evaluated at the grid points (degrees of freedom, or DoFs). 
Similarly, the learned bases whose DoFs form the columns of $Q, K$ are denoted 
as $\{q_j(\cdot)\}_{j=1}^{d}$, $\{k_j(\cdot)\}_{j=1}^d$, as well as $\{z_j(\cdot)\}_{j=1}^d$ for the outputs in \eqref{eq:attention-fourier} and \eqref{eq:attention-galerkin}. 
To be specific, the $j$-th column of $V$, denoted by $\bm{v}^j$, then stands for a vector representation of the $j$-th basis function evaluated at each grid point, i.e., its $l$-th position stands for $(\bm{v}^j)_l =  v_j(x_l)$.
Consequently, the row $\bm{v}_i= (v_1(x_i), \dots, v_d(x_i))$ can be alternatively viewed as the evaluation of a vector latent basis function at $x_i$.

\subsubsection{Fourier-type attention of a quadratic complexity}
\begin{figure}[htbp]
\centering
\includegraphics[width=0.8\textwidth]{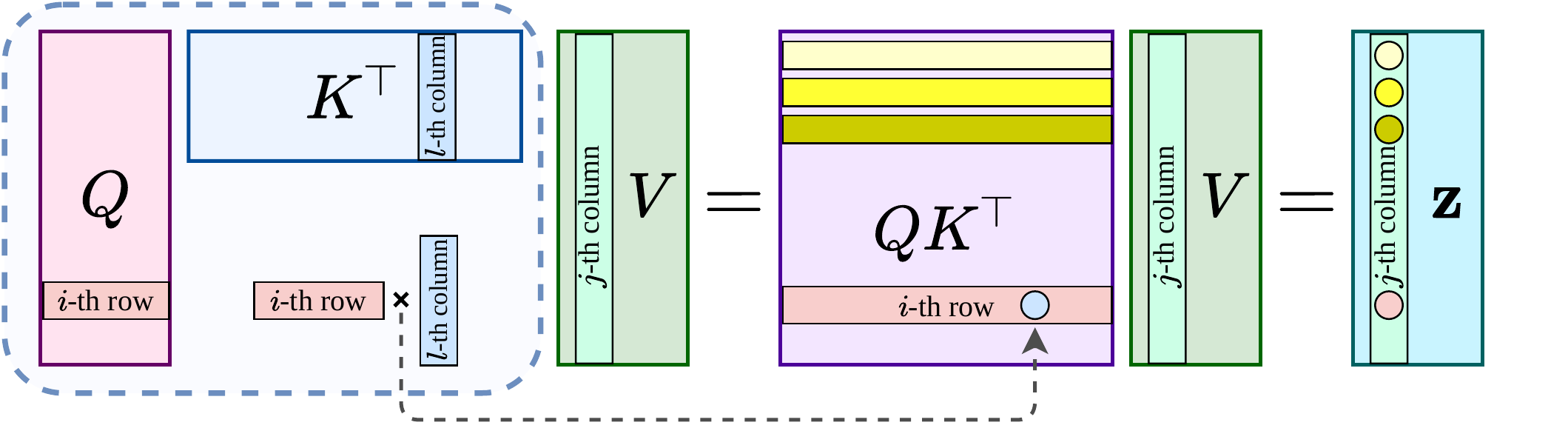}
\caption{A dissection of Fourier-type attention's output. Both \texttt{matmul}s have complexity $O(n^2d)$.}
\label{fig:attention-f}
\end{figure}

In the Fourier-type attention \eqref{eq:attention-fourier}, $Q,K$ are assumed to be normalized for simplicity, the $j$-th column ($1\leq j \leq d $) in the $i$-th row ($1\leq i \leq n$) of $\mathbf{z}$ is computed by (see Figure \ref{fig:attention-f}): 
\begin{equation}
\begin{aligned}
(\bm{z}_i)_{j} &=  h(QK^{\top})_{i\hspace*{0.07em}\bdot} \; \bm{v}^j
= h\big(\bm{q}_i\cdot \bm{k}_1, \ldots, \bm{q}_i \cdot \bm{k}_l, \ldots, \bm{q}_i \cdot \bm{k}_n \big)^{\top} 
\cdot \bm{v}^j
\\
& = h\sum_{l=1}^n (\bm{q}_i\cdot\bm{k}_l) (\bm{v}^j)_l
\approx \int_{\Omega} \big(\zeta_q(x_i) \cdot \phi_k(\xi)\big) v_j(\xi) \dd \xi,
\end{aligned}
\end{equation}
where the $h$-weight facilitates the numerical quadrature interpretation of the inner product.
Concatenating columns $1\leq j\leq d$ yields the $i$-row $\bm{z}_i$ of the output $\mathbf{z}$: $\bm{z}_i \approx \int_{\Omega} \big(\zeta_q(x_i) \cdot \phi_k(\xi)\big) \psi_v(\xi) \dd \xi$.
Therefore, without the softmax nonlinearity, the local dot-product attention output at $i$-th row computes approximately an integral transform with a non-symmetric learnable kernel function $\kappa(x,\xi):= \zeta_q(x)\phi_k(\xi)$ evaluated at $x_i$, whose approximation property has been studied in \cite[Theorem 2]{Wright.Gonzalez:2021Transformers}, yet without the logits technicality due to the removal of the softmax normalization. 

After the skip-connection, if we further exploit the learnable nature of the method and assume $W^V = \op{diag}\{\delta_1, \cdots, \delta_d\}$ such that $\delta_j\neq 0$ for $1\leq j\leq d$, 
under Assumption \ref{assumption:attn}:
\vspace*{-3pt}
\begin{equation}
\label{eq:fredholm}
\delta_j^{-1} v_j(x)\approx 
z_j(x) - \int_{\Omega} \kappa(x,\xi) v_j(\xi) \dd \xi, \quad  \text{for}\; j=1,\cdots,d,\;\text{ and } x\in \{x_i\}_{i=1}^n.
\vspace*{-3pt}
\end{equation}
This is the forward propagation of the Fredholm equation of the second-kind for each $v_j(\cdot)$. When using an explicit orthogonal expansion such as Fourier to solve for $\{v_j(\cdot)\}_{j=1}^d$, or to seek for a better set of $\{v_j(\cdot)\}$ in our case, it is long known being equivalent to the Nystr\"{o}m’s method with numerical integrations \cite{Berrut.Trummer:1987Equivalence} (similar to the $h=1/n$ weighted sum). Therefore, the successes of the random Fourier features in \cite{Choromanski.Likhosherstov.ea:2021Rethinking,PengPappasYogatamaSchwartzEtAl2021Random} and the Nystr\"{o}mformer's approximation \cite{Xiong.Zeng.ea:2021Nystromformer} are not surprising.

Finally, we name this type of simple attention ``Fourier'' is due to the striking resemblance between the scaled dot-product attention 
and a Fourier-type kernel \cite{Fox:1961functions} integral transform, since eventually the target resides in a Hilbert space with an underlying spacial domain $\Omega$, while the latent representation space parallels a ``frequency'' domain on $\Omega^*$. 
This also bridges the structural similarity of the scaled dot-product attention with the Fourier Neural Operator \cite{Li.Kovachki.ea:2021Fourier} where the Fast Fourier Transform (FFT) can be viewed as a non-learnable change of basis.

\subsubsection{Galerkin-type attention of a linear complexity}
\label{sec:galerkin}
\begin{figure}[htbp]
\centering
\includegraphics[width=0.7\textwidth]{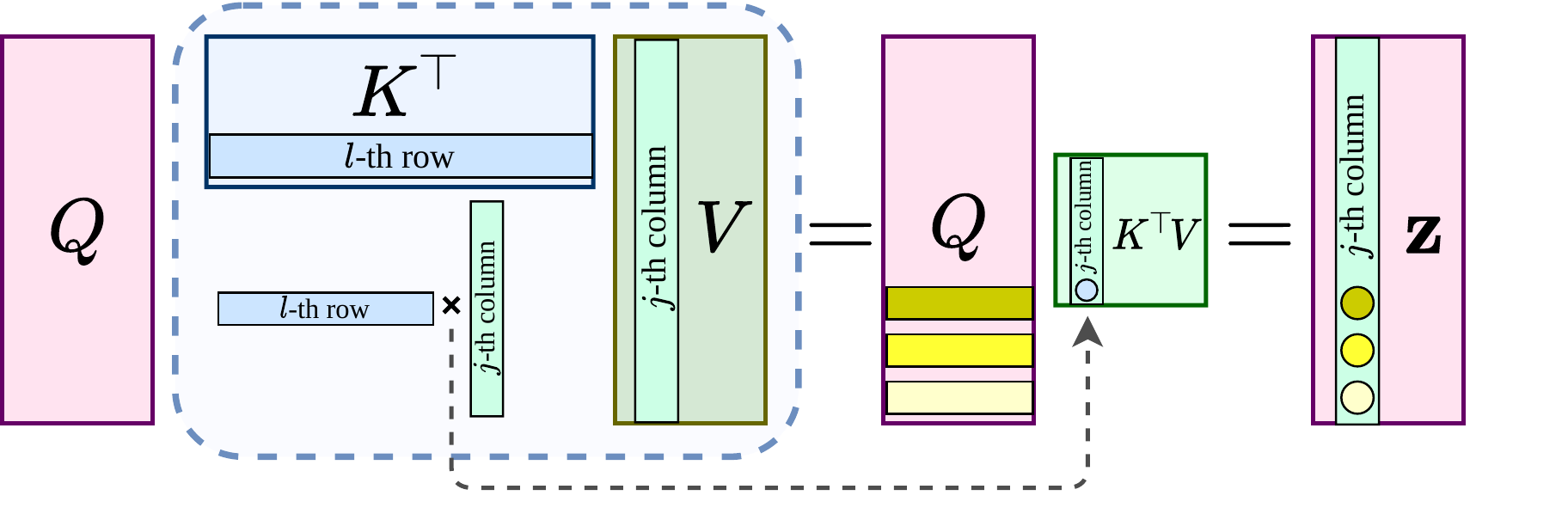}
\caption{A dissection of Galerkin-type attention's output. Both \texttt{matmul}s have complexity $O(nd^2)$.}
\label{fig:attention-g}
\end{figure}
For the Galerkin-type simple attention in \eqref{eq:attention-galerkin}, $K,V$ are assumed to be normalized for simplicity, we first consider the $i$-th entry in the $j$-th column $\bm{z}^j$ of $\mathbf{z}$  (see Figure \ref{fig:attention-g}): 
\begin{equation}
(\bm{z}^j)_i =h\, \bm{q}_i^{\top}\cdot (K^{\top}V)_{\bdot\hspace*{0.03em} j},
\end{equation}
which is the inner product of the $i$-th row of $Q$ and the $j$-th column of $K^{\top}V$. Thus,
\begin{equation}
  \bm{z}^j = h
\left(
\begin{array}{cccc}| & | & | & | \\
\bm{q}_1 & \bm{q}_2 & \cdots & \bm{q}_n \\
| & | & | & |
\end{array}\right)^{\top} 
\; (K^{\top}V)_{\bdot\hspace*{0.03em} j}
= h
\left(
(K^{\top}V)_{\bdot\hspace*{0.03em} j}^{\top} \begin{pmatrix}
\rule[.4ex]{2em}{0.2pt} & \bm{q}^1  &\rule[.4ex]{2em}{0.2pt}
\\[-4pt]
\rule[.8ex]{2em}{0.3pt} & \vdots & \rule[.8ex]{2em}{0.3pt}
\\[-1pt]
\rule[.5ex]{2em}{0.2pt}&  \bm{q}^d &  \rule[.5ex]{2em}{0.2pt}
\end{pmatrix}
\right)^{\top}
\end{equation}
This reads as: $(K^{\top}V)_{\bdot\hspace*{0.03em} j}$ contains the coefficients for the linear combination of the vector representations $\{\bm{q}^l\}_{l=1}^d$ of the bases stored in $Q$'s column space to form the output $\mathbf{z}$. Meanwhile, the $j$-th column $(K^{\top}V)_{\bdot\hspace*{0.03em} j}$ of $K^{\top}V$ consists the inner product of $j$-th column of $V$ with every column of $K$.
\vspace*{-2pt}
\begin{equation}
  \bm{z}^j = h\sum_{l=1}^d \bm{q}^l (K^{\top}V)_{lj}, 
\quad \text{where }\; (K^{\top}V)_{\bdot\hspace*{0.03em} j}
= \big(\bm{k}^1\cdot \bm{v}^j,  \bm{k}^2\cdot \bm{v}^j , \cdots, 
\bm{k}^d\cdot \bm{v}^j\big)^{\top}.
\vspace*{-2pt}
\end{equation}
As a result, using Assumption \ref{assumption:attn}, and for simplicity the latent Hilbert spaces $\mathcal{Q},\mathcal{K},\mathcal{V}$ are assumed to be defined on the same spacial domain $\Omega$, i.e., $k_l(\cdot)$, $v_j(\cdot)$ evaluated at every $x_i$ are simply their vector representations $\bm{k}^l$ ($1\leq l \leq d$) and $\bm{v}^j$, we have the functions represented by the columns of the output $\mathbf{z}$ can be then compactly written as:
rewriting $\langle v_j, k_l \rangle:= (K^{\top}V)_{lj}$
\vspace*{-3pt}
\begin{equation}
\label{eq:attention-galerkin-int}
  z_j(x) := \sum_{l=1}^d \langle v_j, k_l \rangle \, q_l(x), 
\; \;\text{for}\; j=1,\cdots,d,\; \text{ and } x\in \{x_i\}_{i=1}^n,
\vspace*{-3pt}
\end{equation}
where the bilinear form $\langle \cdot, \cdot \rangle: \mathcal{V}\times \mathcal{K}\to \R$. \eqref{eq:attention-galerkin-int} can be also written in a componentwise form:
\vspace*{-3pt}
\begin{equation}
\label{eq:attention-galerkin-projection}
   z_j(x_i) := (\bm{z}^j )_i = h \sum_{l=1}^d  (\bm{k}^l\cdot \bm{v}^j) (\bm{q}^l)_i
\approx  \sum_{l=1}^d \left( \int_{\Omega}  v_j(\xi) k_l(\xi) \dd \xi \right) q_l(x_i).
\vspace*{-3pt}
\end{equation}
Therefore, when $\{\diamond_j(\cdot)\}_{j=1}^d$, $\diamond\in \{q, k, v\}$ consist approximations to three sets of bases for potentially different subspaces, 
and if we set the trial spaces as the column spaces of $Q$ and the test space as that of $K$, respectively, the forward propagation of the Galerkin-type attention is a recast of a learnable Petrov--Galerkin-type
projection (cf. Appendix \ref{sec:appendix-cea-background}) for every basis represented by the columns of $V$. While the form of \eqref{eq:attention-galerkin-projection} 
suggests the orthonormality of the basis represented by $Q,K,V$, as well as
being of full column ranks, the learnable nature of the method suggests otherwise (see Appendix \ref{sec:appendix-cea}). 
At last, we have the following strikingly simple yet powerful approximation result.

\begin{theorem}[C\'{e}a-type lemma, simplified version]
\label{theorem:cea-lemma}
Consider a Hilbert space $\mathcal{H}$ defined on a bounded domain $\Omega\subset \R^m$ discretized by $n$ grid points, and $f\in \mathcal{H}$.  $\mathbf{y}\in \R^{n\times d}$ is the current latent representation for $n>d>m$ and full column rank. 
$\bb{Q}_h\subset \mathcal{Q}\subset \mathcal{H}$ 
and $\bb{V}_h\subset\mathcal{V}\subset \mathcal{H}$ are the latent approximation subspaces spanned by basis functions with the columns of 
$Q$ and $V$ in \eqref{eq:attention-weights} as degrees of freedom, respectively, and $0<\dim \bb{Q}_h=r\leq \dim\bb{V}_h=d$. Let $\mathfrak{b}(\cdot, \cdot): \mathcal{V}\times \mathcal{Q}\to \R$ be a continuous bilinear form, and  
if for any fixed $q\in \bb{Q}_h$ the functional norm of $\mathfrak{b}(\cdot, q)$ is bounded below by $c>0$, then there exists  a learnable map $g_{\theta}(\cdot)$ that is the composition of the Galerkin-type attention operator  with an updated set of projection matrices $\{W^Q, W^K, W^V\}$, and a pointwise universal approximator, such that for $f_h\in \bb{Q}_h$ being the best approximation of $f$ in $\|\cdot\|_{\mathcal{H}}$ it holds:
\begin{equation}
\label{eq:approximation-cea} 
\|f- g_{\theta}(\mathbf{y})\|_{\mathcal{H}}
\leq c^{-1} \min_{q\in \bb{Q}_h} \max_{v\in \bb{V}_h}
\frac{|\mathfrak{b}(v,f_h - q)|}{\|v\|_{\mathcal{H}}}
+\|f-f_h\|_{\mathcal{H}}.
\end{equation}
\end{theorem}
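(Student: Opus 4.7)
My plan is to realize the output of $g_\theta$ as a discrete Petrov--Galerkin (PG) projection of $f_h$ into $\bb{Q}_h$ with test space $\bb{V}_h$, and then combine the classical inf-sup quasi-optimality (C\'ea) estimate with a triangle inequality against $f_h$. I would first prescribe the projection matrices $W^Q, W^K, W^V$ so that, under Assumption~\ref{assumption:attn} and using the full column rank of $\mathbf{y}$, the columns of $Q$ and $K$ carry grid-point evaluations of bases of $\bb{Q}_h$ and $\bb{V}_h$ (of dimensions $r$ and $d$ respectively), while those of $V$ encode the DoFs of $f_h$ distributed across channels. Equation~\eqref{eq:attention-galerkin-projection} then reads the attention output as $z_j(x_i)=\sum_{l=1}^{r}(h\,\bm{k}^l\cdot\bm{v}^j)\,q_l(x_i)$; after absorbing any Riesz identification needed to reproduce the continuous form $\mathfrak{b}$ into $W^K$, the discrete pairings approximate $\mathfrak{b}(k_l,v_j)$ by quadrature. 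The appended pointwise universal approximator then selects, from the $d$ output channels, the linear combination $u_h^\star\in\bb{Q}_h$ satisfying the discrete PG system $\mathfrak{b}(v,u_h^\star)=\mathfrak{b}(v,f_h)$ for every $v\in\bb{V}_h$; well-posedness follows from the inf-sup hypothesis with constant $c$, and I set $g_\theta(\mathbf{y}):=u_h^\star$.

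Next, I would derive the quasi-optimality of $u_h^\star$. For any $q\in\bb{Q}_h$ the difference $u_h^\star-q\in\bb{Q}_h$, and the discrete PG equation yields $\mathfrak{b}(v,u_h^\star-q)=\mathfrak{b}(v,f_h-q)$ for all $v\in\bb{V}_h$. Reading the hypothesis as $c\,\|w\|_\mathcal{H}\leq\sup_{v\in\bb{V}_h}|\mathfrak{b}(v,w)|/\|v\|_\mathcal{H}$ for $w\in\bb{Q}_h$ then gives
\[
\|u_h^\star-q\|_\mathcal{H}\leq c^{-1}\max_{v\in\bb{V}_h}\frac{|\mathfrak{b}(v,f_h-q)|}{\|v\|_\mathcal{H}}.
\]
Combining the triangle inequalities $\|f-g_\theta(\mathbf{y})\|_\mathcal{H}\leq\|f-f_h\|_\mathcal{H}+\|f_h-u_h^\star\|_\mathcal{H}$ and $\|f_h-u_h^\star\|_\mathcal{H}\leq\|f_h-q\|_\mathcal{H}+\|q-u_h^\star\|_\mathcal{H}$, and then minimising over $q\in\bb{Q}_h$ (the $\|f_h-q\|_\mathcal{H}$ term vanishes for the trivial choice $q=f_h$, and is otherwise absorbed into the same dual-norm quantity), produces the claim~\eqref{eq:approximation-cea}.

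The hardest point will be justifying that the discrete inner products $h\,\bm{k}^l\cdot\bm{v}^j$ can be tuned to reproduce an arbitrary continuous bilinear form $\mathfrak{b}(k_l,v_j)$. This requires absorbing a Riesz identification, and possibly a mass matrix, into the learnable $W^K$; Assumption~\ref{assumption:attn}'s interpretation of the $Q, K, V$ columns as DoF vectors of latent basis functions is precisely what provides the freedom to do so. A rigorous version must also control the residual quadrature error $|h\,\bm{k}^l\cdot\bm{v}^j-\mathfrak{b}(k_l,v_j)|$ uniformly in $l,j$, which is standard once the latent bases are resolved on the grid (implicit in $n>d>m$), but should be stated explicitly (presumably in the deferred Appendix~\ref{sec:appendix-cea}).
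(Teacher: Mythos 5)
Your high-level plan (realize $g_\theta$ as a Petrov--Galerkin-type projection onto $\bb{Q}_h$, invoke the inf-sup lower bound for quasi-optimality, finish with a triangle inequality against $f_h$) matches the paper's. However, there are two genuine gaps. First, the claim that the square system $\mathfrak{b}(v,u_h^\star)=\mathfrak{b}(v,f_h)$ for \emph{every} $v\in\bb{V}_h$ is well-posed ``from the inf-sup hypothesis'' is wrong when $r=\dim\bb{Q}_h<\dim\bb{V}_h=d$: that system is overdetermined, and the lower bound on $\|\mathfrak{b}(\cdot,q)\|_{\bb{V}_h'}$ gives uniqueness but not existence. (It is rescued here only by the degeneracy $f_h\in\bb{Q}_h$, which forces $u_h^\star=f_h$.) The paper avoids this by never solving the exact PG system; it instead solves the least-squares problem $\min_{q\in\bb{Q}_h}\|\mathfrak{b}(\cdot,f_h-q)\|_{\bb{V}_h'}$, reformulated via Riesz representation as a saddle-point system $\bigl(\begin{smallmatrix}M & B^{\top}\\ B & 0\end{smallmatrix}\bigr)$ (Lemma \ref{lemma:mix}), whose solvability follows from $B$ having full row rank (proved by contradiction from the inf-sup bound) together with Lemma \ref{lemma:matrix-invertible}.

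Second, and more centrally, the step your proof actually needs --- that the Galerkin-type forward pass $Q(K^{\top}V)/n$ composed with a pointwise FFN can \emph{output} this projection --- is the entire technical content of the paper's argument, and you leave it as a hand-wave. The paper reads the explicit solution $\bm{\lambda}=(BM^{-1}B^{\top})^{-1}BM^{-1}\bm{\zeta}$ off the saddle-point system and exhibits concrete updated matrices $\widetilde{W}^Q=W^QU$, $\widetilde{W}^K=W^QU\Lambda$, $\widetilde{W}^V=W^VM^{-1}$ so that $\bm{p}=h^m\widetilde{Q}(\widetilde{K}^{\top}\widetilde{V})\bm{\zeta}$; the data $f_h$ enters only through $\bm{\zeta}$, supplied by the pointwise universal approximator, not through $V$ (your assignment of $f_h$ to the columns of $V$ also contradicts the theorem's hypothesis that $\bb{V}_h$ is spanned by $V$'s columns). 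You identify the ``hardest point'' as controlling quadrature error between $h\,\bm{k}^l\cdot\bm{v}^j$ and $\mathfrak{b}(k_l,v_j)$, but that is not where the difficulty lies: the paper defines $\mathfrak{b}$ directly as the discrete sum $h^m\sum_i w(\xi_i)y(x_i)$, so there is no quadrature gap. The real work is the linear-algebraic representability --- in particular absorbing the Gram-matrix inverse $M^{-1}$ and the Schur-complement inverse $(BM^{-1}B^{\top})^{-1}$ into learnable projection matrices --- and your proposal does not supply it.
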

\paragraph{Remarks on and interpretations of the best approximation result.}
\label{paragraph:approximation}
Theorem \ref{theorem:cea-lemma} states that the Galerkin-type attention has the architectural capacity to represent a quasi-optimal approximation in $\|\cdot\|_{\mathcal{H}}$ in the current subspace $\bb{Q}_h$. For the
mathematically rigorous complete set of notations and the full details of the proof we refer the readers to Appendix \ref{sec:appendix-cea-proof}. Even though Theorem \ref{theorem:cea-lemma} is presented for a single instance of $f\in \mathcal{H}$ for simplicity, the proof shows that the attention operator is fully capable of simultaneously approximating a collection of functions (Appendix \hyperref[paragraph:dynamical-basis-update]{D.3.4}).

Estimate \eqref{eq:approximation-cea} comes with great scalability with respect to the sequence length in that it all boils down to whether $c$ is independent of $n$ in the lower bound of $\|\mathfrak{b}(\cdot, q)\|_{\bb{V}_h'}$. The existence of an $n$-independent lower bound is commonly known as the discrete version of the Ladyzhenskaya–Babu\v{s}ka–Brezzi (LBB) condition \cite[Chapter 6.12]{Ciarlet:2013Linear}, also referred as the Banach-Ne\v{c}as-Babu\v{s}ka (BNB) condition in Galerkin methods on Banach spaces \cite[Theorem 2.6]{Ern.Guermond:2004Theory}. 

As the cornerstone of 
the approximation to many PDEs, the discrete LBB condition establishes the surjectivity of a map from $\bb{V}_h$ to $\bb{Q}_h$. In a simplified context \eqref{eq:approximation-cea} above of approximating functions using this linear attention variant ($Q$: values, query, $V$: keys), it roughly translates to: for an incoming ``query'' (function $f$ in a Hilbert space), to deliver its best approximator in ``value'' (trial function space), the ``key'' (test function space) has to be sufficiently rich such that there exists a key to unlock every possible value. 

\paragraph{Dynamic basis update.}
Another perspective is to interpret the Galerkin-type dot-product attention \eqref{eq:attention-galerkin-projection} as a change of basis: essentially, the new set of basis is the column space of $Q$, and
how to linearly combine the bases in $Q$ is based on the inner product (response) of the corresponding feature dimension's basis in $V$ against every basis in $K$.
From this perspective ($Q$: values, $K$: keys, $V$: query), we have the following result of a layer-wise dynamical change of basis: through testing against the ``keys'', a latent representation is sought such that ``query'' (input trial space) and ``values'' (output trial space) can achieve the minimum possible difference under a functional norm; for details and the proof please refer to Appendix \hyperref[paragraph:dynamical-basis-update]{D.3.4}. 

\begin{theorem}[layer-wise dynamic basis update, simple version]
Under the same assumption as Theorem \ref{theorem:cea-lemma}, it is further assumed that $\mathfrak{b}(\cdot, q)$ is bounded below on $\bb{K}_h\subset\mathcal{K}=\mathcal{V}\subset \mathcal{H}$ and $\mathfrak{a}(\cdot, \cdot): \mathcal{V}\times \mathcal{K}\to \R$ is continuous. Then, there exists a set of projection matrices to update the value space $\{\tilde{q}_l(\cdot)\}_{l=1}^d\subset \bb{Q}_h=\operatorname{span}\{q_l(\cdot)\}_{l=1}^d$, for $z_j \in \bb{Q}_h$ ($j=1,\cdots, d$) obtained through the basis update rule \eqref{eq:attention-galerkin-projection}, it holds
\begin{equation}
 \big\| \mathfrak{a}(v_{j}, \cdot) - \mathfrak{b}(\cdot, z_j)\big\|_{\bb{K}_h'}
\leq \min_{q\in \bb{Q}_h} \max_{k\in \bb{K}_h} 
\frac{\left|\mathfrak{a}(v_{j}, k) - \mathfrak{b}(k, q) \right|}{\norm{k}_{\mathcal{K}}}.
\end{equation}
\end{theorem}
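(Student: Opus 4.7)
My first observation is that, by the definition of the dual norm on $\bb{K}_h$, the right-hand side of the claimed inequality is exactly $\min_{q\in \bb{Q}_h}\|\mathfrak{a}(v_j,\cdot)-\mathfrak{b}(\cdot,q)\|_{\bb{K}_h'}$, i.e.\ the best-approximation error of the linear functional $\mathfrak{a}(v_j,\cdot)$ on $\bb{K}_h$ by functionals of the form $\mathfrak{b}(\cdot,q)$ with $q\in \bb{Q}_h$. So the whole task reduces to exhibiting specific trainable weights $W^Q,W^K,W^V$ (and a post-attention pointwise FFN) for which the output $z_j$ produced by the Galerkin update rule \eqref{eq:attention-galerkin-projection} is the unique minimiser of that convex problem.

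First I would set up the Petrov--Galerkin problem behind this target: seek $z_j^{\star}=\sum_{l=1}^d c_l^{\star} q_l\in \bb{Q}_h$ satisfying $\mathfrak{b}(k,z_j^{\star})=\mathfrak{a}(v_j,k)$ for every $k\in\bb{K}_h$. The extra lower-bound hypothesis on $\mathfrak{b}(\cdot,q)|_{\bb{K}_h}$ is exactly the discrete inf--sup/LBB condition that makes the $d\times d$ Galerkin matrix $[\mathfrak{b}(k_m,q_l)]_{m,l}$ invertible, so $\{c_l^{\star}\}$ exists and is unique; standard duality then shows $z_j^{\star}$ is also the unique minimiser of the dual-norm best-approximation problem above, so $\|\mathfrak{a}(v_j,\cdot)-\mathfrak{b}(\cdot,z_j^{\star})\|_{\bb{K}_h'}$ equals the right-hand side of the claim (zero when $\mathfrak{a}(v_j,\cdot)$ lies in the range $\{\mathfrak{b}(\cdot,q):q\in\bb{Q}_h\}$, the genuine best-approximation gap otherwise).

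Next I would identify the attention output $z_j$ with $z_j^{\star}$. Using Assumption \ref{assumption:attn} and the construction already used in the proof of Theorem \ref{theorem:cea-lemma}, I would pick $W^K$ so that the columns of $\widetilde K$ represent a prescribed basis $\{k_l\}_{l=1}^d$ of $\bb{K}_h$, pick $W^V$ so that the $j$-th column of $\widetilde V$ represents $v_j$, and pick $W^Q$ so that the columns of $Q$ represent the trial basis $\{q_l\}_{l=1}^d$. With these choices \eqref{eq:attention-galerkin-projection} becomes $z_j(x)=\sum_l \langle v_j,k_l\rangle_h\, q_l(x)$ for the $h$-weighted Euclidean pairing, and I would then use the post-attention pointwise FFN $g$ to absorb the (fixed, invertible) Gram matrix relating $\langle\cdot,\cdot\rangle_h$ to $\mathfrak{a}(\cdot,\cdot)$ and the inverse Galerkin matrix, forcing the coefficients produced by the attention to coincide with $c_l^{\star}$ and so $z_j=z_j^{\star}$.

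\textbf{Main obstacle.} The delicate step is precisely this last identification: the attention can only produce inner products in the discrete $h$-weighted Euclidean pairing, whereas $\mathfrak{a}$ and $\mathfrak{b}$ are abstract bilinear forms on $\mathcal{V}\times\mathcal{K}$ and $\mathcal{K}\times\mathcal{Q}$. Bridging this gap needs two ingredients borrowed from the proof of Theorem \ref{theorem:cea-lemma}: (i) the full column rank of $\mathbf{y}$, which lets the linear projections $W^K,W^V,W^Q$ realise any desired bases in the column spans of $\widetilde K,\widetilde V, Q$; and (ii) the expressive pointwise FFN $g$, which can implement the constant change-of-pairing factor per feature dimension so that the coefficients match $c_l^{\star}$ exactly. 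A secondary subtlety is that the trainable normalizations $\widetilde{\diamond}$ are per-feature affine rescalings and hence can always be absorbed into the subsequent linear projection without changing the span, but this has to be checked once explicitly to close the argument cleanly.
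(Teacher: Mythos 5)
Your reduction of the right-hand side to the best approximation of the functional $\mathfrak{a}(v_j,\cdot)$ in $\bb{K}_h'$ by functionals $\mathfrak{b}(\cdot,q)$, $q\in\bb{Q}_h$, is correct, and the final step of realizing the coefficients $\mathfrak{a}(v_j,k_l)=h^m(K^{\top}V)_{lj}$ through the dot product and absorbing the remaining fixed matrices into updated weights matches the paper. The gap is in how you characterize the minimizer. You define $z_j^{\star}$ by the \emph{exact} Petrov--Galerkin equations $\mathfrak{b}(k,z_j^{\star})=\mathfrak{a}(v_j,k)$ for all $k\in\bb{K}_h$ and claim the lower bound on $\mathfrak{b}(\cdot,q)|_{\bb{K}_h}$ makes the ``$d\times d$ Galerkin matrix'' invertible. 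The inf--sup hypothesis only gives injectivity of $q\mapsto\mathfrak{b}(\cdot,q)$, i.e.\ full \emph{column} rank of $B=[\mathfrak{b}(k_m,q_l)]$; it does not make $B$ square or surjective. In the setting of Assumption \ref{assumption:cea-lemma} the test space is allowed to be strictly larger than the trial space ($r\le d$), so the system you write is overdetermined and generically has \emph{no} solution --- which is exactly the non-trivial case of the theorem, since otherwise both sides of the claimed inequality are zero and the statement is vacuous. Your own parenthetical (``the genuine best-approximation gap otherwise'') concedes that $\mathfrak{a}(v_j,\cdot)$ may lie outside the range, but then the object $z_j^{\star}$ you construct does not exist and the rest of the argument has nothing to instantiate.

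What is missing is the normal-equations/saddle-point characterization of the dual-norm least-squares solution: introduce the Riesz representer $s\in\bb{K}_h$ of the residual and solve the coupled system \eqref{eq:basis-update-mix} (Lemma \ref{lemma:mix} with $\langle u,\cdot\rangle_h$ replaced by $\mathfrak{a}(v_j,\cdot)$). The resulting coefficient vector is $\bm{\lambda}=(BM^{-1}B^{\top})^{-1}BM^{-1}\bm{\zeta}$ as in \eqref{eq:mix-solution-1}, with solvability of the Schur complement supplied by the full column rank of $B$ via Lemma \ref{lemma:matrix-invertible} --- not $B^{-1}\bm{\zeta}$. One further point of alignment with the statement: the coefficients in the update rule \eqref{eq:attention-galerkin-projection} are \emph{fixed} to be $\mathfrak{a}(v_j,k_l)$; the freedom lies entirely in replacing $\{q_l\}$ by $\{\tilde q_l\}$ through an updated projection matrix on $Q$ (which is where $(BM^{-1}B^{\top})^{-1}$ and $M^{-1}$ are absorbed in \eqref{eq:attention-update}), rather than through the post-attention FFN as you propose; the theorem explicitly asks for ``a set of projection matrices to update the value space,'' so the FFN route, while linear-algebraically equivalent, does not literally prove the stated claim.
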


\paragraph{The role of feed-forward networks and positional encodings in the dynamic basis update.}  
\label{paragraph:ffn}
Due to the presence of the concatenated coordinates $\mathbf{x}:=\Vert_{i=1}^n x_i \in \R^{n\times m} $ to the latent representation $\mathbf{y}$, the pointwise subnetwork $g_{s}(\cdot):\R^{n\times m} \to \R^{n\times d}$ of the nonlinear universal approximator (FFN) in each attention block is one among many magics of the attention
mechanism. In every attention layer, the basis functions in $\bb{Q}_h/\bb{K}_h/\bb{V}_h$ are being constantly enriched by  
$\op{span}\{w_j\in \bb{X}_h: w_j(x_i) = (g_s(\mathbf{x}))_{ij}, 1\leq j\leq d \} \subset \mathcal{H}$, thus being dynamically updated to try to capture how an operator of interest responses to the subset of inputs. 
Despite the fact that the FFNs, when being viewed as a class of functions, bear no linear structure within, the basis functions produced this way act as a building block to characterize a linear space for a learnable projection. This heuristic shows to be effective when the target is assumed to be a function of the (relative) positional encodings (coordinates, transformation groups, etc.), in that this is incorporated in many other attention-based learners with applications in physical sciences \cite{Tai.Bailis.ea:2019Equivariant,Fuchs.Worrall.ea:2020SE3,Hutchinson.Lan.ea:2021LieTransformer,JumperEvansPritzelGreenEtAl2021Highly}.

\section{Experiments}
\label{sec:experiments}
In this section we perform a numerical study the proposed Fourier Transformer (\textbf{FT}) with the Fourier-type encoder, and the Galerkin Transformer (\textbf{GT}) with the Galerkin-type encoder, in various PDE-related operator learning tasks. The models we compare our newly proposed models with 
are the operator learners with the simple attention replaced by the standard softmax normalized scaled 
dot-product attention (\textbf{ST}) \cite{Vaswani;Shazeer;Parmar:2017Attention},
and a linear variant (\textbf{LT}) \cite{Shen.Zhang.ea:2021Efficient} in which two independent softmax normalizations are applied on $Q, K$ separately.\footnote{\url{https://github.com/lucidrains/linear-attention-transformer}}
The data are obtained courtesy of the PDE benchmark under the MIT license.\footnote{\url{https://github.com/zongyi-li/fourier_neural_operator} } 
For full details of the training/evaluation and model structures please refer to Appendix \ref{sec:appendix-experiments}.

Instead of the standard Xavier uniform initialization \cite{Glorot.Bengio:2010Understanding}, inspired by the interpretations of Theorem \ref{theorem:cea-lemma} in Appendix \hyperref[paragraph:dynamical-basis-update]{D.3.4}, we modify the initialization for the projection matrices slightly as follows
\begin{equation}
\label{eq:attention-init}
W^{\diamond}_{\text{init}}\gets \eta U + \delta I, \quad \text{ for } \diamond \in \{Q, K, V\},
\end{equation}
where $U=(x_{ij})$ is a random matrix using the Xavier initialization with gain $1$ such that $x_{ij}\sim \mathcal{U}([-\sqrt{3/d}, \sqrt{3/d}])$, and $\delta$ is a small positive number.
In certain operator learning tasks, we found that this tiny modification boosts the evaluation performance of models by up to $50\%$ (see Appendix \ref{sec:appendix-burgers}) and improves the training stability acting as a cheap remedy to the lack of a softmax normalization. We note that similar tricks have been discovered concurrently in \cite{CsordasIrieSchmidhuber2021Devil}.

Unsurprisingly, when compared the memory usage and the speed of the networks (Table \ref{table:memory-speed}), the Fourier-type attention features a 40\%--50\% reduction in memory versus the attention with a softmax normalization. 
The Galerkin attention-based models have a similar memory profile with the standard linear attention, it offers up to a 120\% speed boost over the linear attention in certain tests. 

\begin{table}[htbp]
\caption{The memory usage/FLOP/complexity comparison of the models. Batch size: 4; 
the CUDA mem (GB): the sum of the \texttt{self\_cuda\_memory\_usage}; GFLOP: Giga FLOP for 1 backpropagation (BP); both are from the PyTorch \texttt{autograd} profiler for 1 BP averaging from 1000 BPs; the mem (GB) is recorded from \texttt{nvidia-smi} of the memory allocated for the active Python process during profiling; the speed (iteration per second) is measured during training; the exponential operation is assumed to have an explicit complexity of $c_e>1$ \cite{Brent1976Multiple}.}
\label{table:memory-speed}
\begin{center}
  \resizebox{0.98\textwidth}{!}{%
  \begin{tabular}{rccccccccc}\toprule
& \multicolumn{4}{c}{Example 1: $n=8192$} 
& \multicolumn{4}{c}{Encoders only: $n=8192, d=128, l=10$}
& \multirowcell{2}{Computational complexity \\ of the dot-product per layer}
\\
\cmidrule(lr){2-5}
\cmidrule(lr){6-9}
&  Mem  & CUDA Mem & Speed &  GFLOP 
&  Mem  &  CUDA Mem  & Speed & GFLOP
\\
\midrule
ST   & $18.39$  & $31.06$  & $5.02$ & $1393$ & $18.53$ & $31.34$  & $ 4.12$  & $1876$ & $O(n^2 c_e d)$
\\
FT  & $10.05$ & $22.92$  & $6.10$   & $1138$ &  $10.80$ & $22.32$  &  $ 5.46$ & $1610$ & $O(n^2 d)$ 
\\
LT  & $2.55$ & $2.31$  &  $12.70$   & $606$ & $2.73$ & $2.66$  &  $ 10.98$ & $773$ & $O(n(d^2+c_e d))$
\\
GT &  \b{2.36} & \b{1.93}  & \b{27.15}  & \b{275} & \b{2.53} & \b{2.33}  &  \b{19.20} & \b{412} & $O(nd^2)$
\\
\bottomrule
\end{tabular}
}
\end{center}
\end{table}

The baseline models for each example are the best operator learner to-date, the state-of-the-art Fourier Neural Operator (FNO) in \cite{Li.Kovachki.ea:2021Fourier} but without the original built-in batch normalization. All attention-based models match the parameter quota of the baseline, and are trained using the loss in \eqref{eq:loss-discrete} with the same \texttt{1cycle} scheduler \cite{Smith.Topin:2019Super} for 100 epochs. For fairness, we have also included the results for the standard softmax normalized models (ST and LT) using the new layer normalization scheme in \eqref{eq:attention-fourier} and \eqref{eq:attention-galerkin}.
We have retrained the baseline with the same \texttt{1cycle} scheduler using the code provided in \cite{Li.Kovachki.ea:2021Fourier}, and listed the original baseline results 
using a step scheduler of 500 epochs of training from \cite{Li.Kovachki.ea:2021Fourier} Example 5.1 and Example 5.2, respectively.

\subsection{Example 1: viscous Burgers' equation}
\label{sec:burgers}
In this example, we consider a benchmark problem of the viscous Burgers' equation with a periodic boundary condition on $\Omega :=(0,1)$ in \cite{Li.Kovachki.ea:2021Fourier}. The nonlinear operator to be learned is the discrete approximations to the solution operator $T: C^0_{p} (\Omega) \cap L^2(\Omega) \to C^0_{p} (\Omega) \cap H^1 (\Omega)$, $u_0(\cdot) \mapsto u(\cdot, 1)$. The initial condition $u_0(\cdot)$'s are sampled following a Gaussian Random Field (GRF).

The result can be found in Table \ref{table:burgers}. 
All attention-based operator learners achieve a resolution-invariant performance similar with FNO1d in \cite{Li.Kovachki.ea:2021Fourier}. The new Galerkin projection-type layer normalization scheme significantly outperforms the regular layer normalization rule in this example, in which both inputs and targets are unnormalized. For full details please refer to Appendix \ref{sec:appendix-burgers}.

\subsection{Example 2: Darcy flow}
\label{sec:darcy}
In this example, we consider another well-known benchmark $-\nabla \cdot (a\nabla u) = f$ for $u\in H^1_0(\Omega)$ from \cite{Bhattacharya.Hosseini.ea:2020Model,Li.Kovachki.ea:2021Fourier,Li.Kovachki.ea:2020Multipole,Nelsen.Stuart:2020Random}, and the operator to be learned is the approximations to $T: L^\infty(\Omega) \to H^1_0 (\Omega), a\mapsto u$, in which $a$ is the coefficient with a random interface geometry, and $u$ is the weak solution. 
Here $L^{\infty}(\Omega)$ is a Banach space and cannot be compactly embedded in $L^2(\Omega)$ (a Hilbert space), we choose to avoid this technicality as the finite dimensional approximation space can be embedded in $L^2(\Omega)$ given that $\Omega$ is compact.

The result can be found in Table \ref{table:darcy}. As the input/output are normalized, in contrast to Example \ref{sec:burgers}, the Galerkin projection-type layer normalization scheme does not significantly outperform the regular layer normalization rule in this example. 
The attention-based operator learners achieve on average 30\% to 50\% better evaluation results than the baseline FNO2d (only on the fine grid) using the same trainer.
For full details please refer to Appendix \ref{sec:appendix-darcy}.

\begin{table}[htbp]
\caption{(\subref{table:burgers}) Evaluation relative error ($\times 10^{-3}$) of Burgers' equation \ref{sec:burgers}. (\subref{table:darcy}) Evaluation relative error ($\times 10^{-2}$) of Darcy interface problem \ref{sec:darcy}. }
\begin{subtable}[t]{0.48\linewidth}
\caption{\label{table:burgers}}
\resizebox*{0.99\textwidth}{!}{%
  \begin{tabular}{lcccc}\toprule
&  $n=512$ &  $n=2048$ 
&  $n=8192$ 
\\
\midrule
FNO1d \cite{Li.Kovachki.ea:2021Fourier}  & 15.8 &   14.6 &    13.9   
\\
FNO1d \texttt{1cycle}     & $4.373$  & $4.126$   &   $4.151$
\\
FT regular $\text{Ln}$  & $1.400$   & $1.477$  & $1.172$  
\\
GT regular $\text{Ln}$ &  $2.181$   & $1.512$   & $2.747$  
\\
ST regular $\text{Ln}$  & $1.927$  & $2.307$   & $1.981$  
\\
LT regular $\text{Ln}$  & $1.813$  & $1.770$   & $1.617$  
\\
FT $\text{Ln}$ on $Q,K$ & \b{1.135} & \b{1.123}   & \b{1.071}  
\\
GT $\text{Ln}$ on $K,V$ & \b{1.203}  & \b{1.150}   & \b{1.025} 
\\
ST $\text{Ln}$ on $Q,K$ & $1.271$ & $1.266$   & $1.330$ 
\\
LT $\text{Ln}$ on $K,V$ & $1.139$ & $1.149$  & $1.221$ 
\\
\bottomrule
\end{tabular}
}
\end{subtable}
\begin{subtable}[t]{0.505\linewidth}
\caption{\label{table:darcy}}
\resizebox*{0.99\textwidth}{!}{%
  \begin{tabular}{lcccc}\toprule
&  $n_f, n_c = 141, 43$
&   $n_f, n_c=211, 61$
\\
\midrule
FNO2d \cite{Li.Kovachki.ea:2021Fourier}   & $1.09$  & $1.09$ 
\\
FNO2d \texttt{1cycle}         & $1.419$  & $1.424$  
\\
FT regular $\text{Ln}$     & \b{0.838}  &  \b{0.847}     
\\
GT regular $\text{Ln}$     & $0.894$ &  $0.856$      
\\
ST regular $\text{Ln}$     & $1.075$ &  $1.131$   
\\
LT regular $\text{Ln}$       & $1.024$ & $1.130$      
\\
FT  $\text{Ln}$ on $Q,K$    & $0.873$ &   $0.921$   
\\
GT $\text{Ln}$ on $K,V$     & \b{0.839} &   \b{0.844} 
\\
ST $\text{Ln}$ on $Q,K$       & $0.946$    &  $0.959$  
\\
LT $\text{Ln}$ on $K,V$     & $0.875$    &    $0.970$
\\
\bottomrule
\end{tabular}
}
\end{subtable} 
\end{table}

\subsection{Example 3: inverse coefficient identification for Darcy flow}
\label{sec:darcy-inverse}
In this example, we consider an inverse coefficient identification problem based on the same data used in Example \ref{sec:darcy}. The input (solution) and the target (coefficient) are reversed from Example \ref{sec:darcy}, and the noises are added to the input. The inverse problems in practice are a class of important tasks in many scientific disciplines such as geological sciences and medical imaging but much more difficult due to poor stability \cite{Kirsch:2011Introduction}. We aim to learn an approximation to an ill-posed operator $T: H^1_0 (\Omega) \to L^{\infty}(\Omega), u+\epsilon N_\nu(u)\mapsto a$, where $N_\nu(u)$ stands for noises related to the sampling distribution and the data. $\epsilon=0.01$ means 1\% of noise added in both training and evaluation, etc.

The result can be found in Table \ref{table:darcy-inv}. It is not surprising that FNO2d, an excellent smoother which filters higher modes in the frequency domain, struggles in this example to recover targets consisting of high-frequency traits (irregular interfaces) from low-frequency prevailing data (smooth solution due to ellipticity).
We note that, the current state-of-the-art methods \cite{Chan.Tai:2003Identification} for inverse interface coefficient identification need to carry numerous iterations to recover a single instance of a simple coefficient with a regular interface, provided that a satisfactory denoising has done beforehand. 
The attention-based operator learner has capacity to unearth structurally how this inverse operator's responses on a subset, with various benefits articulated in \cite{Li.Kovachki.ea:2020Multipole,Li.Kovachki.ea:2021Fourier,Li.Kovachki.ea:2020Neural,Nelsen.Stuart:2020Random,Bhattacharya.Hosseini.ea:2020Model}.

\begin{table}[htbp]
\caption{Evaluation relative error ($\times 10^{-2}$) of the inverse problem \ref{sec:darcy-inverse}.}
\label{table:darcy-inv}
\begin{center}
\resizebox{0.7\textwidth}{!}{%
\begin{tabular}{rcccccc}\toprule
& \multicolumn{3}{c}{$n_f, n_c = 141, 36$} & \multicolumn{3}{c}{$n_f, n_c=211, 71$}
\\
\cmidrule(lr){2-4}
\cmidrule(lr){5-7}
& $\epsilon=0$ & $\epsilon=0.01$ & $\epsilon=0.1$ 
& $\epsilon=0$ & $\epsilon=0.01$ & $\epsilon=0.1$ 
\\
\midrule
FNO2d (only $n_f$)     &  $13.71$   &  $13.78$   &  $15.12$  &  $13.93$  &  $13.96$ & $15.04$
\\
FNO2d (only $n_c$)      &  $14.17$   &  $14.31$  & $17.30$ &  $13.60$  &  $13.69$  & $16.04$
\\
FT regular $\text{Ln}$  & \b{1.799} & \b{2.467}  & $6.814$ & $1.563$  & $2.704$   & $8.110$
\\
GT regular $\text{Ln}$  & $2.026$ &  $2.536$  &  \b{6.659} & $1.732$  &  $2.775$  & $8.024$ 
\\
ST regular $\text{Ln}$ & $2.434$ &   $3.106$  &  $7.431$ & $2.069$  & $3.365$ & $8.918$  
\\
LT regular $\text{Ln}$  & $2.254$ & $3.194$  &  $9.056$ & $2.063$  &   $3.544$ & $9.874$
\\
FT $\text{Ln}$ on $Q,K$ & $1.921$  &  $2.717$  & $6.725$  &  \b{1.523} &  \b{2.691} & $8.286$
\\
GT $\text{Ln}$ on $K,V$ & $1.944$  & $2.552$  & $6.689$ &  $1.651$ &  $2.729$  & \b{7.903} 
\\
ST $\text{Ln}$ on $Q,K$  & $2.160$ & $2.807$  & $6.995$  &   $1.889$   & $3.123$ & $8.788$
\\
LT $\text{Ln}$ on $K,V$  & $2.360$ & $3.196$   & $8.656$  & $2.136$  & $3.539$ & $9.622$    
\\
\bottomrule
\end{tabular}
}
\end{center}
\end{table}

\section{Conclusion}
\label{sec:conclusion}

We propose a general operator learner based on a simple attention mechanism. The network is versatile and is able to approximate both the PDE solution operator and the inverse coefficient identification operator. The evaluation accuracy on the benchmark problems surpasses the current best state-of-the-art operator learner Fourier Neural Operator (FNO) in \cite{Li.Kovachki.ea:2021Fourier}. 
However, we acknowledge the limitation of this work: (i) similar to other operator learners, the subspace, on which we aim to learn the operator's responses, may be infinite dimensional, 
but the operator must exhibit certain low-dimensional attributes (e.g., smoothing property of the higher frequencies in GRF); 
(ii) it is not efficient for the attention operator to be applied at the full resolution for a 2D problem, and this limits the approximation to a nonsmooth subset such as functions in $L^{\infty}$;
(iii) due to the order of the matrix product, the proposed linear variant of the scaled dot-product attention is non-causal thus can only apply to encoder-only applications.

\section{Broader Impact}
\label{sec:impact}
Our work introduces the state-of-the-art self-attention mechanism the first time to PDE-related operator learning problems. The new interpretations of attentions invite numerical analysts to work on a more complete and delicate approximation theory of the attention mechanism. We have proved the Galerkin-type 
attention's approximation capacity in an ideal Hilbertian setting. Numerically, the new attention-based operator learner has capacity to approximate the difficult inverse coefficient identification problem with an extremely noisy measurements,
which was not attainable using traditional iterative methods for nonlinear mappings. Thus, our method may pose a huge positive impact in geoscience, medical imaging, etc.
Moreover, traditionally the embeddings in Transformer-based NLP models map the words to a high dimensional space, but the topological structure in the same feature dimension between different positions are learned thereby not efficient. Our proof provides a theoretical guide for the search of feature maps that preserve, or even create, structures such as differentiability or physical invariance. Thus, it may contribute to the removal of the softmax nonlinearity to speed up significantly the arduous training or pre-training of larger encoder-only models such as BERT \cite{DevlinChangLeeToutanova2019BERT}, etc.
However, we do acknowledge that our research may negatively impact on the effort of building a cleaner future for our planet, as inverse problems are widely studied in reservoir detection, and we have demonstrated that the attention-based operator learner could potentially help to discover new fossil fuel reservoirs due to its capacity to infer the coefficients from noisy measurements.

\newpage
\begin{ack}
The hardware to perform this work is kindly donated by Andromeda Saving Fund.
The first author was supported in part by the National Science Foundation
under grants DMS-1913080 and DMS-2136075. No additional revenues are related to this work. We would like to thank the anonymous reviewers and the area chair for the suggestions on improving this article. We would like to thank Dr. Long Chen (Univ of California Irvine) for the inspiration of and encouragement on the initial conceiving of this paper, as well as numerous constructive advices on revising this paper, not mentioning his persistent dedication of making publicly available tutorials \cite{Chen:2008iFEM} on writing beautiful vectorized code. \footnote{\url{https://github.com/lyc102/ifem}} We would like to thank Dr. Ari Stern (Washington Univ in St. Louis) for the help on the relocation during the COVID-19 pandemic. We would like to thank Dr. Likai Chen (Washington Univ in St. Louis) for the invitation to the Stats and Data Sci seminar at WashU that resulted the reboot of this study. \footnote{\href{https://math.wustl.edu/events/statistics-and-data-science-seminar-transformer-dissection-amateur-applied-mathematician}{Transformer: A Dissection from an Amateur Applied Mathematician}} We would like to thank Dr. Ruchi Guo (Univ of California Irvine) and Dr. Yuanzhe Xi (Emory Univ) for the invaluable feedbacks on the choice of the numerical experiments. We would like to thank the Kaggle community, including but not limited to Jean-François Puget (CPMP@Kaggle) for sharing a simple Graph Transformer in TensorFlow,\footnote{\url{https://www.kaggle.com/cpmpml/graph-transfomer}}  Murakami Akira (mrkmakr@Kaggle) for sharing a Graph Transformer with a CNN feature extractor in Tensorflow, \footnote{\url{https://www.kaggle.com/mrkmakr/covid-ae-pretrain-gnn-attn-cnn}} and Cher Keng Heng (hengck23@Kaggle) for sharing a Graph Transformer in PyTorch.\footnote{\url{https://www.kaggle.com/c/stanford-covid-vaccine/discussion/183518}} We would like to thank 
daslab@Stanford, OpenVaccine, and Eterna for hosting the COVID-19 mRNA Vaccine competition and Deng Lab (Univ of Georgia) for collaborating in this competition. 
We would like to thank CHAMPS (Chemistry and Mathematics in Phase Space) for hosting the $J$-coupling quantum chemistry competition and Corey Levinson (Eligo Energy, LLC) for collaborating in this competition. 
We would like to thank Zongyi Li (Caltech) for sharing some early dev code in the updated PyTorch \texttt{fft} interface and the comments on the viscosity of the Burgers' equation. 
We would like to thank Ziteng Pang (Univ of Michigan) and Tianyang Lin (Fudan Univ) to update us with various references on Transformers. We would like to thank Joel Schlosser (Facebook) to incorporate our change to the PyTorch \texttt{transformer} module to simplify our testing pipeline. 
We would be grateful to the PyTorch community for selflessly code sharing, including Phil Wang(lucidrains@github) and Harvard NLP group \cite{Klein.Kim.ea:2017OpenNMT}. We would like to thank the \texttt{chebfun} \cite{Driscoll.Hale.ea:2014Chebfun} for integrating powerful tools into a simple interface to solve PDEs. We would like to thank Dr. Yannic Kilcher (ykilcher@twitter) and Dr. Hung-yi Lee (National Taiwan Univ) for frequently covering the newest research on Transformers in video formats.
We would also like to thank the Python community \cite{VanRossum.Drake:2009Python,Oliphant:2007Python} for sharing and developing the tools that enabled this work, including PyTorch \cite{Paszke.Gross.ea2019PyTorch}, NumPy \cite{Harris.Millman.ea:2020Array}, SciPy \cite{Virtanen.Gommers.ea:2020SciPy}, Plotly \cite{Inc.:2015plotly} Seaborn \cite{Waskom:2021seaborn}, Matplotlib \cite{Hunter:2007Matplotlib}, and the Python team for Visual Studio Code. We would like to thank \texttt{draw.io} \cite{drawio} for providing an easy and powerful interface for producing vector format diagrams. For details please refer to the documents of every function that is not built from the ground up in our open-source software library.\footnote{\url{https://github.com/scaomath/galerkin-transformer}}

\end{ack}

\newpage

\bibliographystyle{plainnatnourl}
{
\small

}

\newpage

\appendixtitle{Appendices of Choose a Transformer: Fourier or Galerkin}
\appendix


\section{Table of notations}
\label{sec:notations}

\begin{table}[htbp]
\caption{Notations used in an approximate chronological order and their meaning in this work.}
\label{table:notations}
\begin{center}
\resizebox{\textwidth}{!}{%
\begin{tabular}{cl}
\toprule
\textbf{Notation} &
\multicolumn{1}{c}{\textbf{Meaning}}
\\
\midrule
$\mathcal{H}, \mathcal{X}, \mathcal{Y}$ & Hilbert spaces defined on a domain $\Omega$,
$f\in \mathcal{H}: \Omega \to \R$
\\
$\mathcal{Q}, \mathcal{K}, \mathcal{V}$ & Latent representation Hilbert spaces, e.g., $v\in \mathcal{V}: \Omega^*\to \R$
\\
$(\mathcal{H}, \langle\cdot,\cdot\rangle)$ & $\mathcal{H}$ and its inner-product structure, $\langle u, v\rangle := \int_{\Omega}u(x)v(x) dx $ for simplicity
\\
$\mathcal{H}'$ & the space of the bounded linear functionals defined on a Hilbert space
\\
$\|v\|_{\mathcal{H}}$ & The norm defined by the inner product $\|v\|_{\mathcal{H}}:= \langle v, v\rangle^{1/2}$
\\
$\|f_u\|_{\mathcal{H}'}$ & The natural induced norm of $f_u(\cdot):= \langle u, \cdot\rangle$, 
$\|f_u\|_{\mathcal{H}'}:=\sup_{v\in \mathcal{H}} |f_u(v)|/\|v\|_{\mathcal{H}}$
\\
$\|\bm{v}\|$ &    The $\ell^2$-norm defined by the inner product 
$\|\bm{v}\|:=  (\bm{v}\cdot\bm{v})^{1/2}$ for $\bm{v}\in\R^d$
\\
$\mathfrak{b}(\cdot, \cdot)$ & A bilinear form, having two inputs from potentially different subspaces
\\
$L^p(\Omega)$ & The space of functions with integrable $p$-th moments in $\Omega$
\\
$L^{\infty}(\Omega)$ & The space of functions with a bounded essential supremum in $\Omega$
\\
$H^1(\Omega)$ & Sobolev space 
$W^{1,2}(\Omega):=\{\phi \in L^2(\Omega): D\phi \in L^2(\Omega)\}$
\\
$H^1_0(\Omega)$ & $\{ v\in H^1(\Omega): \; \Upsilon(u) =0\text{ on }\partial \Omega\}$, where $\Upsilon(\cdot)$ is the trace operator
\\
$\bm{H}(\mathrm{div};\Omega)$ & Hilbert space with a graph norm 
$\{\phi \in \bm{L}^2(\Omega): \op{div} \phi \in L^2(\Omega)\}$
\\
$C^0_p(\Omega)\simeq C^0(\bb{S}^1)$ & The space of continuous functions with a periodic boundary condition
\\
$\|u\|_{L^2(\Omega)}$ & The $L^2$-norm of $u$, $\|u\|_{L^2(\Omega)}^2 := \int_{\Omega} |u|^2 \dd x$
\\
$|u|_{H^1(\Omega)}$ & The $H^1$-seminorm of $u$, $|u|_{H^1(\Omega)}^2 :=\int_{\Omega} |D u|^2 \dd x$
\\
$x\in \Omega\subset \R^m$  & A point in the spacial domain $\Omega$ of interest
\\
$m$                       & The dimension of the underlying spacial domain in $\R^m$
\\
$d$                       & The dimension of the latent representation approximation subspace
\\
$L_a(u) = f$ &    The operator form (strong form) of a PDE with coefficient $a$
\\
$u(\cdot)$ &  The solution to the weak form $\langle Lu, v\rangle = \langle f, v\rangle$ for any test function $v\in \mathcal{H}$
\\
$a(\cdot)$ &  The coefficients in a PDE operator
\\
$\partial_t u + N(u)=0$ & A time-dependent stiff PDE, where $N(\cdot)$ is nonlinear differential operator
\\
$h$ &         The mesh size of a uniform grid
\\
$n\approx 1/h^m$ &    The discretization size (sequence length) of data, $O(n^m)$ for an $\R^m$ problem
\\
$n_f, n_c$       &  The fine grid size, the coarse grid size
\\
$\bb{X}_h, \bb{Y}_h, \bb{A}_h$ & The discrete function space with degrees of freedom on grid points of mesh size $h$
\\
$\bb{Q}_h, \bb{V}_h$ & Certain subspaces spanned by functions in $\bb{X}_h, \bb{Y}_h$ 
\\
$u_h, a_h$ & The approximation to $u, a$ whose degrees of freedom defined at the grid points
\\
$T$  & The operator to be learned related to a partial differential equation
\\
$T_h$ & The approximation to $T$ applied on functions on a discrete grid with mesh size $h$
\\
$\mathbf{y}, \mathbf{z}$ &  the input of and the output from the attention operator, in $\R^{n\times d}$
\\
$ \bm{q}_i, \bm{k}_i, \bm{v}_i$     &  the $i$-th row of, or the $i$-th position's feature vector in a latent representation
\\
$\bm{z}^i, \bm{q}^i, \bm{k}^i, \bm{v}^i$  
& the $i$-th column of, or the $i$-th basis's discrete DoFs in a latent representation
\\
$A_{i\hspace*{0.07em}\bdot}\;/\; A_{\bdot\hspace*{0.03em} j}$   &  the $i$-th row/$j$-th column of a matrix $A$
\\
$\{y_j(\cdot)\}_{j=1}^d$ & A set of latent basis whose DoFs form the column space of $Y\in \R^{n\times d}$
\\
$\{\chi_{y_j}(\cdot)\}_{j=1}^d$ & The set of degrees of freedom associated with the set of bases $\{y_j(\cdot)\}_{j=1}^d$ 
\\
$(\bm{v})_i$ &   the $i$-th entry/row of a vector $\bm{v}$
\\
$I_h$  & The nodal interpolation operator such that $(I_h v) (x_i) = v(x_i)$
\\
$\Pi_h$  & The interpolation or projection operator that maps function to a grid with mesh size $h$
\\
$\mathcal{H} \hookrightarrow C^0(\Omega)$ & $\mathcal{H}$ is continuously embedded in the space of continuous functions
\\
\bottomrule
\end{tabular}
}
\end{center}
\end{table}

\section{Network structures}
\label{sec:appendix-network}

The network in Figure \ref{fig:simple-ft} is used in Example \ref{sec:burgers}. The model used in the forward Darcy problem \ref{sec:darcy} is in Figure \ref{fig:cnn-ft}. A detailed comparison can be found in Table \ref{table:networks}.

When the target is smooth, the spectral convolution layer from \cite{Li.Kovachki.ea:2021Fourier} is used in our network as a smoother (decoder), and the original ReLU activation is replaced by the Sigmoid Linear Unit (SiLU) \cite{Ramachandran.Zoph.ea:2017Searching}. We have removed the batch normalization (BN) from the original spectral convolution layers as well. Whenever the input or the output of certain layers of the network is approximating a non-smooth function a priori, the activation are changed from SiLU to ReLU.

\begin{table}
\caption{The detailed comparison of networks; SC: spectral convolution layer; a \texttt{torch.cfloat} type parameter entry counts as two parameters.}
\label{table:networks}
\begin{center}
  \resizebox{0.8\textwidth}{!}{%
  \begin{tabular}{rccccccccc}\toprule
& \multicolumn{3}{c}{Encoder} 
& \multicolumn{4}{c}{Decoder}
&  \multirow{3}{*}{\# params} 
\\
\cmidrule(lr){2-4}
\cmidrule(lr){5-8}
&  layers & \texttt{dmodel} & \texttt{nhead}  
& \# SC & \texttt{dmodel} & modes & activation &  
\\
\midrule
FNO 1D          & 0 & N/A & N/A  & 4 & 64 & 16 & ReLU & 550k
\\
FT/GT in \ref{sec:burgers}   & 4 & 96 & 1    & 2 & 48 & 16 & SiLU & 523k--530k
\\
FNO 2D          & 0 & N/A & N/A  & 4 & $32\times 32$ & 12 & ReLU & 2.37m
\\
FT/GT in \ref{sec:darcy}   & 6 & 128 & 4    & 2 & $32\times 32$ & 12 & SiLU & 2.22m
\\
FT/GT in \ref{sec:darcy-inverse}  & 6 & 192 & 4    & 0 & N/A & N/A & SiLU & 2.38m
\\
\bottomrule
\end{tabular}
}
\end{center}
\end{table}

\begin{figure}[htb]
\centering
\includegraphics[width=0.7\textwidth]{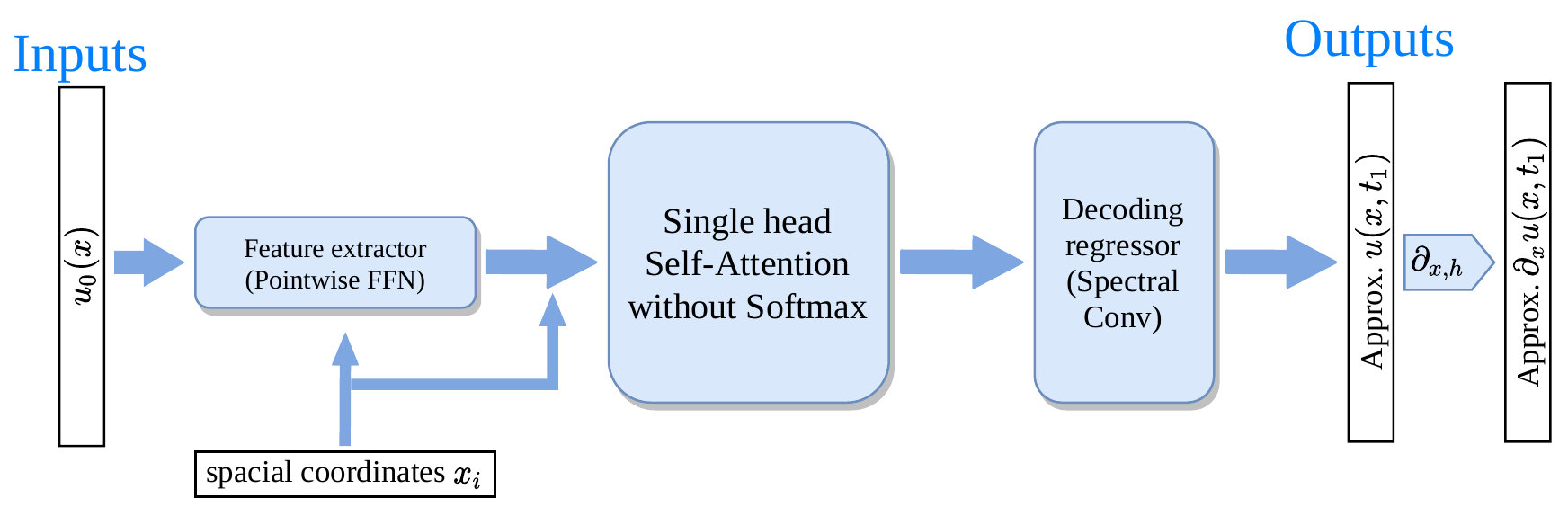}
\caption{A simple attention-based operator learner in Example \ref{sec:burgers}.}
\label{fig:simple-ft}
\end{figure}

\begin{figure}[htb]
\centering
\includegraphics[width=0.95\textwidth]{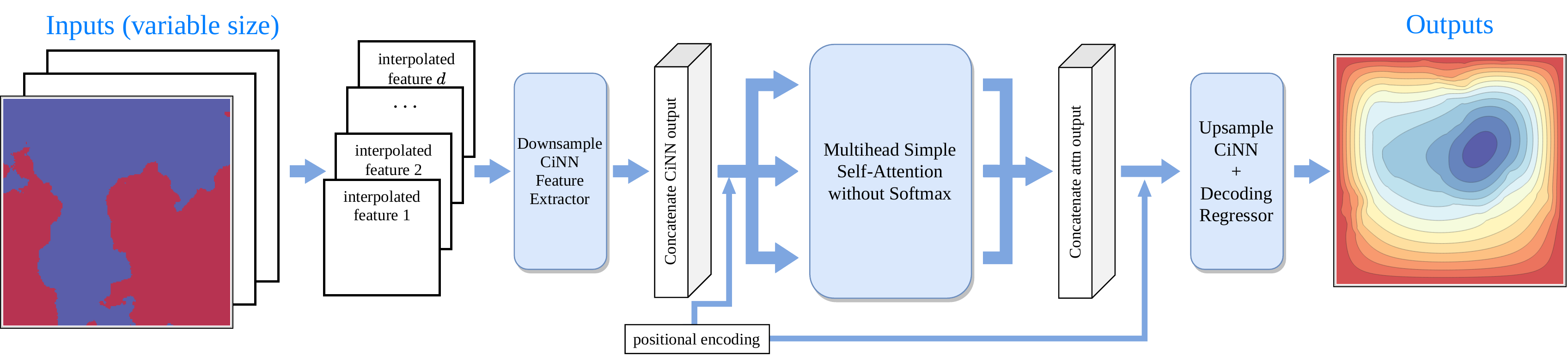}
\caption{An attention-based operator learner on $\Omega\subset \R^2$.}
\label{fig:cnn-ft}
\end{figure}

\paragraph{Downsampling CNN.} The downsampling interpolation-based CNN (CiNN) is to reduce the size of the input to a computationally feasible extent for attention-based encoder layers, in addition to a channel expansion to match the hidden feature dimension (channels).
The full resolution input function sampled at a fine grid of size $n_f\times n_f$ is downsampled by CiNN to a collection of latent functions on an $n_c\times n_c$ coarse grid. Then, the coarse grid representations are concatenated with the positional encoding (Euclidean coordinates on the coarse grid) to be sent to the attention-based encoder layers.

The structures of the downsampling CiNN can be found in Figure \ref{fig:cnn-down}. 
In CiNN, instead of pooling, the downsampling are performed through a bilinear interpolation with nonmatching coarse/fine grids. 
The convolution block adopts a simplified variant from the basic block in \cite{He.Zhang.ea:2016Identity}. The convolution layer is applied only once before the skip-connection, and the batch normalization is removed from the block. 

In the downsampling CiNN, the first convolution layer maps the input data to a tensor of which the number of channels matches the number of hidden dimension of the attention layers. Then, the full resolution representations in all channels are interpolated from the $n_f\times n_f$ grid to an $n_m\times n_m$ grid of an intermediate size between $n_f$ and $n_c$, and $n_m\approx \sqrt{n_f n_c}$.
Next, another three convolution layers are applied consecutively together with their outputs stacked in the channel dimension. Finally, this stacked tensor is downsampled again by another bilinear interpolation as the output the downsampling CiNN.

The coarse grid positional encoding of size $n_c\times n_c\times 2$ is concatenated to the output latent representations before flattening and the scaled dot-product attention. 
As a result, the input/output dimensions of an attention-based encoder layer are both $n_c^2 \times d$. Nevertheless, inside an encoder layer, the propagation runs for a tensor of size $n_c^2 \times (d+m\cdot (\texttt{nhead}))$.

\paragraph{Upsampling CNN.} The output from the attention-based encoder layers is first reshaped from an $n_c^2\times d$ matrix to an $n_c\times n_c\times d$ tensor, then upsampled by another CiNN to the full resolution. The upsampling CiNN has a simpler structure (Figure \ref{fig:cnn-up}) than the downsampling CiNN. We have two interpolations that map tensors of $n_c\times n_c \times d$ to $n_m\times n_m \times d$, and $n_m\times n_m \times d$ to $n_f\times n_f \times d$, respectively. A simple convolution layer with matching number of channels is between them. The positional encoding of the fine grid is then concatenated with the output from the upsample layer, and sent to the decoder layers.

\begin{figure}[htb]
\centering
\includegraphics[width=0.9\textwidth]{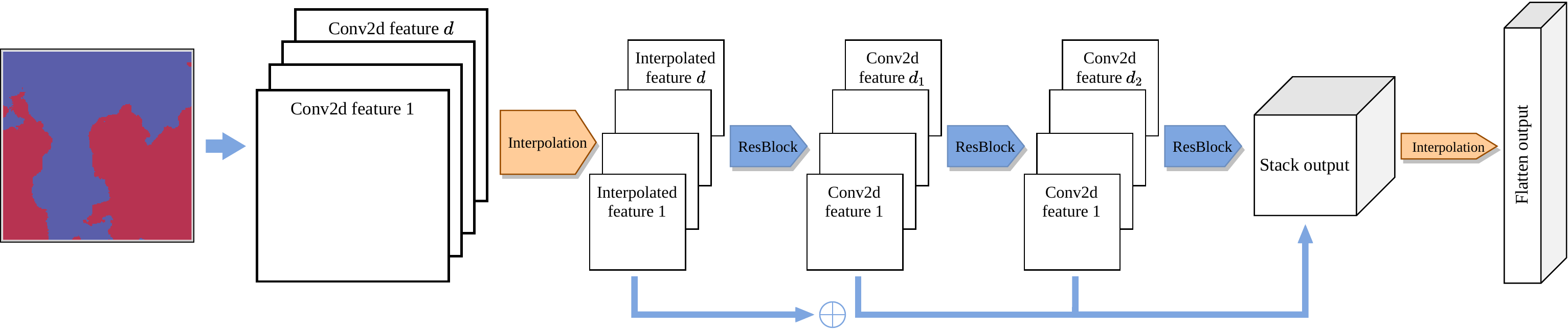}
\caption{A 2D bilinear interpolation-based CNN (CiNN) for downsampling.}
\label{fig:cnn-down}
\end{figure}

\begin{figure}[htb]
\centering
\includegraphics[width=0.7\textwidth]{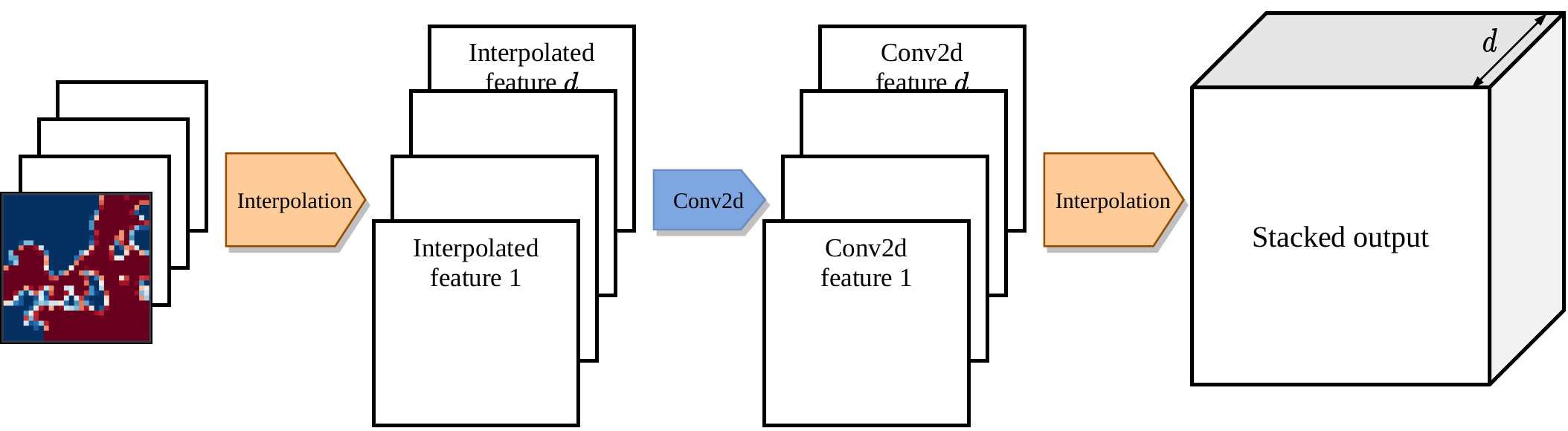}
\caption{A 2D bilinear interpolation-based CNN (CiNN) for upsampling.}
\label{fig:cnn-up}
\end{figure}

\section{Supplemental details of the experiments}
\label{sec:appendix-experiments}

\subsection{Training and evaluation setup}
In training our models, we opt for a standard \texttt{1cycle} \cite{Smith.Topin:2019Super} learning rate strategy with a warm-up phase for an environmental responsible and a seed-invariant training. We run a mini-batch ADAM iterations for a total number of 100 epochs (12800 iterations with batch size 8).  
The learning rate starts and ends with $10^{-4}\cdot lr_{\max}$, and reaches the maximum of $lr_{\max}$ at the end of the 30-th epoch. The $lr_{\max}=10^{-3}$ for all models except being $5\times 10^{-4}$ for ST and FT in 2D problems.

The batch size is set to 8 in 1D in $n=512, 2048$, and 4 in $n=8192$ as well as in 2D examples. The training cost of our models are reported in Table \ref{table:training-times}. 
It is not surprising that attention-based operator learners can be more efficiently trained than traditional MLP-based operator learners. Our model can be trained using merely a fraction of time versus MLP-based operator learners (cf. Burgers' equation training time in \cite[Appendix C]{Wang.Wang.ea:2021Learning}), thus proven to be a more environment-friendly data-driven model.

For the three examples prepared, there are 1024 samples in the training set, and 100 in the testing set. Even though the initial conditions or the coefficients for different samples, in Example \ref{sec:burgers} 
and Example \ref{sec:darcy} respectively, follow the same distribution constructed based on GRF, 
there are no repetitions between the functions in the training set and those in the testing set.

\begin{table}[htbp]
\caption{Environmental impact measured in computational cost of training (in hours).}
\label{table:training-times}
\begin{center}
  \resizebox{0.78\textwidth}{!}{%
\begin{tabular}{rcccccrccc}
\toprule
& \multicolumn{3}{c}{Example 1 } & \multicolumn{2}{c}{Example 2}
& \multicolumn{2}{c}{Example 3 }
\\
\cmidrule(lr){2-4}
\cmidrule(lr){5-6}
\cmidrule(lr){7-8}
&  $n=512$  & $n=2048$ & $n=8192$ &  $n_f=141$  & $n_f=211$ 
& $n_f=141$  & $n_f=211$ 
\\
\midrule
FT  & $0.063$ & $0.138$  & $1.217$  & $0.615$  &  $1.553$ &  $0.452$   & $2.638$
\\
GT & $0.064$  & $0.079$  & $0.245$  &  $0.367$ &  $0.610$ & $0.248$ &  $0.857$
\\
\bottomrule
\end{tabular}
}
\end{center}
\end{table}

During training, there is no regularization applied for the weights of the models. A simple gradient clip of 1 is applied. 
When the target function is known a priori being smooth with an $H^{1+\alpha}$ regularity ($\alpha>0$), 
we employ an $H^1$-seminorm regularization between the 2nd order approximation 
(the central difference in 1D, and the 5-point stencil in 2D) to derivatives of the targets and those of the outputs from the model (see Section \ref{sec:operator-learning}). We choose $\gamma=0.1h$ in Example \ref{sec:burgers} and $\gamma = 0.5h$ in Example \ref{sec:darcy}.
The dropouts during training are obtained through a simple grid search in $\{0.0, 0.05, 0.1\}$ and can be found in Table \ref{table:dropouts}.

\begin{table}
\caption{The dropout comparison during training.}
\label{table:dropouts}
\begin{center}
  \resizebox{0.65\textwidth}{!}{%
  \begin{tabular}{rccccccc}\toprule
&  attention & FFN & downsample 
& upsample & decoder   
\\
\midrule
FT both $\text{Ln}$s in \ref{sec:burgers}   & $0.0$ & $0.05$ &  N/A   & N/A & $0.0$ 
\\
GT new $\text{Ln}$ in \ref{sec:burgers}   & $0.0$ & $0.0$ & N/A    & N/A & $0.0$ 
\\
GT reg $\text{Ln}$ in \ref{sec:burgers}   & $0.1$ & $0.1$ & N/A    & N/A & $0.0$ 
\\
FT in \ref{sec:darcy}   & $0.1$ & $0.1$ &  $0.05$    &  $0.0$ & $0.0$
\\
GT in \ref{sec:darcy}   & $0.1$ & $0.05$ &  $0.05$  & $0.0$ & $0.0$
\\
FT/GT in \ref{sec:darcy-inverse}  & $0.05$ & $0.05$ &  $0.05$  & N/A & $0.05$
\\
\bottomrule
\end{tabular}
}
\end{center}
\end{table}

Throughout all the numerical experiments, the result is obtained from setting $1127802$ as the random number generator seed, and the PyTorch \texttt{cuDNN} backend to deterministic.
Error bands are reported using 10 different seeds (see Figure \ref{fig:conv}). 
All benchmarks are run on a single RTX 3090. Due to the nature of our problem and the data pairs, there is no randomness between the input and the output for a single instance in Example \ref{sec:burgers} and Example \ref{sec:darcy}, the only stochastic components are the sampling for optimizations and the initializations of the model, 
and the impact due to the choice of seeds for our models is empirically minimal.

\begin{figure}[htp]
\begin{center}
\begin{subfigure}[b]{0.45\linewidth}
  \centering\includegraphics[width=0.97\linewidth]{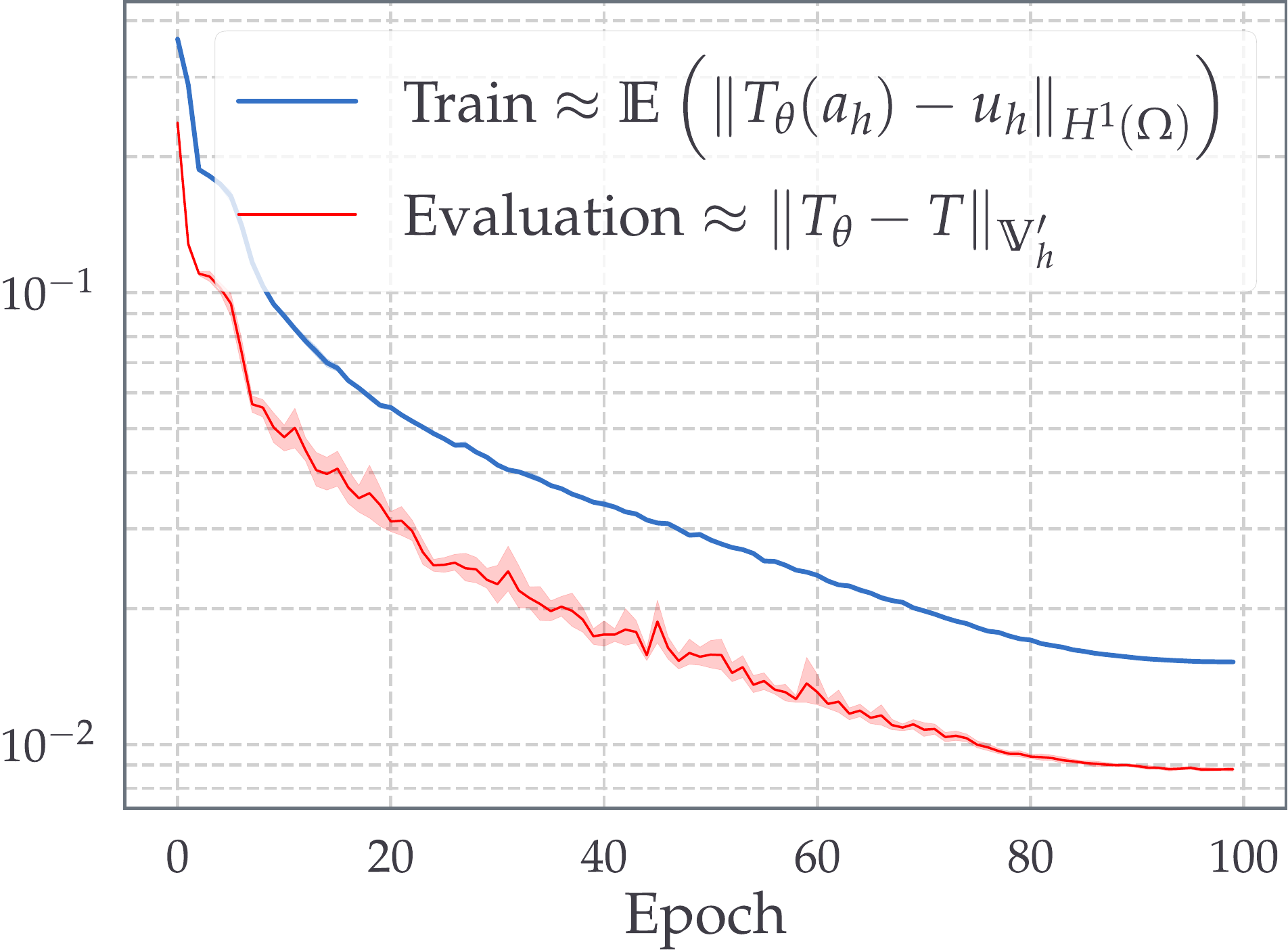}
  \caption{\label{fig:ex2-conv}}
\end{subfigure}%
\hspace*{0.2in}
\begin{subfigure}[b]{0.45\linewidth}
      \centering
      \includegraphics[width=0.97\linewidth]{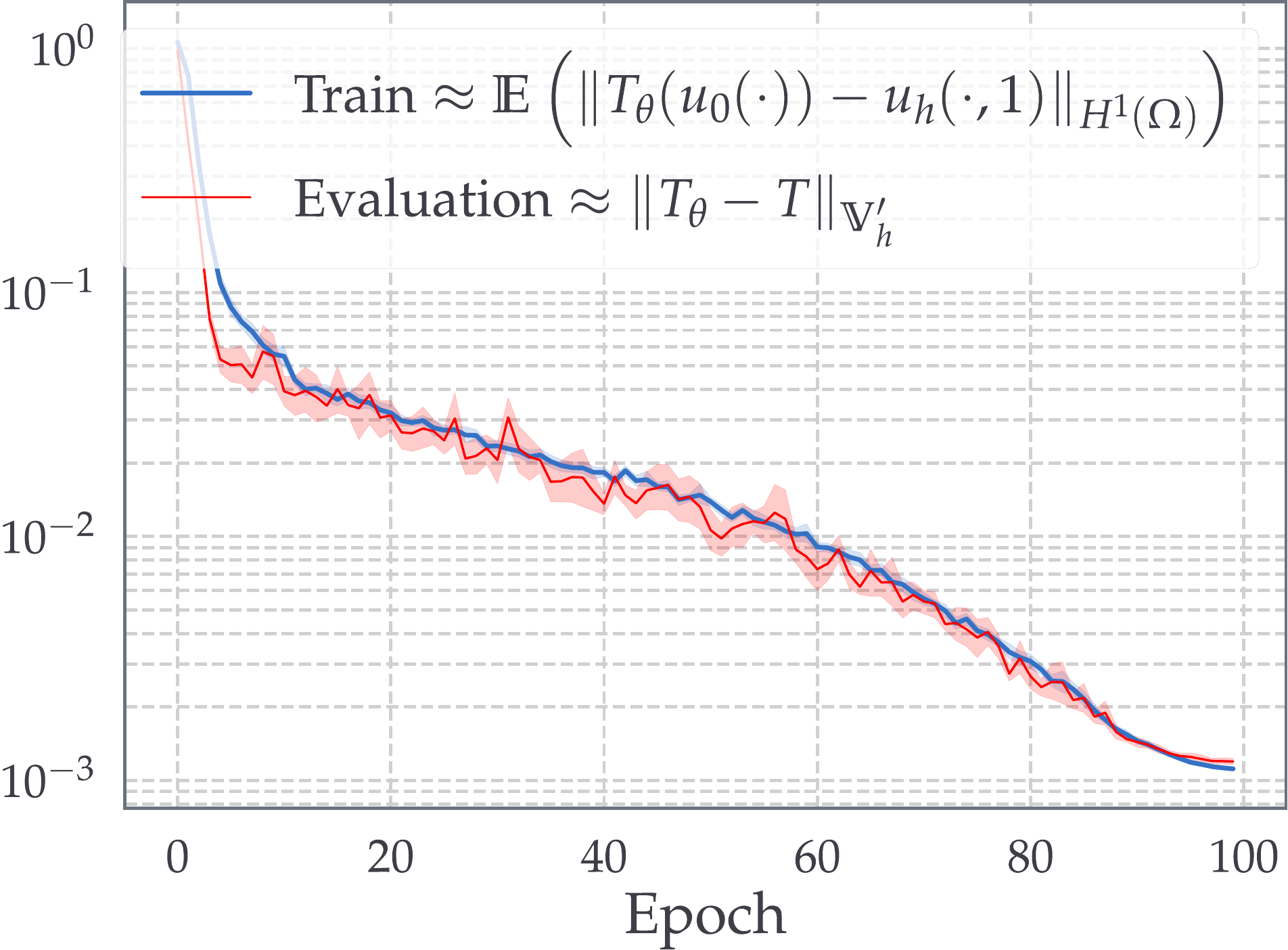}
      \caption{\label{fig:ex1-conv}}
\end{subfigure}
\end{center}
\caption{Typical training and evaluation convergences of the Galerkin Transformer. (\subref{fig:ex2-conv}) example \ref{sec:darcy}: 10 different seeds for the model initialization; (\subref{fig:ex1-conv}) example \ref{sec:burgers}: 10 different seeds in both model initializations and the train loader; the discrete norm of a nonlinear operator $\|N\|_{\bb{V}_h'} := \sup_{v\in \bb{V}_h} \|N(v)\|_{\mathcal{H}}/\|v\|_{\mathcal{H}}$ is defined similarly to that of a linear operator. The $H^1$-seminorm part in the $H^1$-norm shown in figures is weighted by $\gamma$ from Section \ref{sec:operator-learning}.} 
\label{fig:conv}
\end{figure}


\subsection{Experimental details for Example \ref{sec:burgers}}
\label{sec:appendix-burgers}
\paragraph{Data preparation.} The following problem is considered in Example \ref{sec:burgers}: 
\begin{equation}
\label{eq:burgers}
\begin{cases}
\displaystyle {\partial_t u} + u \partial_x u
=\nu {\partial_{xx} u} \quad \text{ for } (x,t)\in (0,1) \times (0,1],
\\[5pt]
u(x, 0) = u_0(x) \text{ for } x\in (0,1),
\end{cases}
\end{equation}
and it is assumed that $u_0 \in C^0_{p} (\Omega) \cap L^2(\Omega)$. The operator to be learned is:
\[
T:  C^0_{p} (\Omega) \cap L^2(\Omega) \to C^0_{p} (\Omega) \cap H^1 (\Omega), \quad 
u_0(\cdot) \mapsto u(\cdot, 1).
\]
Following \cite{Li.Kovachki.ea:2021Fourier}, the initial data are prepared using a Gaussian Random Field (GRF) simulation $\sim \mathcal{N}(0, 25^2 (-\Delta + 25I)^{-2})$, and the system is solved using the \texttt{chebfun} package \cite{Driscoll.Hale.ea:2014Chebfun} with a spectral method using a very fine time step $\delta t \ll 1$ for the viscosity $\nu = (2\pi)^{-1}0.1$ on a grid with $n=8192$. Solutions and initial conditions in other two resolutions ($n=512,\; 2048$) are downsampled from this finest grid. Therefore, the discrete approximation the model learns is: for $h\in \{2^{-9}, 2^{-11}, 2^{-13}\}$
\[
T_h: \bb{X}_h \to \bb{X}_h,\quad I_h u_0(\cdot) \mapsto \Pi_h u_{h_{*}} (\cdot, 1),
\]
where $\bb{X}_h\simeq \R^n$ denotes a function space of which the degrees of freedom are defined on the grid with mesh size $h$, such as the space of piecewise linear functions. $I_h (\cdot)$ denotes the nodal value interpolation, and $\Pi_h(\cdot)$ denotes the downsampling operator from the space on the finest grid with mesh size $h_* = 2^{-13}$ to $\bb{X}_h$.

\begin{figure}[htb]
\centering
\includegraphics[width=0.8\textwidth]{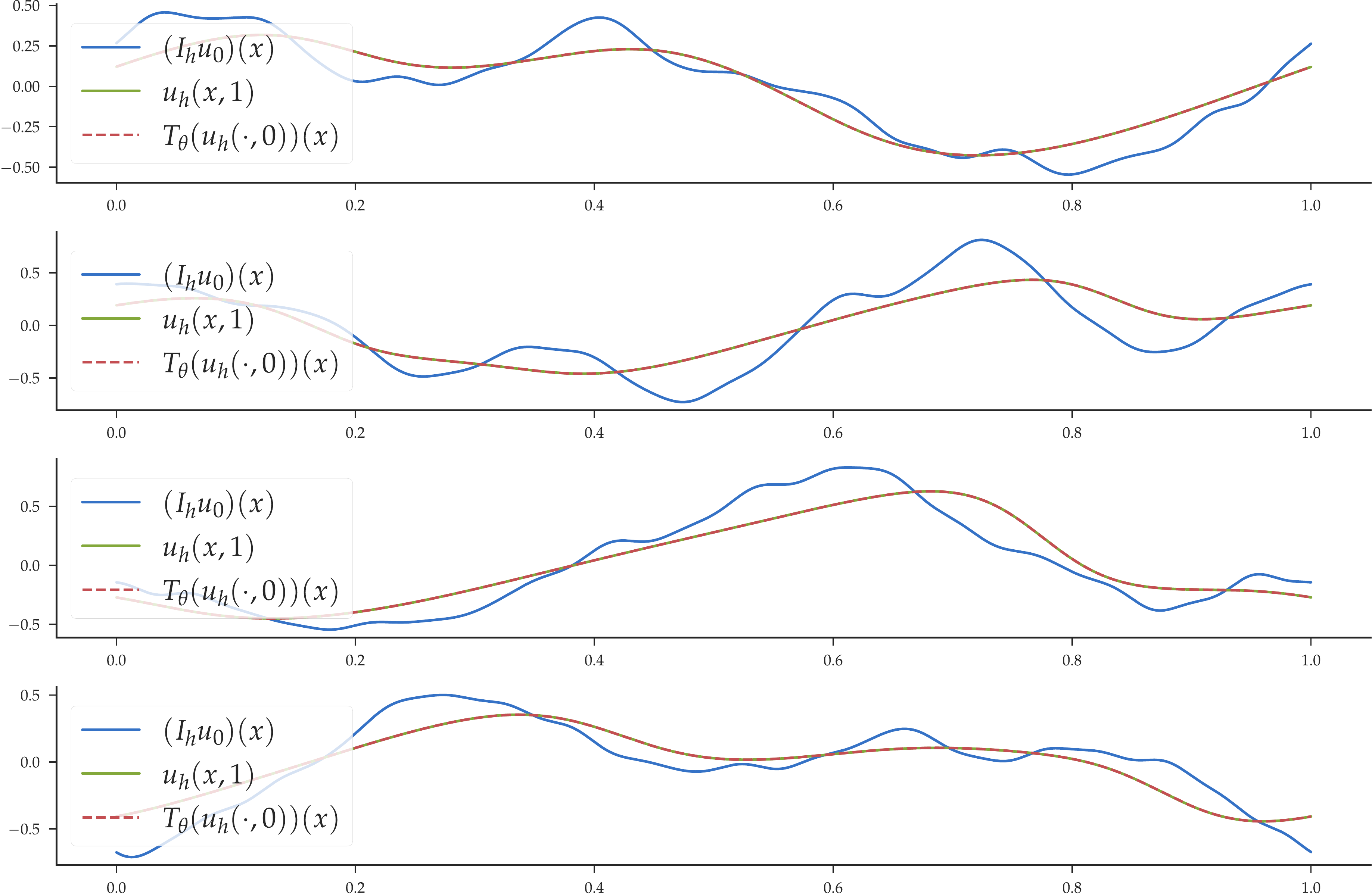}
\caption{Evaluation results for 4 randomly chosen samples in the test set; the average relative error $=1.079\times 10^{-3}$.}
\label{fig:burgers-init}
\end{figure}

\paragraph{Effect of the diagonal initialization.}
When using the regular layer normalization rule
\eqref{eq:attention-ln} to train the GT models (which is equivalent to the efficient attention proposed in \cite{Shen.Zhang.ea:2021Efficient}), 
it fails to converge under the \texttt{1cycle} learning rate scheduling with the $0.05$ dropout in FFN and $0.0$ dropout for the attention weights. 
We observe that the evaluation metric peaks after the warmup phase ($\approx 2\times 10^{-2}$ around epoch 30). 
The GT using the new rule \eqref{eq:attention-galerkin} is convergent almost unconditionally under the same initialization, which reaches the common $\approx 1\times 10^{-3}$ range 
if the diagonal initialization is employed 
(e.g., see Figure \ref{fig:ex1-conv}). 
The result reported in Table \ref{table:burgers} for GT with the regular layer normalization is obtained through imposing a $0.1$ dropout for the attention weights. 
A more detailed comparison can be found in Table \ref{table:burgers-init}. 
The training becomes divergent for a certain model if the best epoch shown in the table is not around 100. We conjecture that the success of the diagonal initialization is due to the input (initial conditions, blue curves in Figure \ref{fig:burgers-init}) being highly correlated spacially with the target (solutions at $t=1$, green curves in Figure \ref{fig:burgers-init}), despite the highly nonlinear mapping. 
Thus, in the encoder layers, what the attention operator learned is likely to be a perturbation of the identity operator in the latent Hilbert space, if a suitable basis can be found for $\bb{V}_h$ (or $\bb{Q}_h$ in the linear variant).

\paragraph{Effect of the Galerkin projection-type layer normalization scheme.} 
Multiplying $u$ on both sides of \eqref{eq:burgers}, the energy law of the Burgers' equation can be obtained through an integration by parts on $\Omega:=(0,1)$ with the periodic boundary condition:
\begin{equation}
\label{eq:burgers-energy}
\langle \partial_t u, u\rangle +\langle\partial_x(u^2), u\rangle/2 =
\nu\langle \partial_{xx} u, u\rangle \implies 
\dd \left( \|u\|_{L^2(\Omega)}^2\right)/\dd t = -\nu \|\partial_x u\|_{L^2(\Omega)}^2.
\end{equation}
Consequently, once the initial condition is given, integrating \eqref{eq:burgers-energy} from $t=0$ to a
fixed future time yields how much the energy of a single instance of $u$ has decayed, and this is a deterministic quantity. This indicates that the scale-preserving property of the Galerkin projection-type layer normalization would potentially learn this decaying property resulted by the operator, thus outperforms the regular layer normalization scheme that normalizes each position's feature vector. 
We also note that using an instance normalization \cite{Ulyanov.Vedaldi.ea:2017Instance} may appear to be more sensible as it normalizes $\|\bm{v}^j\|$ to $1$ ($1\leq j\leq d$) after the $1/n$ weight, however, we find that opting for the instance normalization deteriorates the training's stability and the dropout needs to be further dialed up. For more heuristics of the Galerkin-type layer normalization scheme please refer to Section \ref{sec:appendix-cea-generalizations}.

\begin{table}
\caption{Ablation study: evaluation relative error ($\times 10^{-3}$) of the Galerkin Transformer with the regular layer normalization \eqref{eq:attention-ln} and the new one \eqref{eq:attention-galerkin}, using various types of initialization; the baseline model is the GT with the default Xavier uniform initialization. $\text{GT}_{R}$: layer normalization \eqref{eq:attention-ln}; $\text{GT}_{A}$: layer normalization \eqref{eq:attention-galerkin}; $\delta$: the weight added to the diagonal; $\eta$: the gain of Xavier uniform initialization; $\zeta$: dropout for the attention weights.}
\label{table:burgers-init}
\begin{center}
\resizebox{0.8\textwidth}{!}{%
  \begin{tabular}{lcccccc}\toprule
& \multicolumn{2}{c}{$n=512$} & \multicolumn{2}{c}{$n=2048$}
& \multicolumn{2}{c}{$n=8192$ ($b=4$)}
\\
\cmidrule(lr){2-3}
\cmidrule(lr){4-5}
\cmidrule(lr){6-7}
&  Rel.~err  & Best ep.  &  Rel.~err & Best ep. &  Rel.~err  & Best ep.
\\
\midrule
$\text{GT}_{R}$ (ET in \cite{Shen.Zhang.ea:2021Efficient})                            & $200.4$ & 86  & $208.4$ & 39 & $217.5$ & 28  
\\
$\text{GT}_{R}$, $\zeta=0.1$                & $206.4$ & 46  & $205.9$ & 59 & $207.0$  & 75  
\\
$\text{GT}_{R}$, $\eta=10^{-2}$             & $1.406$ & 99  & $21.38$ & 14 &  $17.75$ & 16  
\\
$\text{GT}_{R}$, $\eta=10^{-2}$, $\zeta=0.1$ & $16.85$  & 20 & $11.19$  & 21  & $2.571$  & 98  
\\
$\text{GT}_{R}$, $\eta=\delta=10^{-2}$      & $15.14$ & 35  & $1.512$ & 97 &  $15.98$ & 34  
\\
$\text{GT}_{R}$, $\eta=\delta=10^{-2}$, $\zeta=0.1$  & $2.181$   & 100  & $13.96$ & 19 & $2.331$ & 92
\\
$\text{GT}_{A}$                             & $10.06$  & 96  & $10.12$ & 99 & $9.129$  & 100
\\
$\text{GT}_{A}$, $\eta=10^{-2}$             & $1.927$ & 95 & $2.453$ & 100 & $1.689$ & 99 
\\
$\text{GT}_{A}$, $\eta=\delta=10^{-2}$      & \b{1.203} & 99 & \b{1.150} & 100 & \b{1.025} & 100 
\\
\bottomrule
\end{tabular}
}
\end{center}
\end{table}

\subsection{Experimental details for Example \ref{sec:darcy}}
\label{sec:appendix-darcy}

\paragraph{Data preparation.}
Example \ref{sec:darcy} considers another well-known benchmark problem used in 
\cite{Li.Kovachki.ea:2021Fourier,Nelsen.Stuart:2020Random,Li.Kovachki.ea:2020Multipole}. The Darcy flow in porous media $\Omega := (0,1)^2$, in which the diffusion coefficient $a \in L^\infty(\Omega): x \mapsto \R^+$ represents the permeability of the media and has a sharp contrast within the domain.
\begin{equation}
\left\{
\begin{aligned}
  -\nabla \cdot (a\nabla u) = f & \text{ in } \;\Omega,
\\
u = 0 & \text{ on } \partial \Omega.
\end{aligned}
\right.
\end{equation}
For each sample in the training and validation data, $a(x)$ is generated according to $a \sim \nu:=\psi_{\sharp} \mathcal{N}(0,(-\Delta + 9I)^{-2})$, where within the covariance $-\Delta$ is defined on $(0,1)^2$ and has homogeneous Neumann boundary conditions. The mapping $\psi : \R \to \R$ is constructed by
\[
\psi(\rho)=12 \mathds{1}_{(0, \infty)}(\rho)+ 3 \mathds{1}_{(-\infty, 0)}(\rho).
\]
Thus, the resulting coefficient $a$ follows a pushforward probability measure $\nu$, and takes values of $12$ and $3$ almost surely within $\Omega$. 
The geometry of the interface exhibits a random pattern in $\Omega$ (see Figure \ref{fig:ex2-interface}). 
The forcing $f$ is fixed as $f \equiv 1$.

\begin{figure}[htbp]
\begin{center}
\begin{subfigure}[b]{0.4\linewidth}
      \centering
      \includegraphics[height=1.2in]{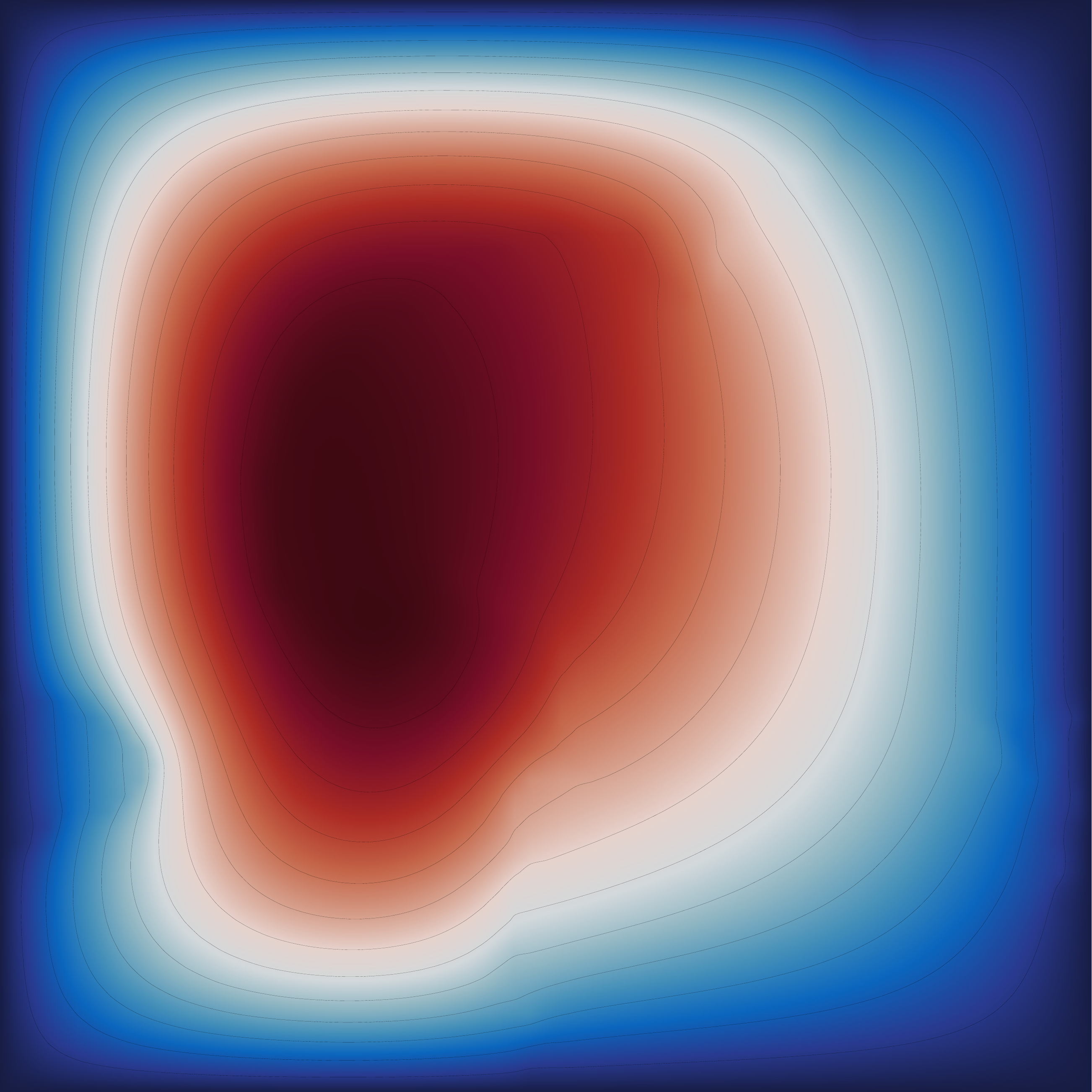}
      \caption{\label{fig:ex2-solution}}
\end{subfigure}
\;
\begin{subfigure}[b]{0.4\linewidth}
      \centering
      \includegraphics[height=1.2in]{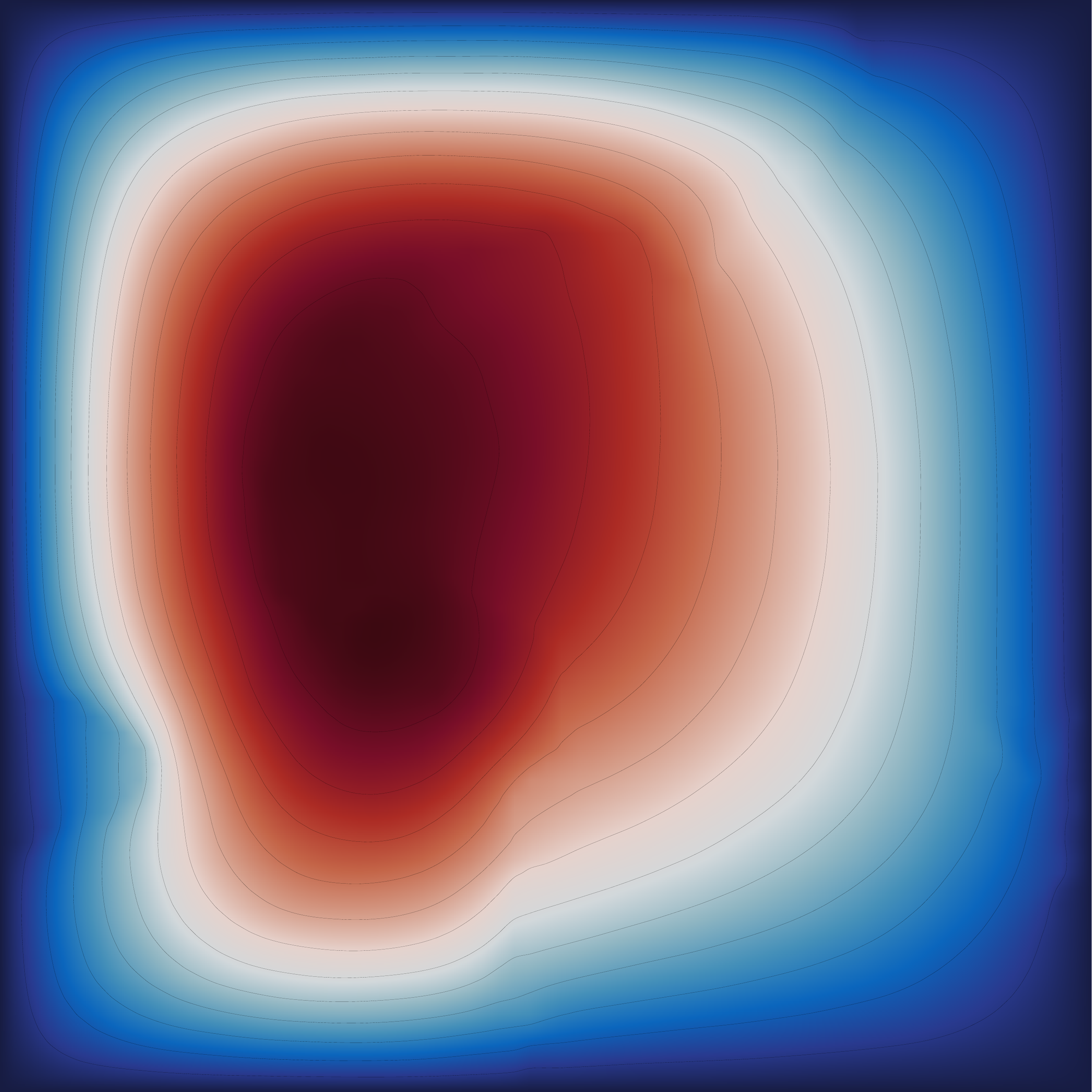}
      \caption{\label{fig:ex2-preds}}
\end{subfigure}
\\
\begin{subfigure}[b]{0.4\linewidth}
  \centering
  \hspace*{0.2in}\includegraphics[height=1.15in]{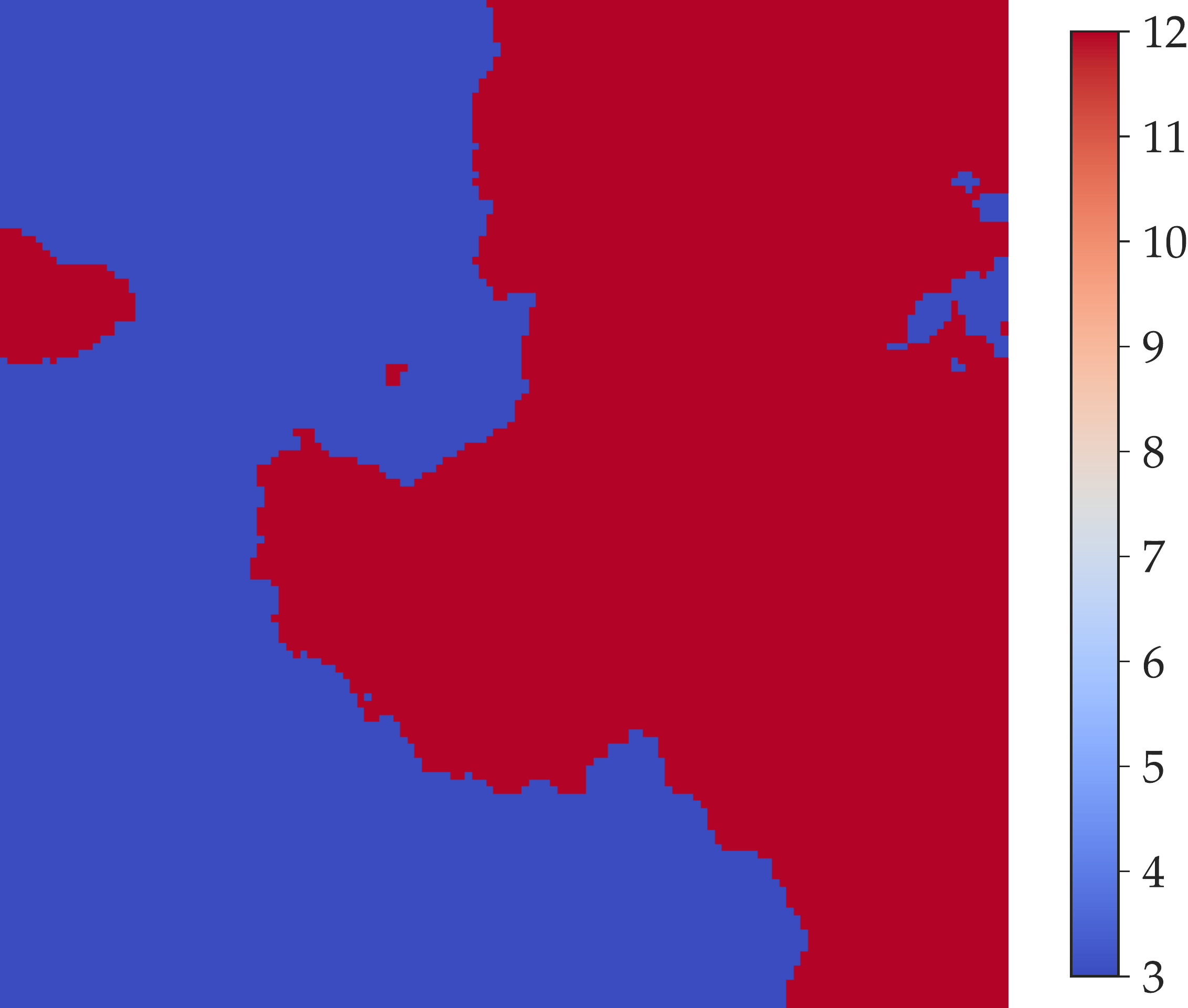}
  \caption{\label{fig:ex2-interface}}
\end{subfigure}%
\quad 
\begin{subfigure}[b]{0.4\linewidth}
  \centering
  \hspace*{0.3in}\includegraphics[height=1.15in]{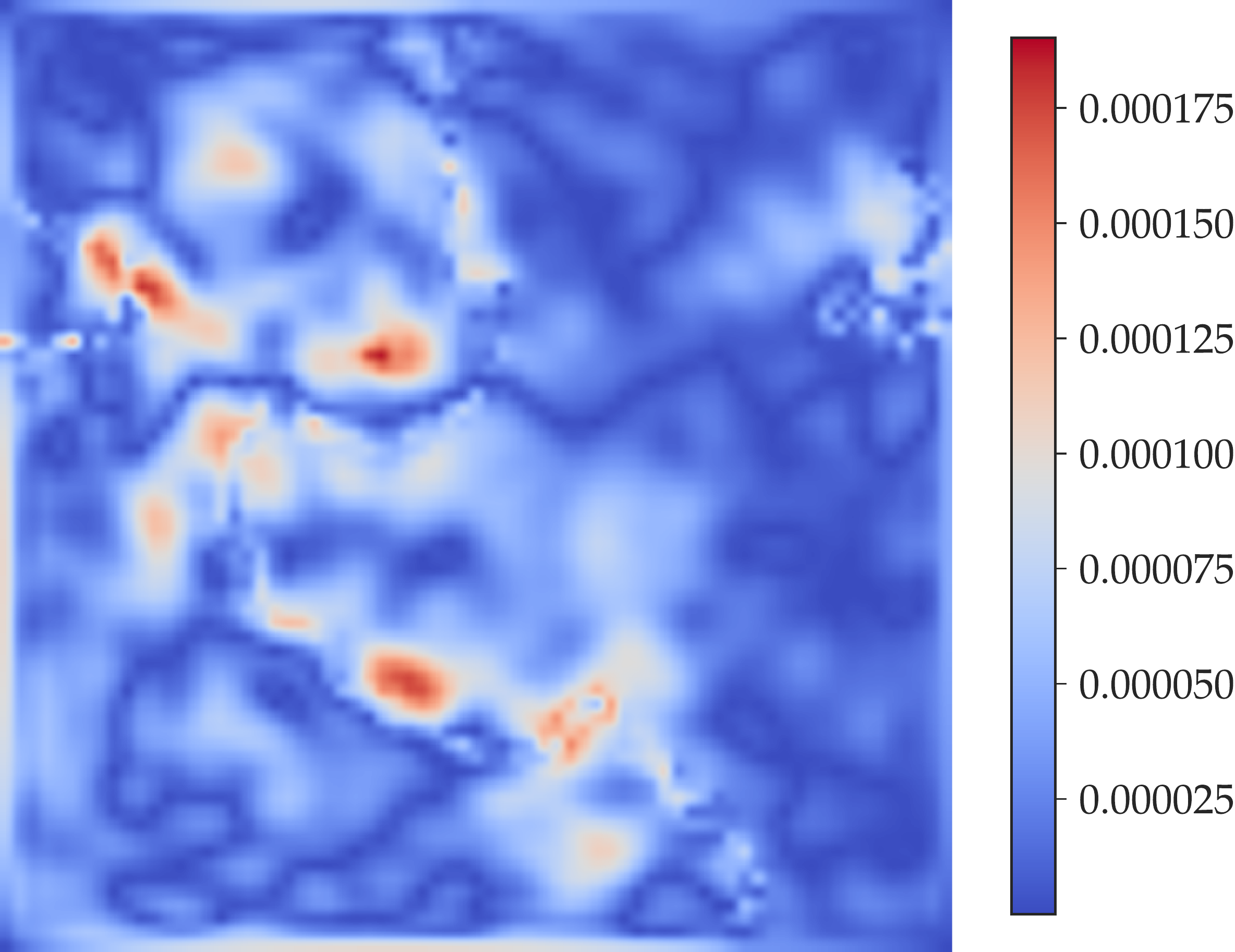}
  \caption{\label{fig:ex2-error}}
\end{subfigure}%
\end{center}
\caption{Interface Darcy flow in example \ref{sec:darcy}: a randomly chosen sample from the test dataset. (\subref{fig:ex2-solution}) the target being the finite difference approximation to the solution on a very fine grid; (\subref{fig:ex2-preds}) the inference approximation by model evaluation (relative $L^2$-error $6.454\times 10^{-3}$); (\subref{fig:ex2-interface}) the input $a(x)$; (\subref{fig:ex2-error}) the $L^{\infty}$-error distribution for the inference solution.}
\label{fig:ex2-darcy}
\end{figure}

The operator to be learned is between the diffusion coefficient and the unique weak solution:
\[
T: L^\infty(\Omega) \to H^1_0 (\Omega), \quad a\mapsto u.
\]
The finite dimensional approximation $u_h$'s are obtained using a 5-point stencil second-order finite difference scheme on a $421 \times 421$ grid. Therefore, the discrete operator $T_h$ to be learned is:
\begin{equation}
\label{eq:darcy-operator}
  T_h: \bb{A}_{h} \mapsto \bb{V}_h, \quad a_h \mapsto \Pi_h u_{h^*},
\end{equation}
where $\bb{A}_h$ and $\bb{V}_h$ are function spaces of which the degrees of freedom are defined on the fine grid points with mesh size $h$, $h^*=1/421$ is the finest grid size, and $a_h:=I_h a$ and $\Pi_h(\cdot)$ are defined accordingly in a similar fashion with Example \ref{sec:burgers} explained in Appendix \ref{sec:appendix-burgers}. Following the practice of \cite{Li.Kovachki.ea:2021Fourier}, a non-trainable Gaussian normalizer is built in the network to transform the input and the target to be $\sim \mathcal{N}(0,1)$ pointwisely on each grid point. 

By choosing $\bb{V}_h$ as the standard bilinear Lagrange finite element on a uniform Cartesian grid on $\Omega$, a standard summation by parts argument for $-\Delta_h$ and the discrete Poincar\'{e} inequality guarantees the well-posedness of problem using the Lax-Milgram lemma, i.e., given $a_h \in \bb{A}_h \simeq \R^{n_f\times n_f}$, the linear system of the $-a_h\Delta_h (\cdot)$ discretization has a unique solution $u_h$. Even though the inversion of the stiffness matrix in resulting linear system is a linear problem, the mapping in \eqref{eq:darcy-operator} is highly nonlinear between two spaces isomorphic to $\R^{n_f\times n_f}$.

\paragraph{Limitations.} We acknowledge that our method, despite surpassing the current best operator learner's benchmark evaluation accuracy, 
still does reach the accuracy of traditional discretization-based methods that aim to best the approximation for a single instance. For example, in Figure \ref{fig:ex2-error}, the error is more prominent at the location where the coefficient has sharp contrast. 
How to incorporate the adaptive methods (allocating more degrees of freedom based on the a posteriori local error) to data-driven operator learners will be a future study.

\subsection{Experimental details for Example \ref{sec:darcy-inverse}}
\label{sec:appendix-darcy-inv}
\paragraph{Inverse problems.}  
Playing a central role in many practical applications, the inverse problems are a class of important tasks in many scientific disciplines \cite{Kirsch:2011Introduction}. The problem summarizes to using the measurements to infer the material/physical coefficients.
In almost all cases, the inverse problem is much harder than solving the forward problem (e.g., solving for $u$ in problem $L_a(u) = f$), as the mapping from the solution (measurements) back to the coefficient is much less stable than the forward operator due to a much bigger Lipschitz constant. As a result, the inverse operator amplifies noises in measurements by a significant amount. 
For example, by \cite[Theorem 5.1]{Alessandrini:1986identification}, the error estimate of coefficient reconstruction indicates that in order that coefficient can be recovered, the measurements have to reach an accuracy with an error margin under $O(h)$ where $h$ denotes the mesh size. 
Meanwhile, standard iterative techniques that construct $a_{\epsilon}$ to approximate $a$ relies on the regularity of the coefficient $a$ itself \cite[Section 3]{Alessandrini:1986identification}. 
This regularity assumption is largely violated in our problem setting, as the $a$ has sharp material interfaces (see Figure \ref{fig:ex2-interface}), has no extra regularity, and is only in $L^{\infty}$. 

Having the measurements on the discrete grid, the ideal goal is to learn the following inverse map $T_h$, 
\begin{equation}
  T_h: \bb{X}_h \mapsto \bb{A}_{h}, \quad \Pi_h u_{h^*} \mapsto a_{h},
\end{equation}
However, in practice the measurements (solution) could have noise. 
Therefore, we aim to learn the following discrete operator in this example, i.e., to reconstruct the coefficient on the coarse grid based on 
a noisy measurement on the fine grid. We note that this operator is not well-posed anymore due to noise.
\begin{equation}
\label{eq:darcy-operator-inverse}
  T_h: \bb{X}_{h_f} \mapsto \bb{A}_{h_c}, 
\quad u_{h_f} + \epsilon \nu_{h_f} \mapsto \Pi_{h_c} a_{h},
\end{equation}
where $h_c, h_f$ denotes the mesh size of the coarse grid $n_c\times n_c$, and the fine grid $n_f\times n_f$, respectively. $\Pi_{h_c}$ denotes a map that restricts a function $\bb{X}_h$ defined on $n_f\times n_f$ to $n_c\times n_c$.
$ \epsilon$ is the strength of the noise, and $\nu_{h_f}(x_i)\sim \mathcal{N}(0, c_i)$ where $c_i$ is the variance of all training samples' value at $x_i$. If $\epsilon=0.1$, we have 10\% of noise in the solution measurements for the training and testing data (see Figure \ref{fig:ex3-darcy-inv}).
\begin{figure}[htbp]
\begin{center}
\begin{subfigure}[b]{0.32\linewidth}
      \centering
      \includegraphics[height=1.2in]{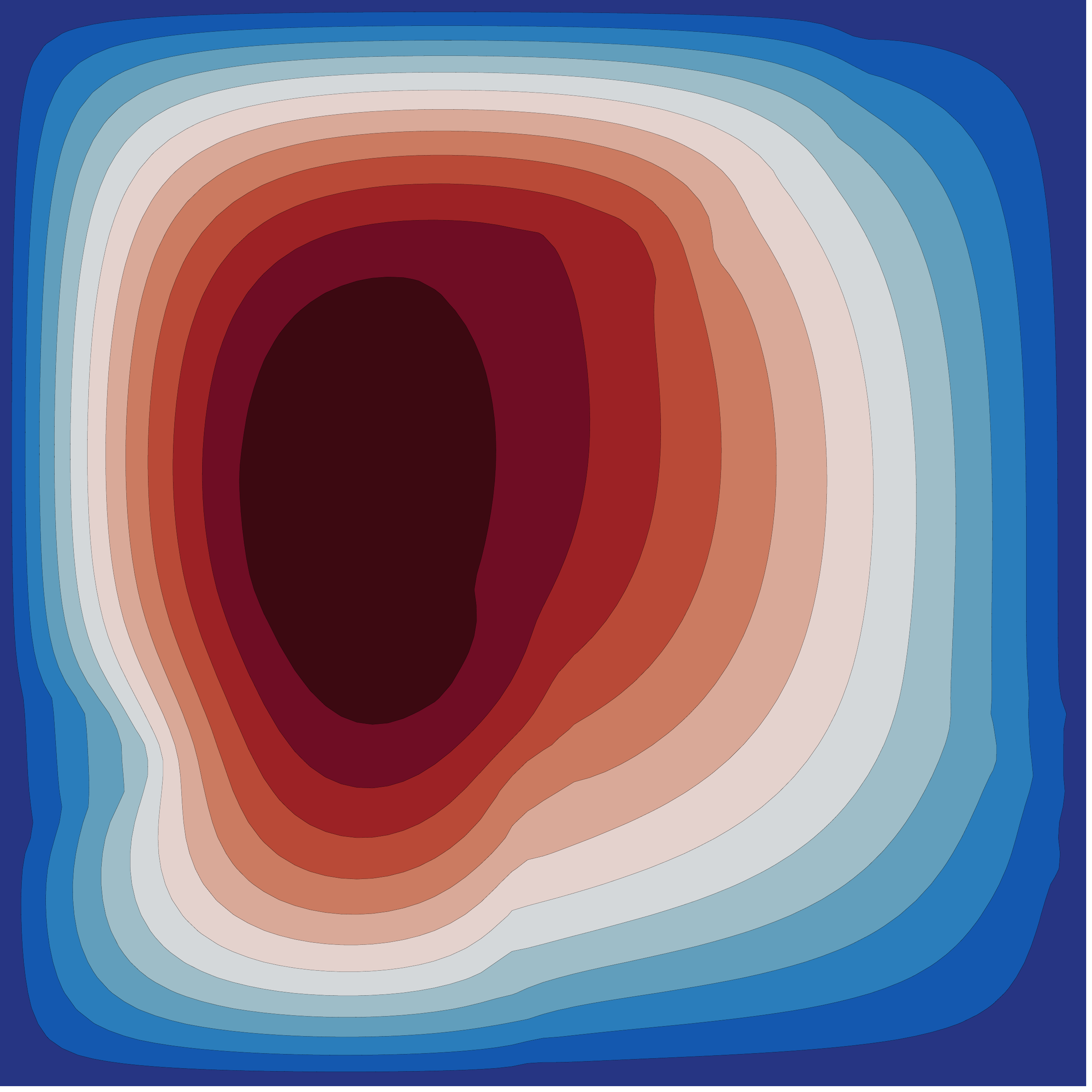}
      \caption{\label{fig:ex3-soln-0.0}}
\end{subfigure}
\;
\begin{subfigure}[b]{0.32\linewidth}
      \centering
      \includegraphics[height=1.2in]{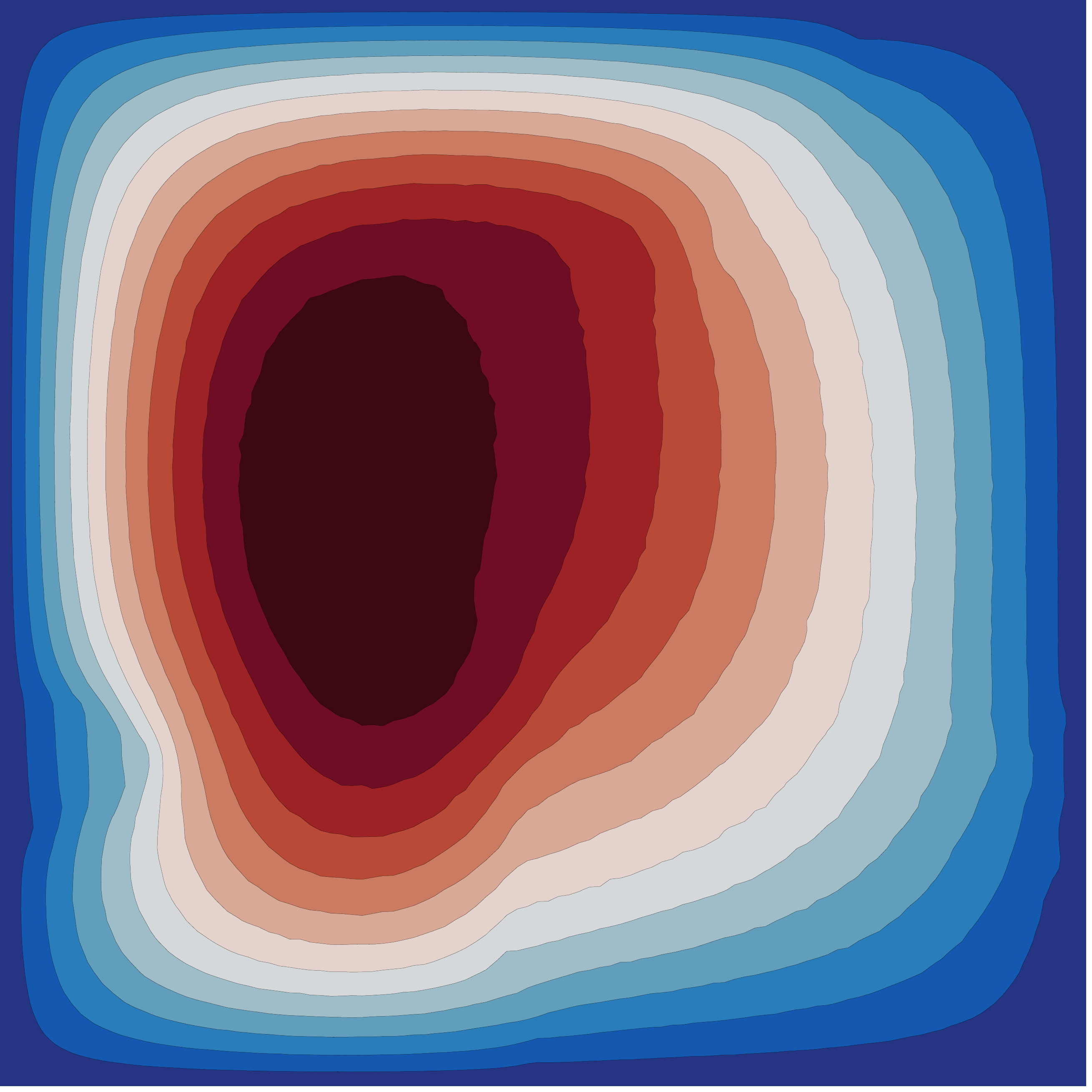}
      \caption{\label{fig:ex3-soln-0.01}}
\end{subfigure}
\;
\begin{subfigure}[b]{0.32\linewidth}
  \centering
  \includegraphics[height=1.2in]{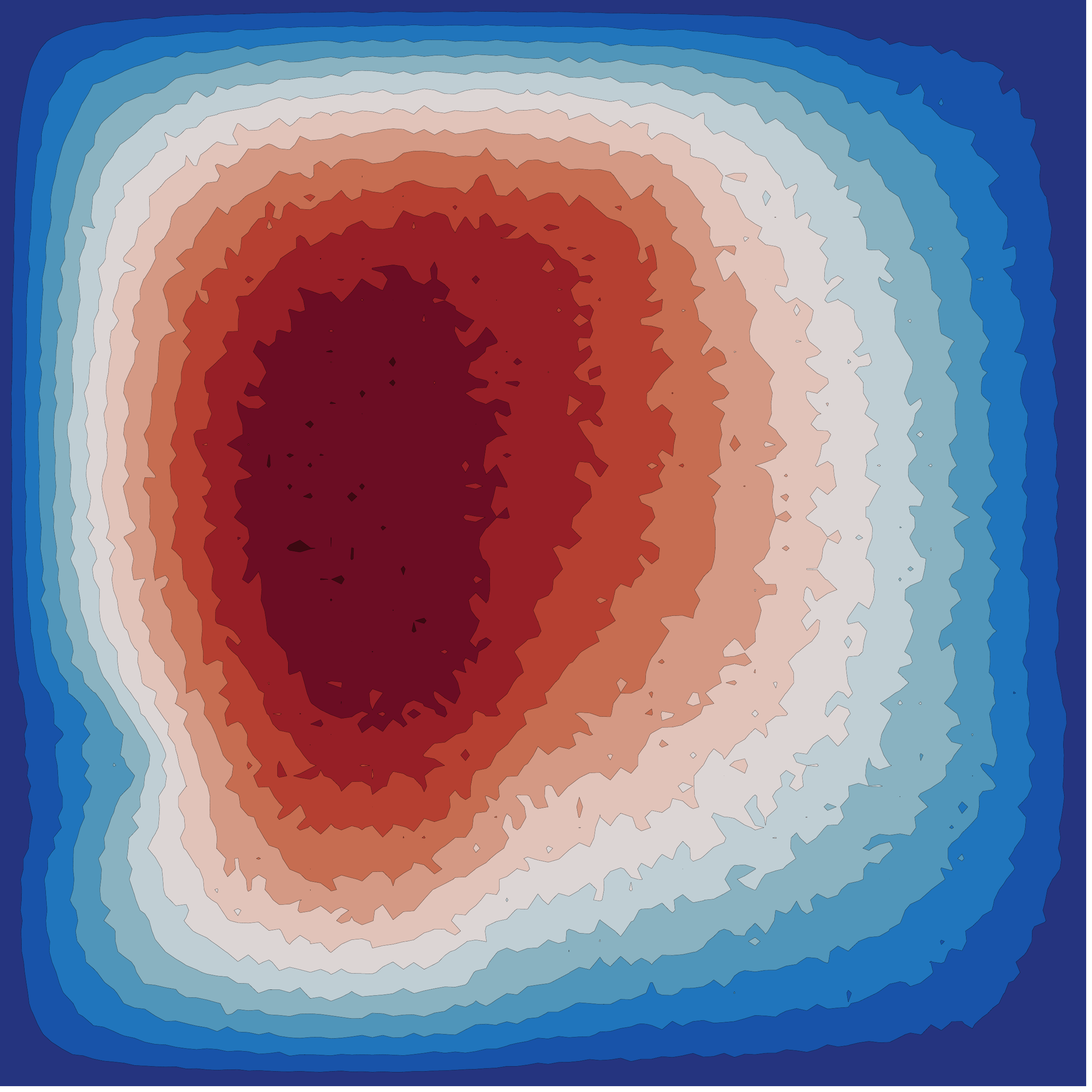}
  \caption{\label{fig:ex3-soln-0.1}}
\end{subfigure}%
\\ 
\begin{subfigure}[b]{0.32\linewidth}
  \centering
  \hspace*{0.1in}\includegraphics[height=1.16in]{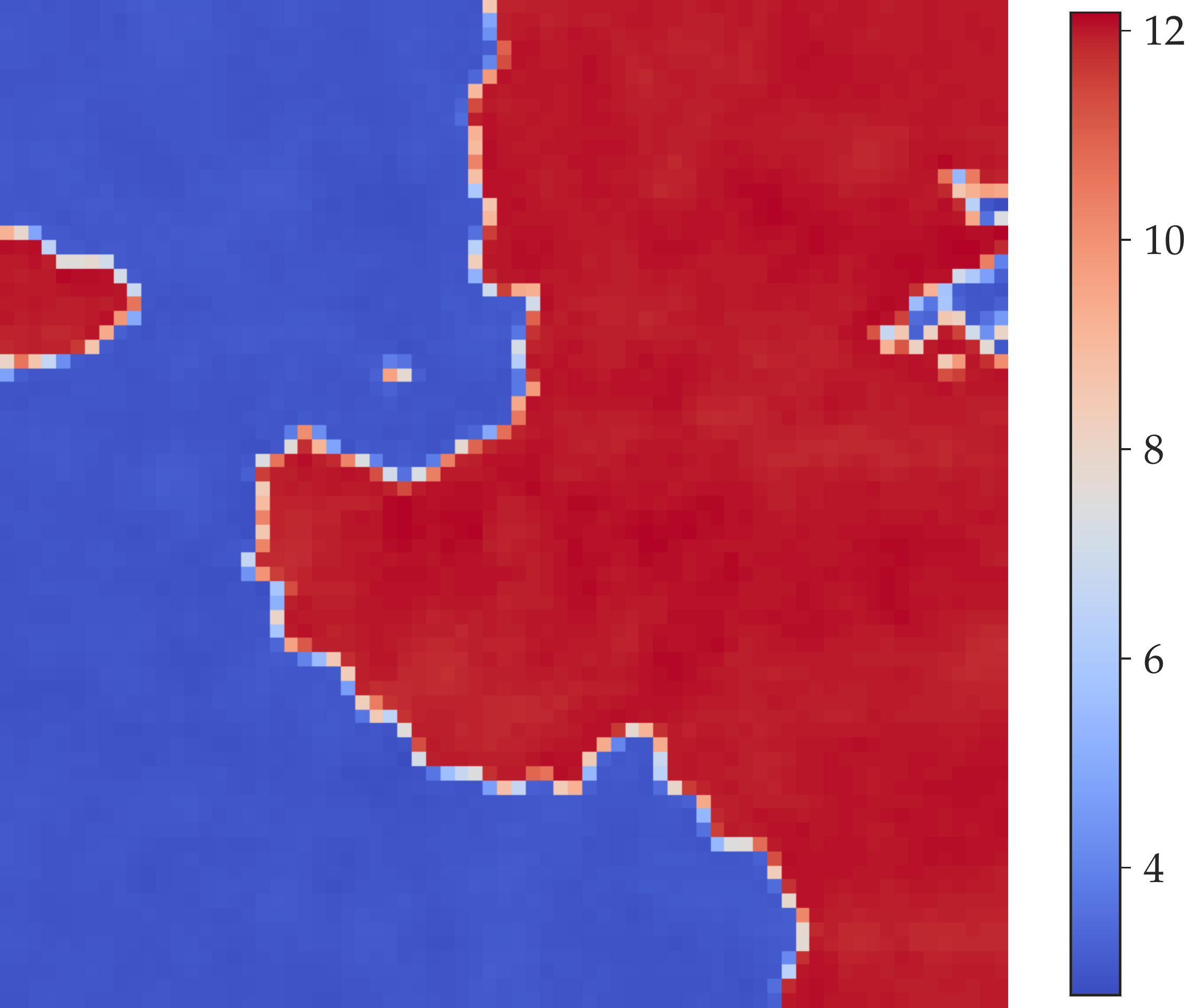}
  \caption{\label{fig:ex3-pred-0.0}}
\end{subfigure}%
\;
\begin{subfigure}[b]{0.32\linewidth}
  \centering
  \hspace*{0.15in}\includegraphics[height=1.16in]{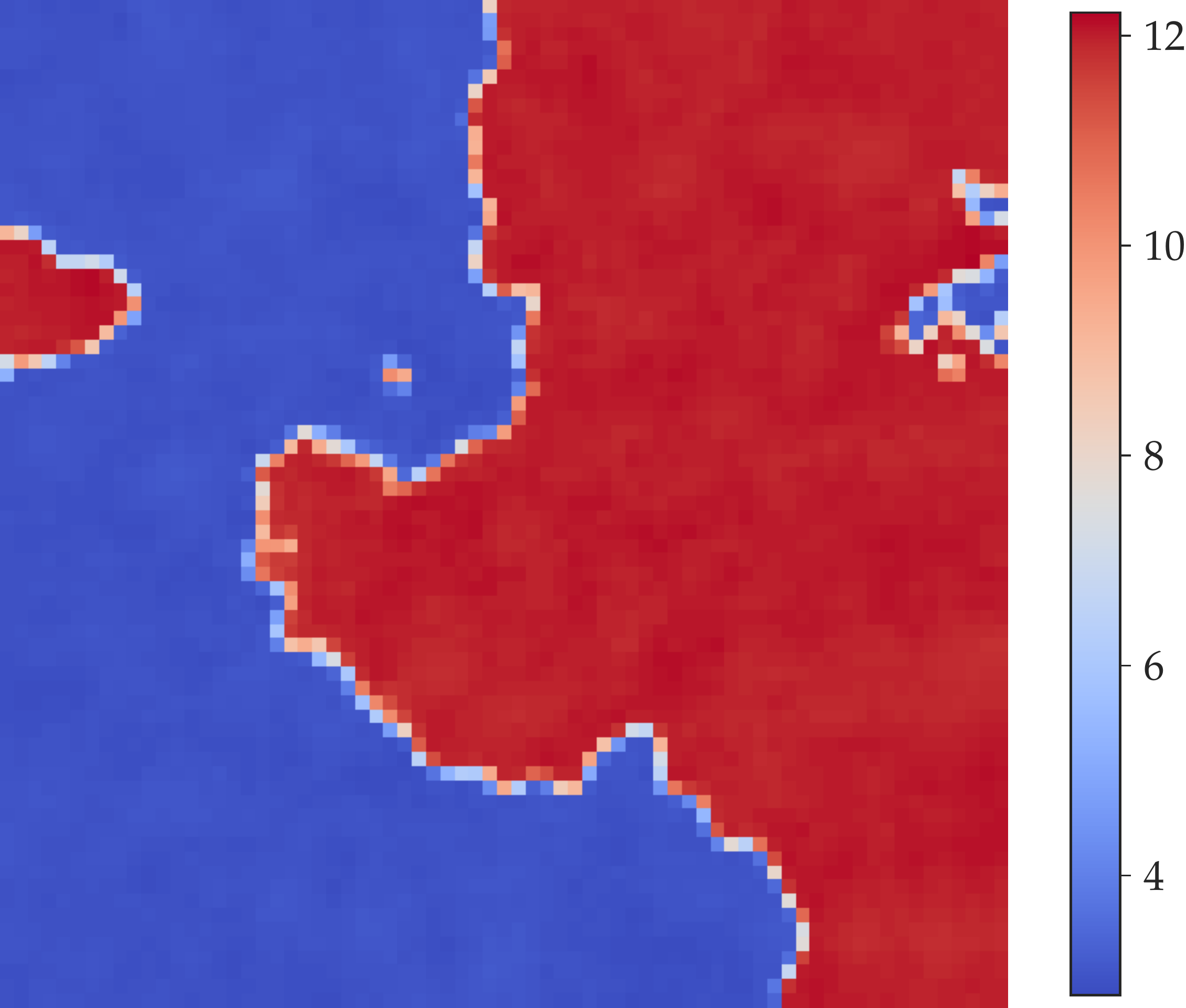}
  \caption{\label{fig:ex3-pred-0.01}}
\end{subfigure}%
\;
\begin{subfigure}[b]{0.32\linewidth}
  \centering
  \hspace*{0.18in}\includegraphics[height=1.16in]{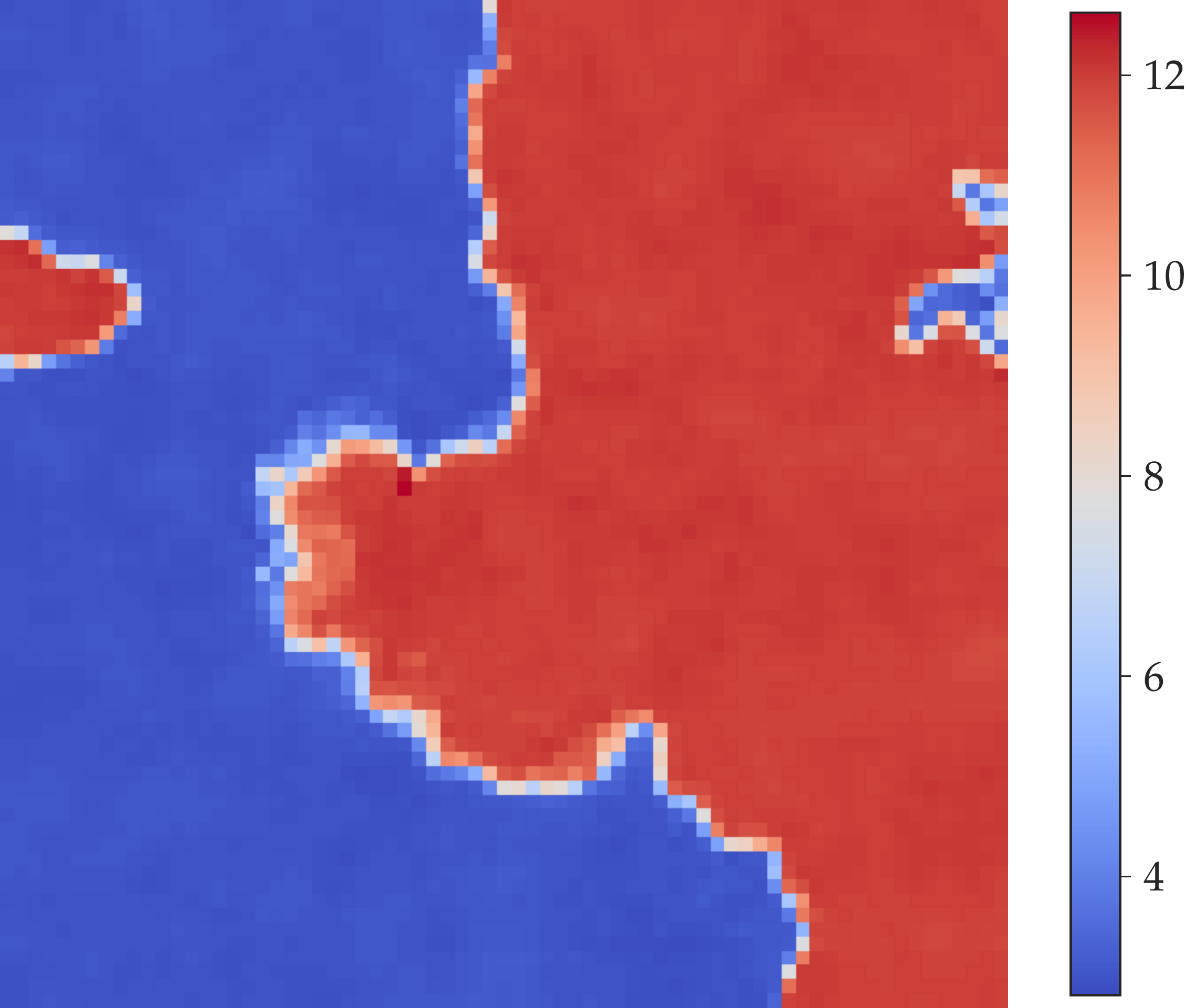}
  \caption{\label{fig:ex3-pred-0.1}}
\end{subfigure}%
\end{center}
\caption{Galerkin transformer evaluation for the inverse interface coefficient identification problem using the same sample with Figure \ref{fig:ex2-darcy}, model trained and evaluated under the same amount of noises: (\subref{fig:ex3-soln-0.0})--(\subref{fig:ex3-soln-0.1}) the input $u_h(x)$ with noise level $0$, $1\%$, and $10\%$ on $211\times 211$ grid; (\subref{fig:ex3-pred-0.0})--(\subref{fig:ex3-pred-0.1}) the recovered coefficient through evaluation with noise level $0$, $1\%$, and $10\%$ on $71\times 71$ grid with relative error being $0.0160$, $0.0292$, and $0.0885$, respectively.}
\label{fig:ex3-darcy-inv}
\end{figure}

\paragraph{Why not fine grid reconstruction?}
The reason we can only reconstruct the coarse grid coefficient is as follows. Since we use an upsampling interpolation from the coarse to fine grids, a limitation of the 2D operator learner structure in Figure \ref{fig:cnn-ft} is that it can approximate well if the target is smooth, and consists most combination of basis functions of lower frequencies. The low-frequency part, which can be roughly interpreted as the general trends, of the solution can be well-resolved by the coarse grid, then the operator learner benefits from the smoothing property of the operator a priori, as well as the approximation property of the interpolation operator. If the high frequency part of the target is prevailing due to low regularity (such as $L^{\infty}$), the model can only resolve the frequency up to of grid size $n_c\times n_c$ as the upsampling interpolation loses the approximation order (prolongation error estimate from coarse to fine grids, see e.g., \cite[Chapter 6.3]{Hackbusch:2013Multi}). 

\paragraph{More limitations.} Moreover, we do acknowledge another limitation of the proposed operator learner: it suffers from a common foe in many applications of deep learning, the instability with respect to the noise during evaluation. If the model is trained with clean data, in evaluation it becomes oversensitive to noises, especially in Example \ref{sec:darcy-inverse} due to the large Lipschitz constant in the original problem itself, which is further amplified by the black-box model. Therefore, we recommend adding certain amount of noises for inverse coefficient identification problems. See Figure~\ref{fig:ex3-darcy-inv-instable}.

\begin{figure}[htbp]
\begin{center}
\hspace*{0.03in}
\begin{subfigure}[b]{0.3\linewidth}
      \centering
      \hspace*{0.1in}\includegraphics[height=1.16in]{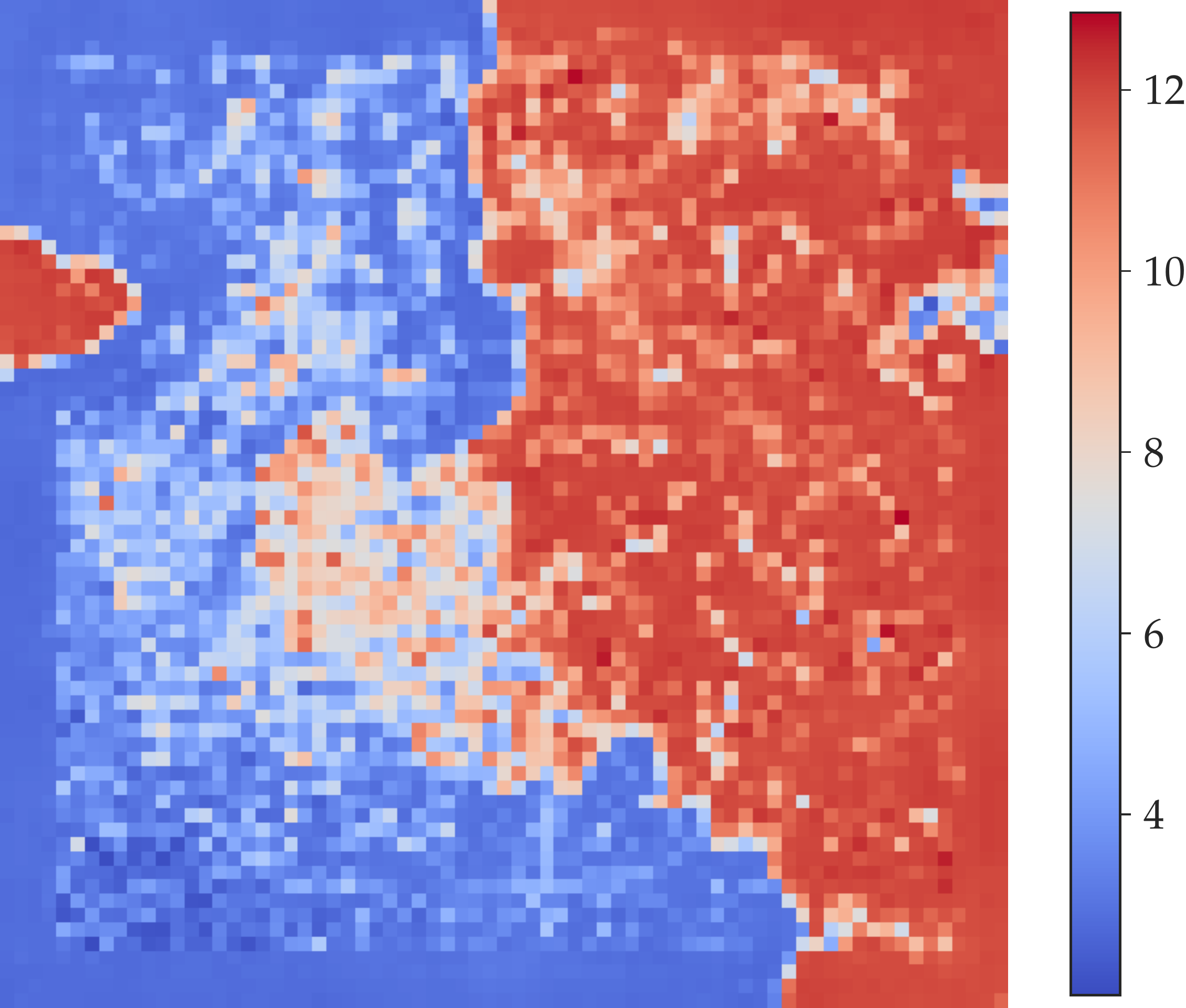}
      \caption{\label{fig:ex3-pred-0.01-train-0.0}}
\end{subfigure}
\hspace*{0.08in}
\begin{subfigure}[b]{0.3\linewidth}
      \centering
      \hspace*{0.08in}\includegraphics[height=1.16in]{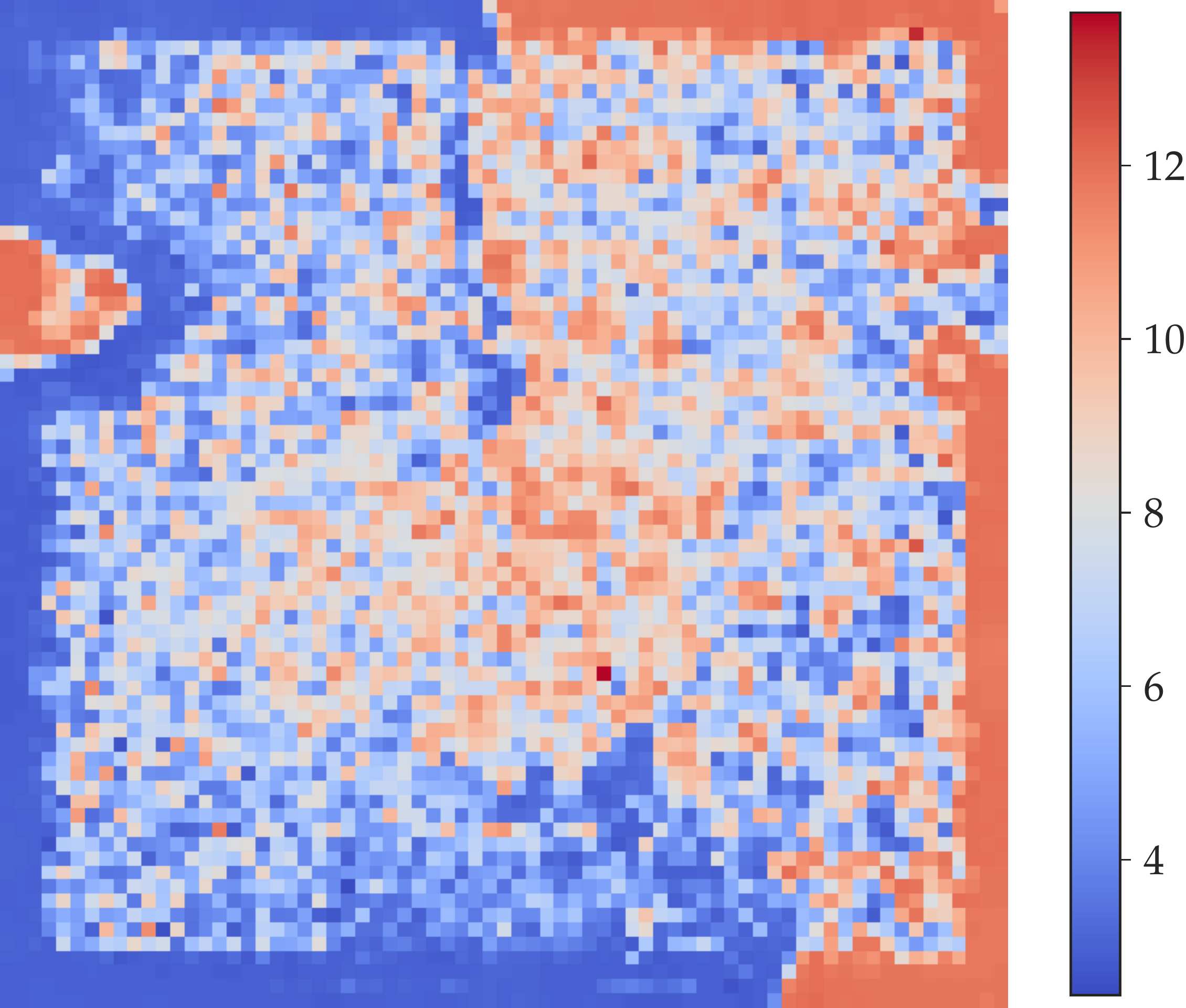}
      \caption{\label{fig:ex3-pred-0.02-train-0.0}}
\end{subfigure}
\hspace*{0.1in}
\begin{subfigure}[b]{0.3\linewidth}
  \centering
  \hspace*{0.1in}\includegraphics[height=1.16in]{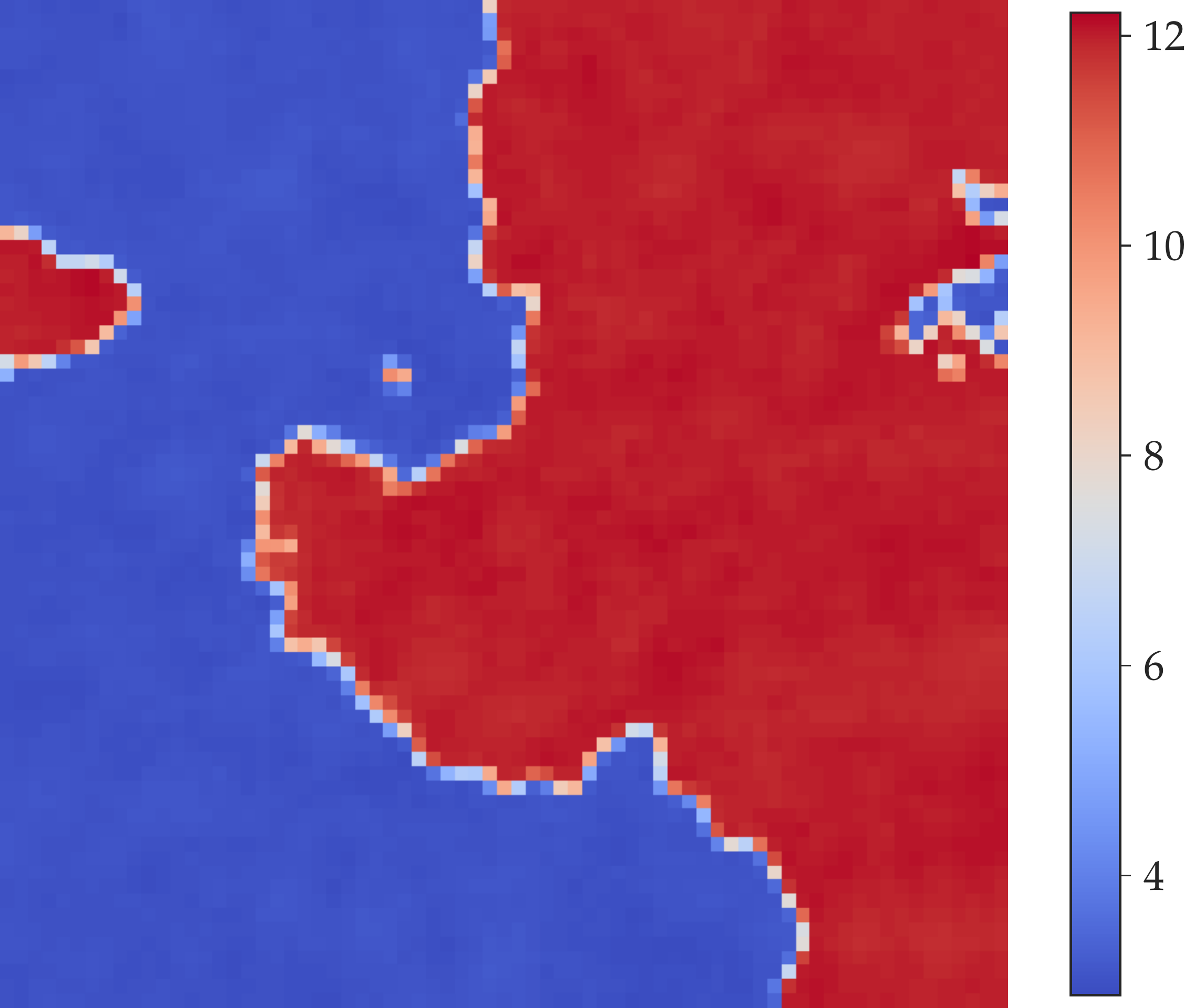}
  \caption{\label{fig:ex3-pred-0.0-train-0.01}}
\end{subfigure}%
\\
\begin{subfigure}[b]{0.3\linewidth}
  \centering
  \hspace*{0.1in}\includegraphics[height=1.16in]{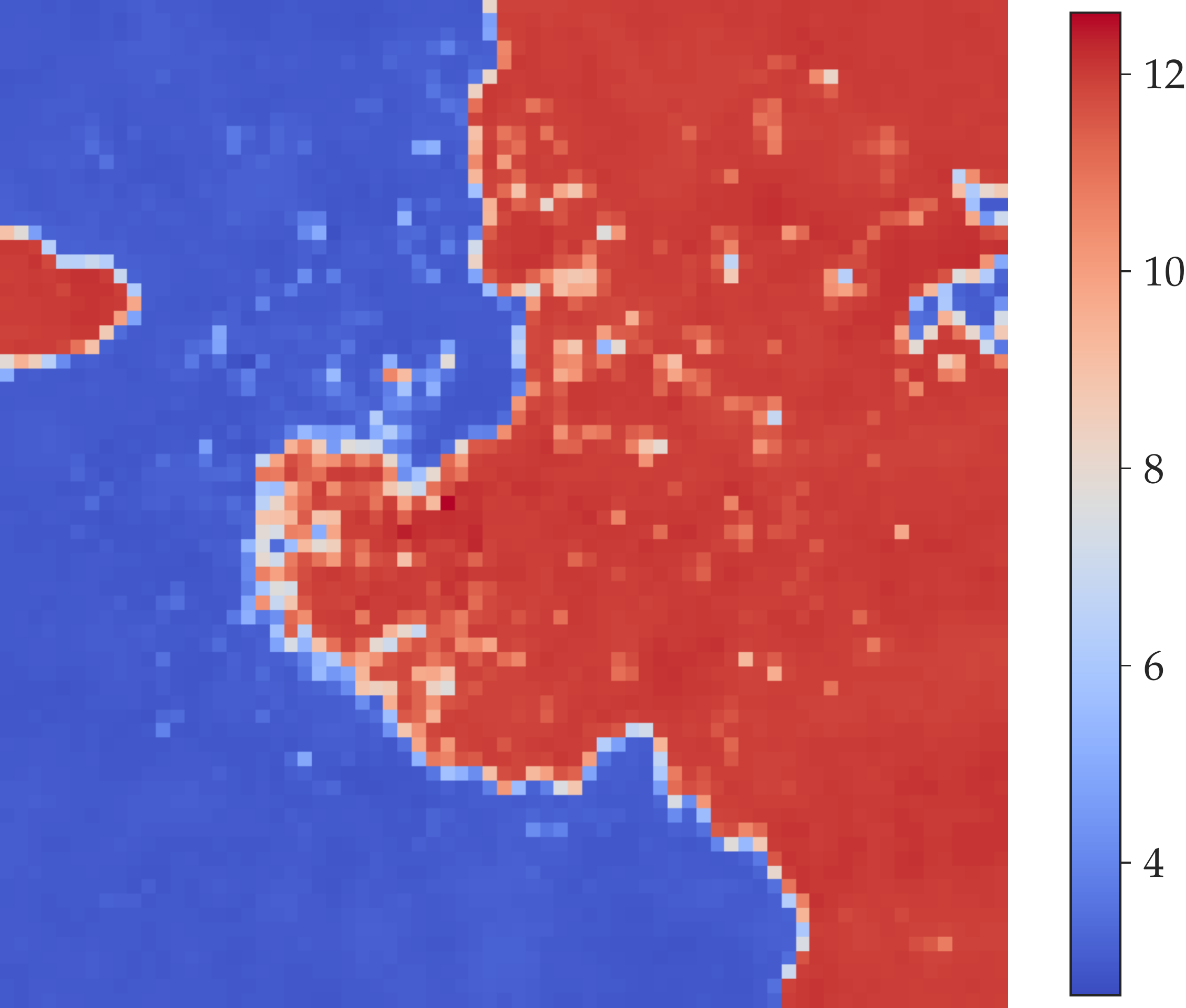}
  \caption{\label{fig:ex3-pred-0.02-train-0.01}}
\end{subfigure}%
\hspace*{0.1in}
\begin{subfigure}[b]{0.3\linewidth}
  \centering
  \hspace*{0.1in}\includegraphics[height=1.16in]{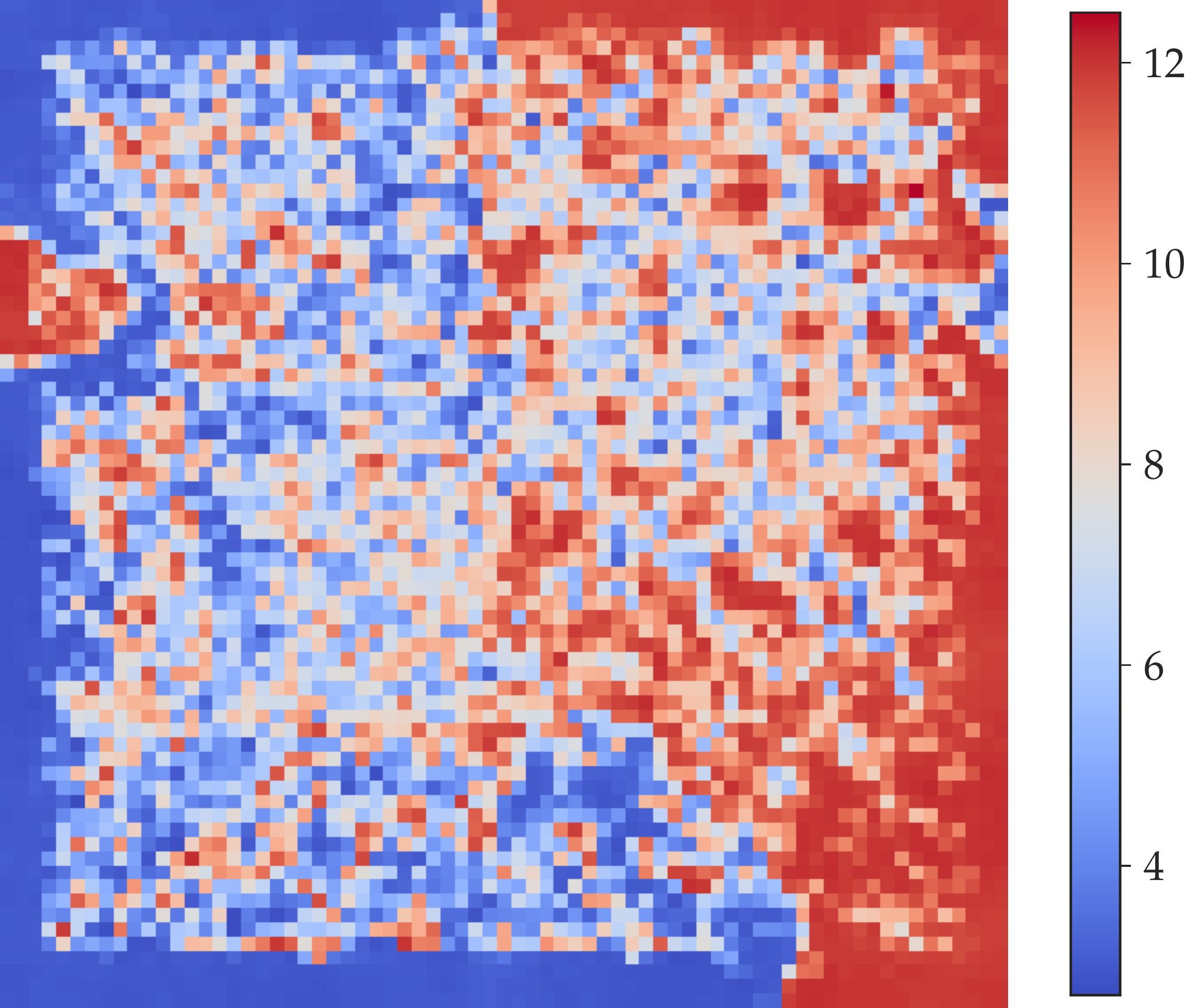}
  \caption{\label{fig:ex3-pred-0.05-train-0.01}}
\end{subfigure}%
\hspace*{0.1in}
\begin{subfigure}[b]{0.3\linewidth}
  \centering
  \hspace*{0.1in}\includegraphics[height=1.16in]{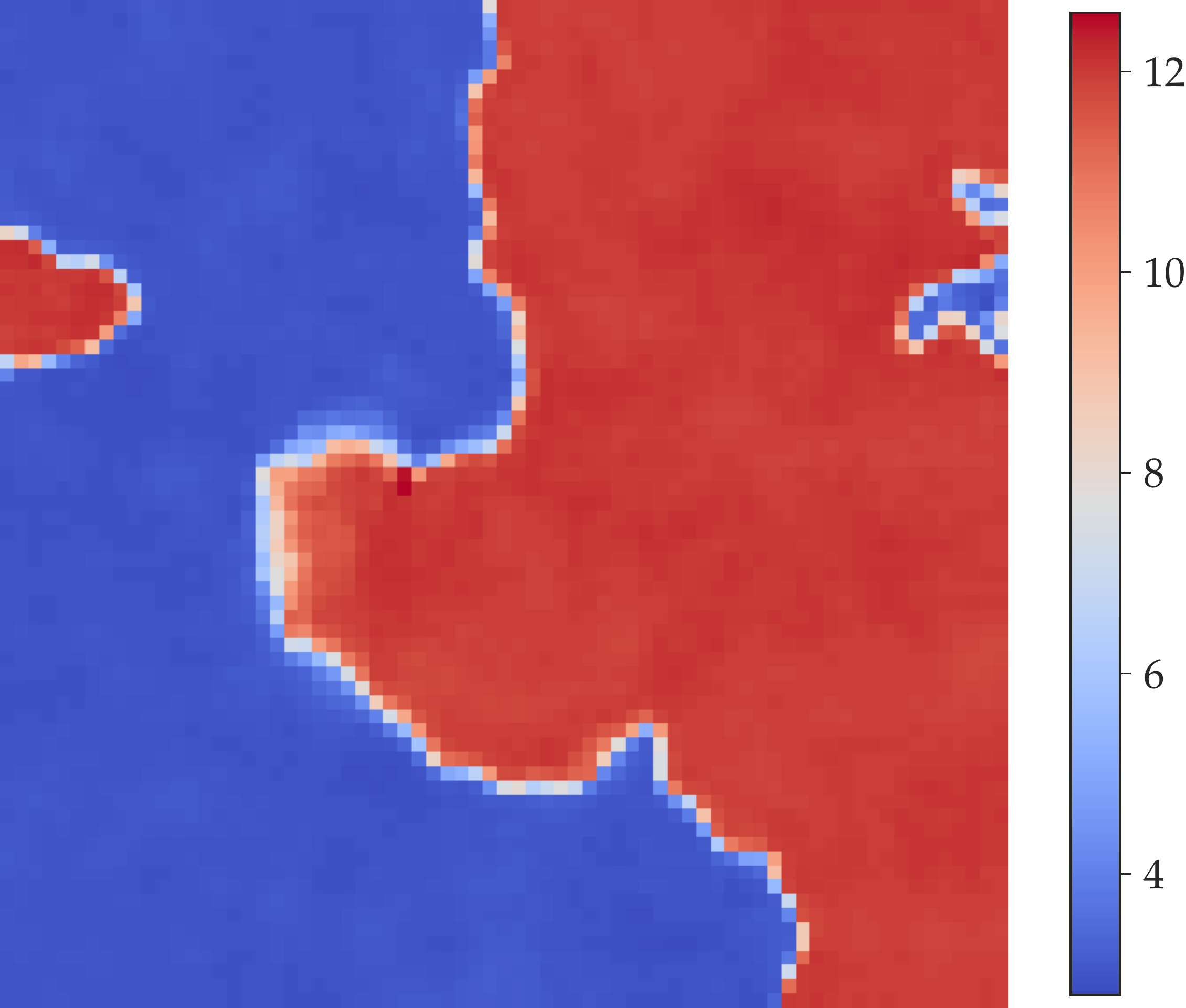}
  \caption{\label{fig:ex3-pred-0.05-train-0.1}}
\end{subfigure}%
\end{center}
\caption{Effect of noise in the inverse interface coefficient identification using the same sample with Figure \ref{fig:ex2-darcy} and \ref{fig:ex3-darcy-inv}, $\varepsilon$: relative error in $L^2$-norm. (\subref{fig:ex3-pred-0.01-train-0.0})--(\subref{fig:ex3-pred-0.02-train-0.0}) model trained with no noise, $1\%$ and $1.5\%$ noises in evaluation, $\varepsilon=0.194$ and $\varepsilon=0.416$; (\subref{fig:ex3-pred-0.0-train-0.01}) model trained with 1\% noise, no noise in evaluation, $\varepsilon=0.0235$; 
(\subref{fig:ex3-pred-0.02-train-0.01}) model trained with 1\% noise, 2\% in evaluation, $\varepsilon=0.0754$. (\subref{fig:ex3-pred-0.05-train-0.01}) model trained with 1\% noise, 5\% in evaluation, 
$\varepsilon=0.403$. (\subref{fig:ex3-pred-0.05-train-0.1}) model trained with 10\% noise, 5\% in evaluation, $\varepsilon=0.0691$.}
\label{fig:ex3-darcy-inv-instable}
\end{figure}

\newpage
\section{Proof of The Approximation Capacity of A Linear Attention}
\label{sec:appendix-cea}

In this section, we prove Theorem \ref{theorem:cea-lemma}, which shows that the linear attention variant we proposed in \eqref{eq:attention-galerkin}, Galerkin-type attention operator (nonlinear), is capable of replicating explicitly a Petrov-Galerkin projection (linear) in the current latent representation subspace under a Hilbertian setup. To better elaborate our Galerkin projection-inspired modifications to the attention operator, in Section \ref{sec:appendix-cea-background}, the technical background that bridges \eqref{eq:attention-galerkin-projection} to a learnable Petrov-Galerkin projection is presented. 
Then, some historical contexts are provided in Section \ref{sec:appendix-cea-overview} for an overview of Theorem \ref{theorem:cea-lemma}, connecting the sequence-length invariant training of the attention operator to how important a theorem like Theorem \ref{theorem:cea-lemma} is for an operator approximation problem in traditional applied mathematics.
In Section \ref{sec:appendix-cea-proof}, the proof of Theorem \ref{theorem:cea-lemma} is shown with a full array of mathematically rigorous setting and assumptions. 
Thereafter, in Section \ref{sec:appendix-cea-generalizations} some possible generalizations are discussed, together with the role of removing the softmax in obtaining a sequence-length uniform bound, as well as the heuristic behind the choice of the Galerkin projection-type normalization in the scaled dot-product attention. Last but not least, technical lemmata that are needed to show Theorem \ref{theorem:cea-lemma} are proved in Section \ref{sec:appendix-cea-aux}.

\subsection{Background on Galerkin methods}
\label{sec:appendix-cea-background}
The huge success of many Galerkin-type methods in approximating solutions to operator equations such as PDEs \cite{Ciarlet:2002finite} attributes partly to the following two fundamental properties of Hilbert spaces (see e.g., \cite[Chapter 4]{Ciarlet:2013Linear}): for $(\mathcal{H}, \langle\cdot,\cdot\rangle)$
\begin{itemize}[leftmargin=1.5em]
\item Let $\mathcal{Y}$ be a convex and complete subset of $\mathcal{H}$, and  $\mathcal{Y}$ is potentially infinite-dimensional. For any $f\in \mathcal{H}$, the projection $\Pi f \in \mathcal{Y}$ is uniquely determined by 
  \begin{equation}
    \|f - \Pi f\|_{\mathcal{H}} = \inf_{y\in \mathcal{Y}} \|f - y\|_{\mathcal{H}},
  \end{equation}
  i.e., the projection recovers the unique element in $\mathcal{Y}$ that is ``closest'' to $f$.
\item If $\mathcal{H}$ is infinitely-dimensional and separable, then there exists a set of orthogonal basis functions $\{q_l(\cdot)\}_{l=1}^{\infty}$ such that any $f\in \mathcal{H}$ can have its Fourier series expansion: 
  \begin{equation}
    f(\cdot) = \sum_{l=1}^{\infty} a_l q_l(\cdot) 
    := \sum_{l=1}^{\infty} 
    \frac{\langle f, q_l\rangle }{\langle q_l, q_l\rangle } q_l(\cdot),
  \end{equation}
  i.e., $\mathcal{H}$ can be identified by $\ell^2$ (the space that the Fourier coefficients $\{a_l\}_{l=1}^{\infty}$ are in). Thus, given any fixed tolerance under the norm induced by the inner product, any $f\in \mathcal{H}$ can be approximated using a finite number of basis $\{q_l(\cdot)\}_{l=1}^d$ (identified by a finite number of coefficients) thanks to the square summability.
\end{itemize}
Together, they rationalize the practice of using a finite dimensional vector (space) to approximate any element in $\mathcal{H}$, or a function in the solution subspace of an operator equation that is compact in $\mathcal{H}$. Consider the finite dimensional approximation space (trial space)
$\mathbb{Q}_h:= \operatorname{span}\{\tilde{q}_l(\cdot)\}_{l=1}^d$ (that is convex and closed), where $\tilde{q}_l(\cdot) := q_l(\cdot)/\|q_l\|_{\mathcal{H}}$. The projection $\Pi f$ onto $\mathbb{Q}_h$ is the best approximator in $\mathbb{Q}_h$ to $f\in \mathcal{H}$:
\begin{equation}
\label{eq:projection}
  \|f - \Pi f\|_{\mathcal{H}} = \inf_{q\in \mathbb{Q}_h} \|f - q\|_{\mathcal{H}}.
\end{equation}
Exploiting the definition of $\|\cdot\|_{\mathcal{H}}$, any perturbation $q\in \mathbb{Q}_h$ (test space) to the unique (local) minimizer $p:=\Pi f$ shall increase the difference in $\norm{\cdot}_{\mathcal{H}}$, thus 
\[
0 =\lim_{\tau\to 0}\frac{d}{d\tau} 
\|f -  (p+\tau q) \|_{\mathcal{H}}^2 =\lim_{\tau\to 0}\frac{d}{d\tau}
\langle f -  (p+\tau q), f -  (p+\tau q) \rangle,
\]
Choosing $q$ as $\tilde{q}_l$ ($l=1,\dots, d$), one shall obtain the following if assuming that $\mathcal{H} = L^2(\Omega)$ in a simple case
\begin{equation}
\label{eq:projection-galerkin}
\Pi f(x) = \sum_{l=1}^d \langle f, \tilde{q}_l\rangle \tilde{q}_l(x) 
= \sum_{l=1}^d \left( \int_{\Omega} f(\xi) \tilde{q}_l(\xi) \dd \xi \right) \tilde{q}_l(x), \quad \text{ for } x\in \Omega.
\end{equation}
When the set $\{f_j(\cdot)\}_{j=1}^d$ is projected onto $\mathbb{Q}_h$ element by element, \eqref{eq:projection-galerkin} carries a resoundingly similar form to that of \eqref{eq:attention-galerkin-projection}. In light of proving the approximation capacity of \eqref{eq:attention-galerkin-projection}, the differences are: 
\begin{itemize}[leftmargin=2.5em]
  \item[(a)] The test and trial function spaces are the same in the ideal case above \eqref{eq:projection-galerkin}, while in a Galerkin-type attention operator \eqref{eq:attention-galerkin-projection}, they are different and become learnable. This difference brings the Petrov-Galerkin projection into the picture. In Section \ref{sec:appendix-cea-proof}, we shall see that the minimization is done for a more general functional (dual) norm $\|\cdot\|_{\mathbb{V}_h'}$ (min-max problem \eqref{eq:min-max}), instead of $\|\cdot\|_{\mathcal{H}}$.
  \item[(b)] $\{\tilde{q}_l(\cdot)\}_{l=1}^d$ needs to be orthonormal to yield a compact formula as \eqref{eq:projection-galerkin}. In the forward propagation of the attention operations, there is no such guarantee unless certain orthogonalization/normalization is performed. We shall see in the proof in Section \ref{sec:appendix-cea-proof}, the Galerkin projection-type layer normalization acts as a cheap learnable alternative to the normalization shown in the explicit formula Petrov-Galerkin projection (inverse of the Gram matrices in \eqref{eq:attention-update}).
\end{itemize}

\subsection{Overview of Theorem \ref{theorem:cea-lemma}}
\label{sec:appendix-cea-overview}
\paragraph{Historical context.}  
\label{paragraph:cea-history}
Theorem \ref{theorem:cea-lemma} resembles the famous C\'{e}a's lemma (e.g., see \cite[Theorem 2.8.1]{Brenner.Scott:2008mathematical}, \cite[Theorem 2.4.1]{Ciarlet:2002finite}). 
It is one of the most fundamental theorems 
in approximating an operator equation such as a PDE under the Hilbertian framework. 
Define the operator norm to be the induced norm from the original Hilbertian norm, the C\'{e}a's ``lemma'' reads: if the norm of the operator associated with the bilinear form is bounded below (either by the Lax-Milgram lemma or the Ladyzhenskaya–Babu\v{s}ka–Brezzi inf-sup condition) in an approximation subspace of the original Hilbert space, which implies its invertibility, then this mere invertibility implies the quasi-optimality (optimal up to a constant) of the Galerkin-type projection to a given function. 
This says: in the current approximation space, measured by the distance of the Hilbertian norm, the Galerkin-type or the Petrov-Galerkin-type projection (see e.g., \cite{Stern:2015Banach}) is equivalent to the closest possible approximator to any given target function (see \eqref{eq:projection}).

As might be expected, whether this approximation space has enough approximation power, such that this closest approximator is actually close to the target, is another story. This closeness, either in terms of distance or structure, is usually referred as a part of ``consistency'' in the context of approximating a PDE's solution operator. 
Any method with a sufficient approximation power together with the invertibility has ``convergence'', and this is also known as the Lax equivalence principle \cite{Lax.Richtmyer:1956Survey}. 

\paragraph{Heuristic comparison: convergence of traditional numerical methods versus a data-driven operator learner.} 
\label{paragraph:cea-convergence}
For a traditional numerical method that approximates an operator equation to be successful, in that scientists and engineers can trust the computer-aided simulations, the aforementioned convergence is indispensable. One key difference of a traditional numerical method to an attention-based operator learner is how the ``convergence'' is treated.
\begin{itemize}[leftmargin=1.5em]
  \item A traditional numerical method: once the discretization is fixed (fixed degrees of freedom such as grids, radial bases, Fourier modes, etc.), the approximation power of this finite dimensional approximation space is fixed. The method seeks the best approximator to a single instance in this fixed space. The convergence refers to the process of error decreasing as one continuously enlarges the approximation subspaces (e.g., grid refinement, more Fourier modes).
  \item An attention-based operator learner: the approximation space is not fixed, and is constantly replenished with new bases during optimization (e.g., see the remarks in \ref{paragraph:ffn}), thus able to approximate an operator's responses in a subset/subspace in a much more dynamic manner. A possible exposition of the ``convergence'' is more problem-dependent as one obtains a better set of basis to characterize the operator of interest progressively through the stochastic optimization.
\end{itemize}

\paragraph{Positional encoding and a dynamic feature/basis generation.}
\label{paragraph:dynamic-basis-update}
Even though \cite{Vaswani;Shazeer;Parmar:2017Attention} opens a new era in NLP by introducing the state-of-the-art Transformer, in an operator learning problem, we find that its explanation unsatisfactory on the absolute necessity to include the positional encodings in the attention mechanism. 
From the proof of Theorem \ref{theorem:cea-lemma} we see that, if we ought to learn the identity operator from $\mathbb{Q}_h$ to $\mathbb{Q}_h$ (with a few other caveats), i.e., the target $f$ itself is in the approximation subspace $\mathbb{Q}_h$, then the attention mechanism has certainly the capacity to learn this operator exactly without any positional encodings. 
We want to emphasize that in addition to the remarks in \ref{paragraph:ffn}, in our interpretation, the utmost importance of the positional encoding is to make the approximation subspaces dynamic. Otherwise, the Galerkin-type attention (or a linear attention) is simply a linear combination of the current approximation subspace (or a convex combination in its softmax-normalized siblings). Consequently, the approximation power of the subspaces cannot enjoy this dynamic update mechanism through optimizations. In this sense, we can view the attention mechanism a universal ``dynamic feature/basis generator'' as well. For an empirical evidence of this layer-wise dynamic basis update in our experiments of the Darcy interface flow please refer to Figure \ref{fig:darcy-latent}.

\begin{figure}[htbp]
\begin{center}
\hspace*{0.03in}
\begin{subfigure}[b]{0.3\linewidth}
      \centering
      \hspace*{0.1in}\includegraphics[height=1.16in]{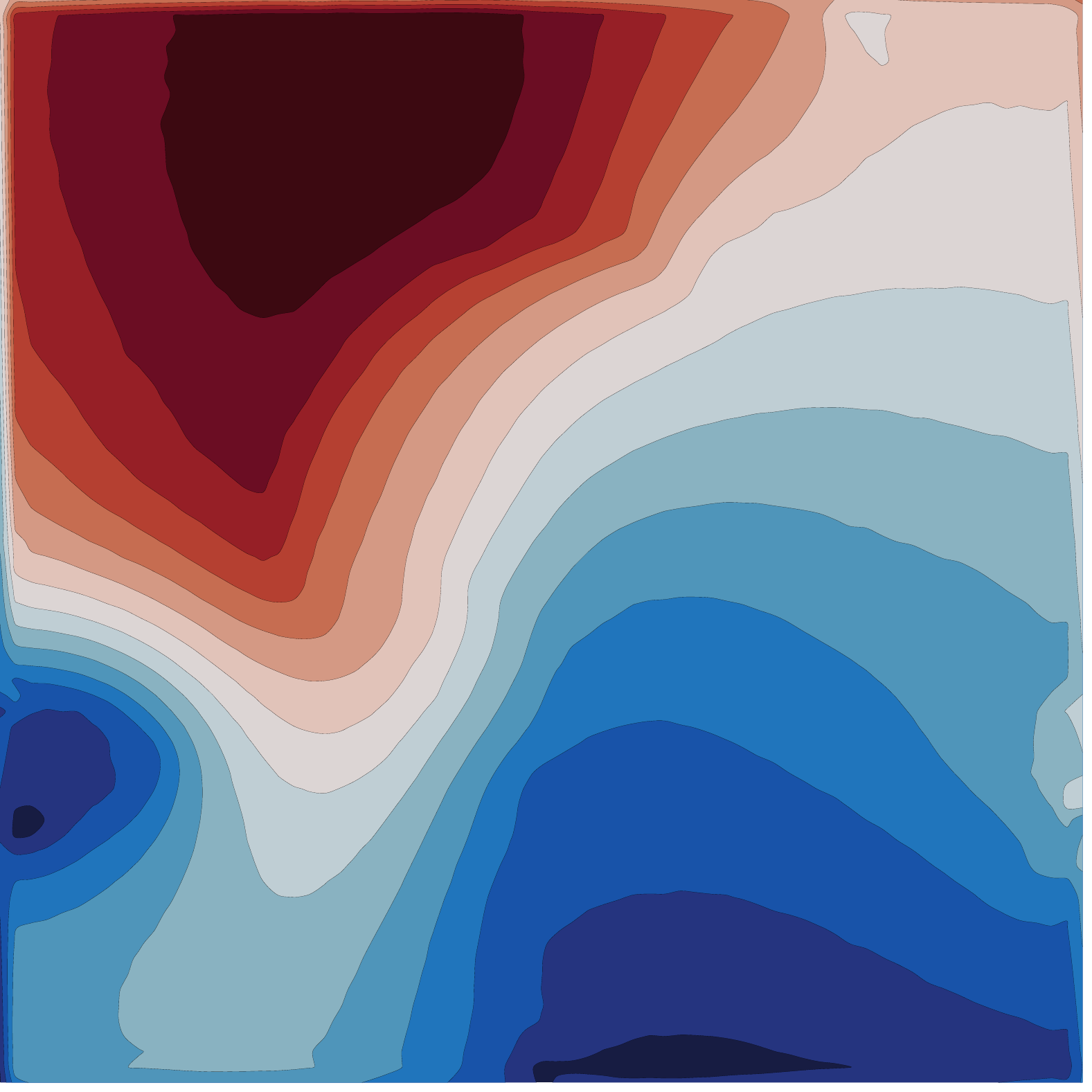}
      \caption{\label{fig:ex3-l1-f21}}
\end{subfigure}
\hspace*{0.08in}
\begin{subfigure}[b]{0.3\linewidth}
      \centering
      \hspace*{0.08in}\includegraphics[height=1.16in]{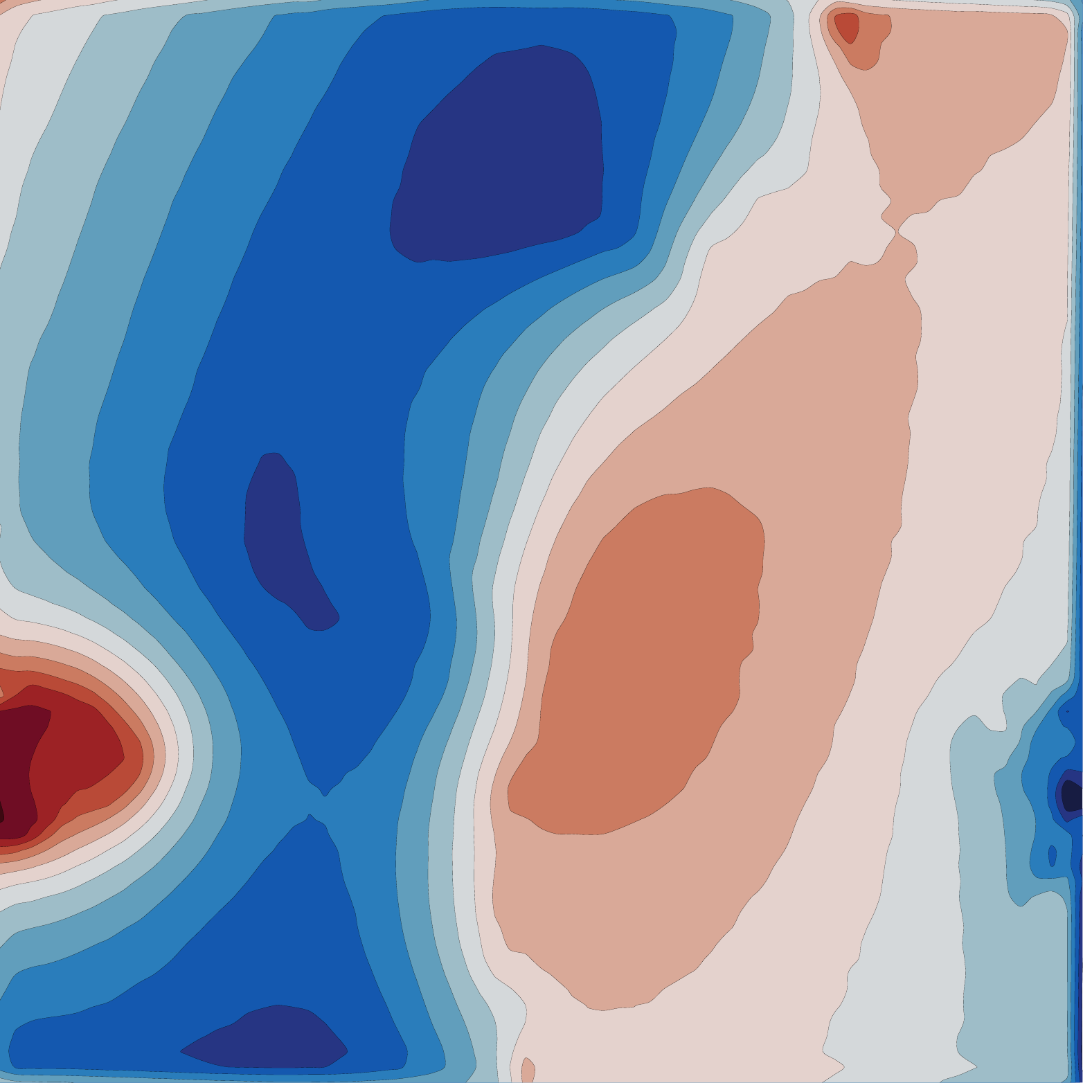}
      \caption{\label{fig:ex3-l1-f28}}
\end{subfigure}
\hspace*{0.1in}
\begin{subfigure}[b]{0.3\linewidth}
  \centering
  \hspace*{0.1in}\includegraphics[height=1.16in]{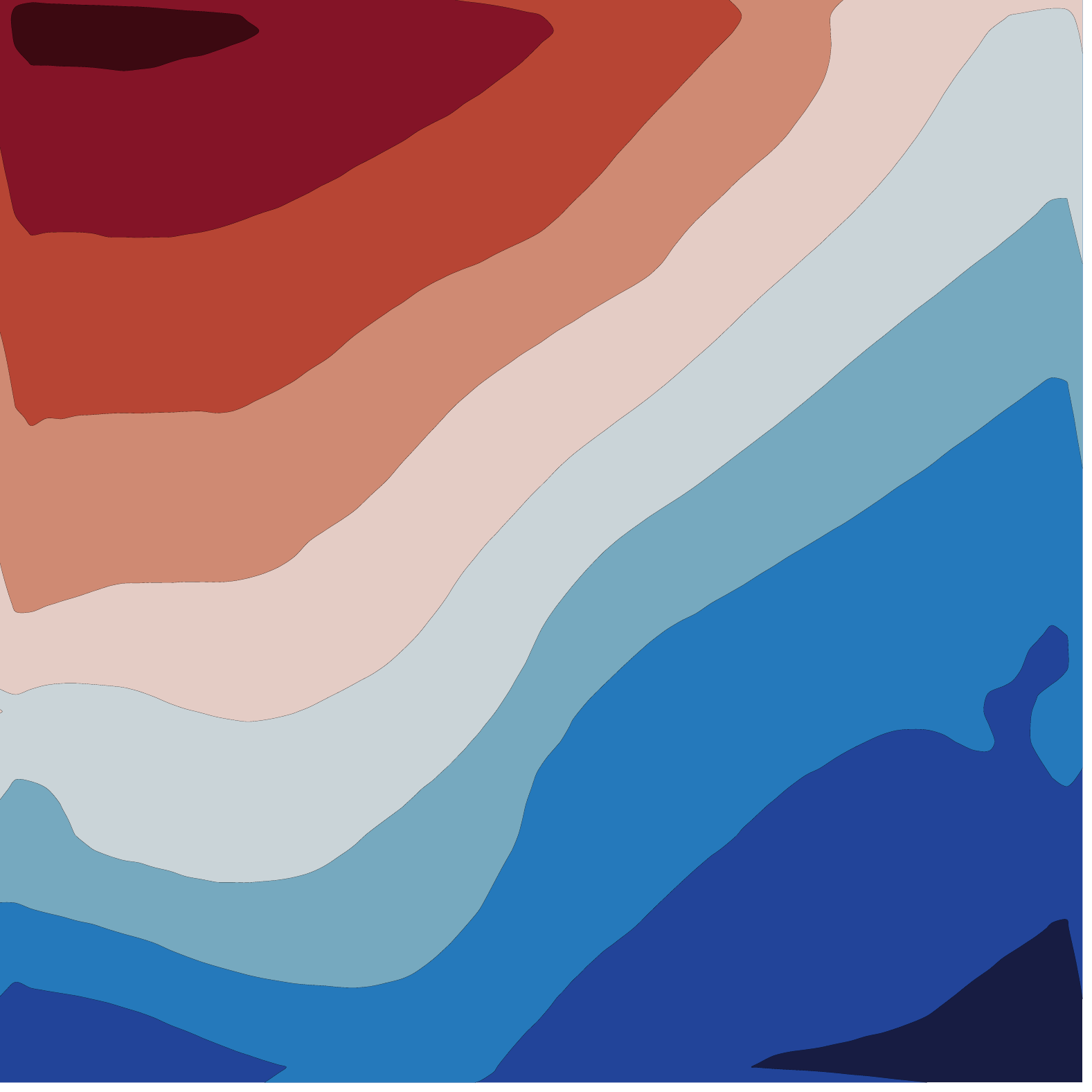}
  \caption{\label{fig:ex3-l4-f26}}
\end{subfigure}%
\\
\begin{subfigure}[b]{0.3\linewidth}
  \centering
  \hspace*{0.1in}\includegraphics[height=1.16in]{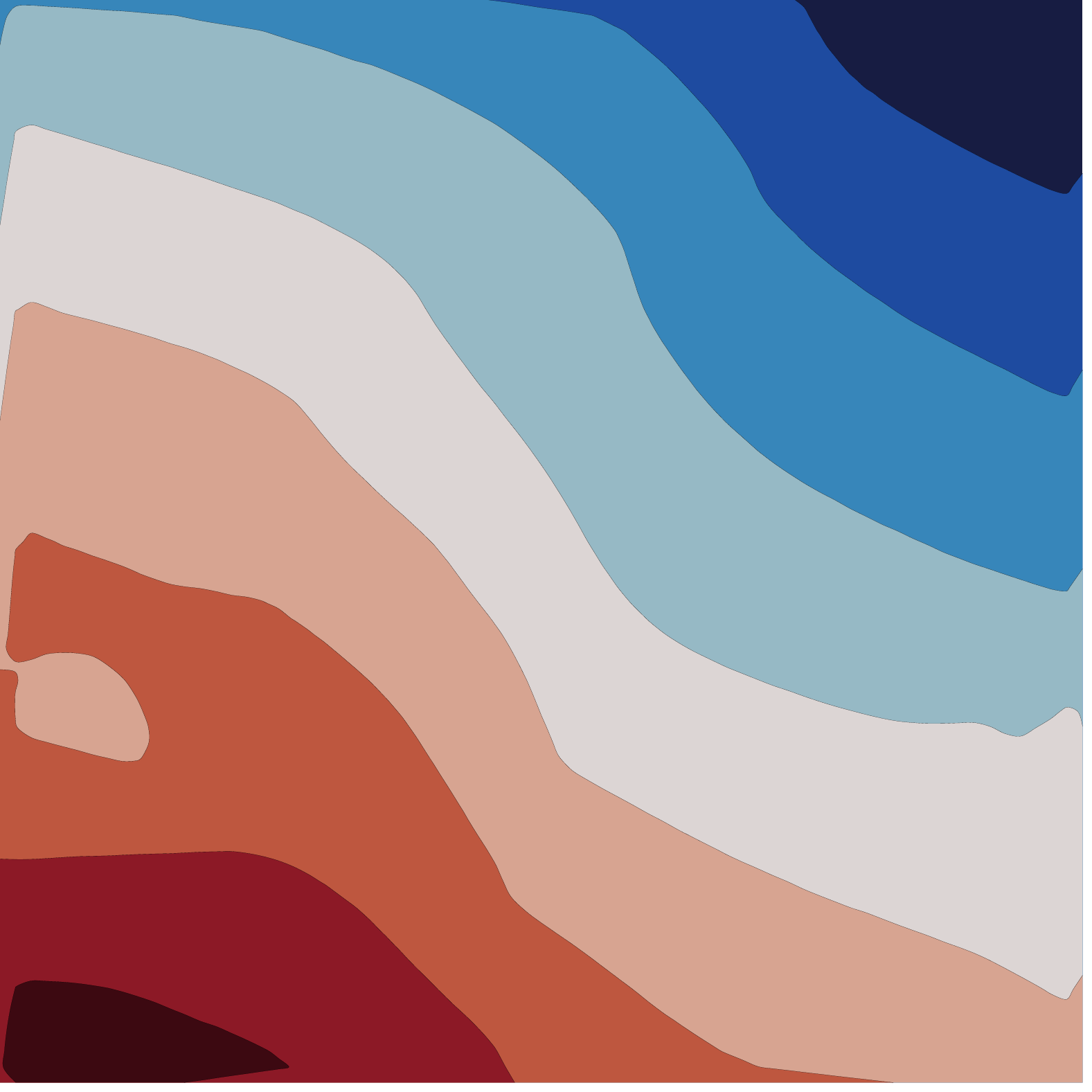}
  \caption{\label{fig:ex3-l4-f29}}
\end{subfigure}%
\hspace*{0.1in}
\begin{subfigure}[b]{0.3\linewidth}
  \centering
  \hspace*{0.1in}\includegraphics[height=1.16in]{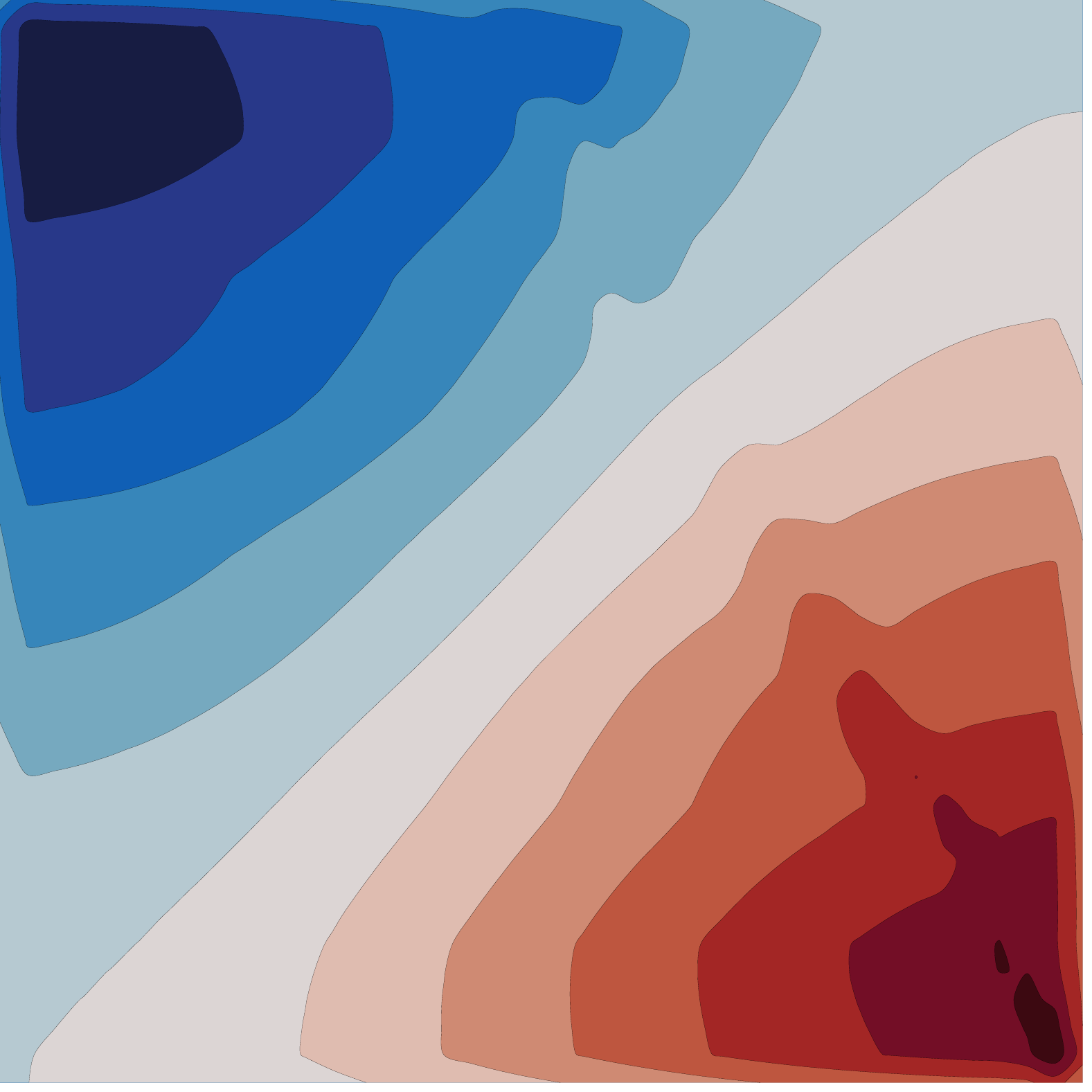}
  \caption{\label{fig:darcy-eigenvec-1}}
\end{subfigure}%
\hspace*{0.1in}
\begin{subfigure}[b]{0.3\linewidth}
  \centering
  \hspace*{0.1in}\includegraphics[height=1.16in]{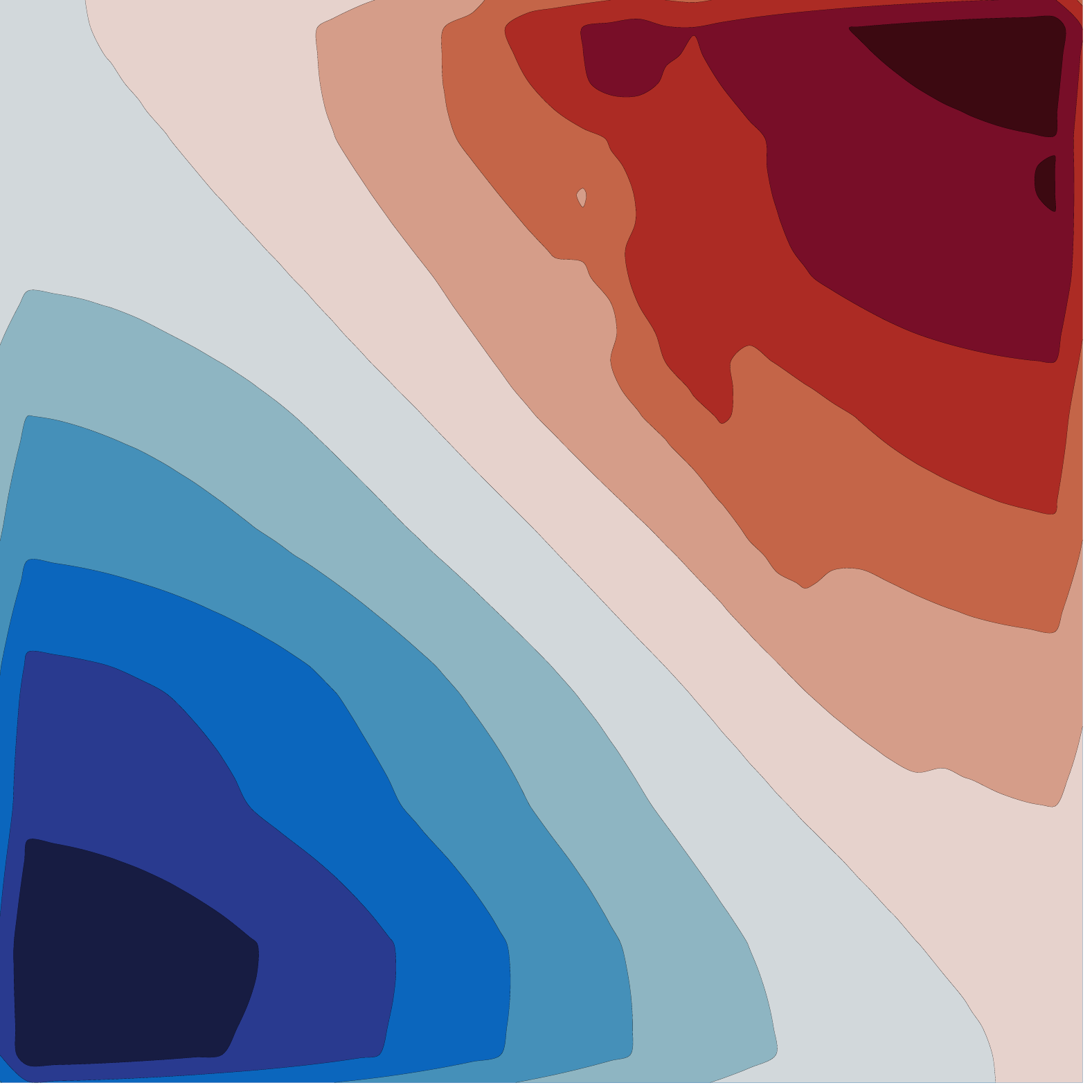}
  \caption{\label{fig:darcy-eigenvec-2}}
\end{subfigure}%
\end{center}
\caption{Extracted latent representation sequences reshaped to $n_c\times n_c$ from the encoder layers in the Galerkin Transformer using the same sample with Figure \ref{fig:ex2-darcy} and \ref{fig:ex3-darcy-inv} in evaluation;  (\subref{fig:ex3-l1-f21})--(\subref{fig:ex3-l1-f28}): two basis functions from the first encoder layer; (\subref{fig:ex3-l4-f26})--(\subref{fig:ex3-l4-f29}): two other basis functions from the fourth encoder layer, and we note that visually all basis functions in the fourth layers are smoother than those in the first; (\subref{fig:darcy-eigenvec-1})--(\subref{fig:darcy-eigenvec-2}): $-\nabla\cdot(a\nabla(\cdot))$'s first two eigenfunction approximations using the bilinear finite element on the $n_c\times n_c$ grid, i.e., the first two Fourier bases associated with this self-adjoint operator.}
\label{fig:darcy-latent}
\end{figure}

\subsection{Proof of Theorem \ref{theorem:cea-lemma}}
\label{sec:appendix-cea-proof}
\paragraph{Notations.} Before presenting the proof, to avoid confusion of the notions for various finite dimensional function spaces, the settings for different spaces are paraphrased from the condensed versions in Table \ref{table:notations}. The caveat is that for the same function in the latent approximation space, it has two vector representations: 
(i) nodal values at the grid points which can be used to form the columns of $Q, K, V$, this vector is in $\R^n$; 
(ii) the vector representation using degrees of freedom (coefficient functional) in an expansion using certain set of basis, this vector is in $\R^d$ or $\R^r$ ($r\leq d<n$). 

We also note that in the context of using the Galerkin-type attention 
(or other linear attentions) to approximate functions under a Hilbertian framework, in that the output of the attention-based map can represent a Petrov-Galerkin projection, $Q$ stands for values, $K$ for query, and $V$ for keys. In the proof of Theorem \ref{theorem:cea-lemma}, we shall refer the discrete approximation space generated by $Q$ as a ``value space'', and that of $V$ as a ``key space''. Meanwhile, $\Omega$ and $\Omega^*$ can be seen as spacial/temporal domain and frequency domain, respectively.

\begin{assumption}[assumptions and settings for Theorem \ref{theorem:cea-lemma}]
\label{assumption:cea-lemma}
The following notations and assumptions are used throughout the proof of the quasi-optimal approximation result in Theorem \ref{theorem:cea-lemma}:
\begin{enumerate}[align=left, leftmargin=10pt,
                  label=$\emph{(}$\subscript{D}{{\arabic*}}$\emph{)}$\; ]
  \item $(\mathcal{H}, \langle\cdot, \cdot\rangle_{\mathcal{H}})$ is a Hilbert space. For $f\in \mathcal{H}$, $f: \Omega\to \R$. $\mathcal{H}\hookrightarrow C^0(\Omega)$. $\Omega\subset \R^m$ is a bounded domain, discretized by $\{x_i\}_{i=1}^n$ with a mesh size $h$. 
  
  \item \label{asp:d-2} $\bb{Y}_h\subset \mathcal{H}$ is an approximation space associated with $\{x_i\}_{i=1}^n$, such that for any $y\in \bb{Y}_h$, $y(\cdot) = \sum_{i=1}^n y(x_i) \phi_{x_i}(\cdot)$ where $\{\phi_{x_i}(\cdot)\}_{i=1}^n$ form a set of nodal basis for $\bb{Y}_h$ in the sense that $\phi_{x_i}(x_j) = \delta_{ij}$, and the support of every nodal basis $\phi_{x_i}(\cdot)$ is of $O(h^m)$.
  
  \item $(\mathcal{V}, \langle\cdot, \cdot\rangle_{\mathcal{V}})$ is a latent Hilbert space. For $v\in \mathcal{V}$, $v: \Omega^*\to \R$. $\mathcal{V}\hookrightarrow C^0(\Omega^*)$. $\Omega^*\simeq\Omega$ and is discretized by $\{\xi_i\}_{i=1}^n$ with a mesh size $h$. 
  
  \item \label{asp:d-4} $\bb{W}_h\subset \mathcal{V}$ is an approximation space associated with $\{\xi_i\}_{i=1}^n$, i.e., for any $w\in \bb{W}_h$, $w(\cdot) = \sum_{i=1}^n w(\xi_i) \psi_{\xi_i}(\cdot)$ where $\{\psi_{\xi_i}(\cdot)\}_{i=1}^n$ form a set of nodal basis for $\bb{W}_h$ in the sense that $\psi_{\xi_i}(\xi_j) = \delta_{ij}$, and the support of every nodal basis $\psi_{\xi_i}(\cdot)$ is of $O(h^m)$.
  
  \item $\mathbf{y}\in \R^{n\times d}$ is the current input latent representation. $W^{Q}, W^K, W^V$ denote the current projection matrices. $n>d>m$ and $\op{rank}\mathbf{y} = d$. 
  
  \item \label{asp:d-6} $\bb{Q}_h\subset  \bb{Y}_h\subset \mathcal{Q}$ is the current value space from $Q$. $\mathcal{Q}$ is a subspace of $\mathcal{H}$ with the same topology. 
  $\bb{Q}_h$ is the spanned by basis functions whose degrees of freedom associated with $\{x_i\}_{i=1}^n$ form the columns of $Q:=\mathbf{y}W^Q\in \R^{n\times d}$, i.e., 
  $\bb{Q}_h = \operatorname{span}\{q_j(\cdot) \in \bb{Y}_h:  q_j(x_i) = Q_{ij}, \; 1\leq i\leq n, \;  1\leq j\leq d\}$.
  
  \item \label{asp:d-7} $\bb{V}_h\subset \bb{W}_h \subset\mathcal{V}$ is the current key space from $V$. 
  $\bb{V}_h$ is the spanned by basis functions whose degrees of freedom associated with $\{\xi_i\}_{i=1}^n$ form the columns of $V:=\mathbf{y}W^V\in \R^{n\times d}$, i.e., 
  $\bb{V}_h = \operatorname{span}\{v_j(\cdot) \in \bb{W}_h:  v_j(\xi_i) = V_{ij}, \; 1\leq i\leq n, \; 1\leq j\leq d\}$.
  
  \item \label{asp:d-8} $\bb{V}_h'$ is the dual space of $\bb{V}_h$ consisting of all bounded linear functionals defined on $\bb{V}_h$;   $\|g(\cdot)\|_{\bb{V}_h'}:=\sup_{v\in \bb{V}_h} 
  |g(v)|/\|v\|_{\mathcal{V}}\,$ for $g\in \bb{V}_h'$.
  
  \item $\dim \bb{Q}_h=r\leq \dim\bb{V}_h=d$, i.e., the key space is bigger than the value space.
  
  \item \label{asp:d-10} For $w\in \bb{V}_h$, $w(\cdot) = \sum_{j=1}^d \mu_{v_j}(w) v_j(\cdot)$ is the expansion in $\{v_j(\cdot)\}_{j=1}^d$, where $\mu_{v_j}(\cdot) \in \bb{V}_h'$ is the coefficient functional; in this case, $w$ can be equivalently identified by its vector representation $\bm{\mu}(w):= (\mu_{v_1}(w), \cdots, \mu_{v_d}(w))^{\top}\in \R^d$.

  \item \label{asp:d-11} For $p\in \bb{Q}_h$, $p(\cdot) = \sum_{j=1}^r \lambda_{q_j}(p) q_j(\cdot)$ is the expansion in $\{q_j(\cdot)\}_{j=1}^r$, where $\lambda_{q_j}(\cdot) \in \bb{Q}_h'$ is the coefficient functional; in this case, $p$ can be equivalently identified by its vector representation $\bm{\lambda}(p):= (\lambda_{q_1}(p), \cdots,  \lambda_{q_r}(q))^{\top}\in \R^r$.
  
  \item \label{asp:d-12} $\mathfrak{b}(\cdot, \cdot): \mathcal{V}\times \mathcal{Q}\to \R$ is a continuous bilinear form, i.e., 
  $|\mathfrak{b}(v, q) |\leq c_{0} \|v\|_{\mathcal{V}} \|q\|_{\mathcal{H}} $ for any $v\in \mathcal{V}$,  $q\in \mathcal{Q}$. For $(w, y)\in \bb{W}_h \times \bb{Y}_h \subset \mathcal{V}\times \mathcal{Q}$, $\mathfrak{b}(w, y):  = h^m\sum_{i=1}^n w(\xi_i)y(x_i) $.
  
  
  \item $g_{\theta}(\cdot): \R^{n\times d}\to \bb{Q}_h, \mathbf{y} \mapsto z$ is a learnable map that is the composition of the Galerkin-type attention operator \eqref{eq:attention-galerkin} with an updated set of $\{\widetilde{W}^{Q}, \widetilde{W}^{K}, \widetilde{W}^{V}\}$ and a pointwise universal approximator; $\theta$ denotes all the trainable parameters within.
\end{enumerate}
\end{assumption}

\begin{manualtheorem}{4.3}[C\'{e}a-type lemma, general version]
For any $f\in \mathcal{H}$, under Assumption \ref{assumption:cea-lemma}, for $f_h\in \bb{Q}_h$ being the best approximator of $f$ in $\|\cdot\|_{\mathcal{H}}$, we have:
\begin{equation}
\label{eq:approximation}
\min_{\theta} 
\|f- g_{\theta}(\mathbf{y})\|_{\mathcal{H}}
\leq c^{-1} \min_{q\in \bb{Q}_h} \max_{v\in \bb{V}_h}
\frac{|\mathfrak{b}(v, f_h - q)|}{\|v\|_{\mathcal{V}}}
+\|f-f_h\|_{\mathcal{H}}.
\end{equation}
\end{manualtheorem}

\begin{proof}
By triangle inequality, inserting the best approximation $f_h\in \bb{Q}_h$
\begin{equation}
\label{eq:approximation-split}
\|f- g_{\theta}(\mathbf{y})\|_{\mathcal{H}}\leq 
\|f_h - g_{\theta}(\mathbf{y}) \|_{\mathcal{H}} + \|f-f_h\|_{\mathcal{H}}.
\end{equation}
$f_h:= \operatorname{argmin}_{q\in \bb{Q}_h} \| f - f_h\|_{\mathcal{H}}$ describes the approximation capacity of the current value space $\bb{Q}_h$ and has nothing to do with $\theta$.

In the first part of the proof, we focus on bridging $\|f_h - g_{\theta}(\mathbf{y}) \|_{\mathcal{H}}$ in \eqref{eq:approximation-split} to the linear problem associated with seeking the Petrov-Galerkin projection of $f_h$. 
By Lemma \ref{lemma:lower-bound}, the continuous linear functional defined by 
$\mathfrak{b}(\cdot, q): \mathcal{V} \to \R, \,v\mapsto \mathfrak{b}(v, q)$ is bounded below on the current key space, i.e., there exists $c>0$ independent of $q$ or the discretization (mesh size $h$) such that,  
\begin{equation}
\label{eq:inf-sup}
\|\mathfrak{b}(\cdot, q)\|_{\bb{V}_h'}\geq c \|q \|_{\mathcal{H}}, \; \text{ for any fixed } q\in \bb{Q}_h.
\end{equation}
By the definition of $\|\cdot\|_{\bb{V}_h'}$ in \hyperref[asp:d-8]{$(D_{8})$}, we have
\[
\|f_h - g_{\theta}(\mathbf{y}) \|_{\mathcal{H}} \leq 
c^{-1} \sup_{v\in \bb{V}_h} 
\frac{\left|\mathfrak{b}(v, f_h - g_{\theta}(\mathbf{y})) \right|}{\norm{v}_{\mathcal{V}}}
= c^{-1} \max_{v\in \bb{V}_h} 
\frac{\left|\mathfrak{b}(v, f_h - g_{\theta}(\mathbf{y})) \right|}{\norm{v}_{\mathcal{V}}}.
\]
In the rest of the proof, the goal is to establish
\begin{equation}
\label{eq:approximation-capacity}
\min_\theta \max_{v\in \bb{V}_h} 
\frac{\left|\mathfrak{b}(v, f_h - g_{\theta}(\mathbf{y})) \right|}{\norm{v}_{\mathcal{V}}}
\leq 
 \min_{q\in \bb{Q}_h} \max_{v\in \bb{V}_h} 
\frac{\left|\mathfrak{b}(v, f_h - q) \right|}{\norm{v}_{\mathcal{V}}},
\end{equation}
i.e., the best approximation based on the attention operator, if exists, is on par with that offered by a Petrov-Galerkin projection.

By the continuity of $\mathfrak{b}(\cdot, \cdot)$ in \hyperref[asp:d-12]{$(D_{12})$}, given any fixed $q\in \bb{Q}_h$, $v\mapsto \mathfrak{b}(v, q)$ defines a bounded linear functional for any $v\in \bb{V}_h$. Moreover, since we use $\ell^2$-inner product to approximate $\langle\cdot, \cdot\rangle$ on $\bb{V}_h$, define 
\begin{equation}
\label{eq:norm-discrete}
  \langle u, v\rangle_h:= h^m\sum_{i=1}^n u(\xi_i)v(\xi_i) \approx \int_{\Omega^*} u(\xi)v(\xi)\dd \xi =:  \langle u, v\rangle_{\mathcal{V}}, \quad \text{for }u, v\in \bb{V}_h.
\end{equation}
Applying the Riesz representation theorem (e.g., see \cite[Theorem 4.6]{Ciarlet:2013Linear}), 
together with Lemma \ref{lemma:continuity}, implies that there exists a value-to-key linear map 
$\Phi$ for $f_h \in \bb{Q}_h$
\[
\Phi: \bb{Q}_h \to \bb{V}_h, \text{ such that } \mathfrak{b}(v, f_h) = \langle \Phi f_h, v\rangle_h \text{ for any } v\in \bb{V}_h.
\]
Thus, we have for the right-hand side of \eqref{eq:approximation-capacity} 
\begin{equation}
\label{eq:min-max}
  \min_{q\in \bb{Q}_h} \max_{v\in \bb{V}_h} 
\frac{\left|\mathfrak{b}(v, f_h - q) \right|}{\norm{v}_{\mathcal{V}}}
=
\min_{q\in \bb{Q}_h} \max_{v\in \bb{V}_h}
\frac{| \langle \Phi f_h, v\rangle_h - \mathfrak{b}(v,q )|}{\|v\|_{\mathcal{V}}}.
\end{equation}
Using \hyperref[asp:d-6]{$(D_{6})$},\hyperref[asp:d-7]{$(D_{7})$},\hyperref[asp:d-10]{$(D_{10})$},\hyperref[asp:d-11]{$(D_{11})$}, and Lemma \ref{lemma:mix}, the problem on the right-hand side of \eqref{eq:min-max} is equivalent to the block matrix form \eqref{eq:mix-matrix} as follows.
\begin{equation}
\label{eq:mix-3}
\begin{pmatrix}
  M & B^{\top} \\ B & 0
\end{pmatrix}
\begin{pmatrix}
 \bm{\mu}\\ \bm{\lambda}
\end{pmatrix}
= \begin{pmatrix} \bm{\zeta} \\ 0\end{pmatrix}.
\end{equation}
In the system above, 
$(\bm{\mu}, \bm{\lambda})\in \R^d \times \R^r$ are the vector representations of the critical point $(w, p)\in \bb{V}_h\times \bb{Q}_h$, if exists, under basis sets $\{v_j(\cdot)\}_{j=1}^d$ and $\{q_j(\cdot)\}_{j=1}^r$, respectively, and
\begin{equation}
\label{eq:mix-rhs}
\bm{\zeta}\in \R^d,\quad \text{where } \; (\bm{\zeta} )_j = \langle  \Phi f_h, v_j\rangle_h 
\approx \int_{\Omega^*} (\Phi f_h)(\xi) v_j(\xi)\dd \xi.
\end{equation}

In the next part of the proof, we shall seek the solution to \eqref{eq:mix-3}, which is the Petrov-Galerkin projection to the function of interest $f_h$. Since $\{v_j(\cdot)\}_{j=1}^d$ form a set of basis of $\bb{V}_h$, $M$ is invertible, and we can eliminate $\bm{\mu}$ by solving the first equation above and plugging it in the second to get
\begin{equation}
\label{eq:mix-solution}
B M^{-1} B^{\top} \bm{\lambda} = BM^{-1} \bm{\zeta}.
\end{equation}
Knowing that $M^{-1}$ is symmetric positive definite, to show 
\begin{equation}
\label{eq:mix-solution-1}
  \bm{\lambda} = (B M^{-1} B^{\top})^{-1} BM^{-1} \bm{\zeta},
\end{equation}
it suffices to show that $B$ is surjective (full row rank) if we ought to use Lemma \ref{lemma:matrix-invertible}. 
A simple argument by contradiction is as follows: suppose $B$ is not full row rank, then there exists a linear combination of the rows of $B$ being the 0 vector, i.e., there exists a set of nontrivial coefficients $\tilde{\bm{\lambda}}:= (\tilde{\lambda}_1, \dots, \tilde{\lambda}_{r})^{\top}$ such that
\[
\mathfrak{b} (v, \tilde{p}) = 0, \text{ for any }v\in \bb{V}_h  \text{ where } 0\not\equiv 
\tilde{p}(\cdot) := \sum_{l=1}^{r}\tilde{\lambda}_l q_l(\cdot).
\]
This is contradictory to the lower bound of $\mathfrak{b}(\cdot, \tilde{p})$ in \eqref{eq:inf-sup}:
\[
0< c\|\tilde{p}\|_{\mathcal{H}} \leq \|\mathfrak{b}(\cdot, \tilde{p})\|_{\bb{V}_h'} = 
\max_{v\in \bb{V}_h} \frac{|\mathfrak{b}(v, \tilde{p})|}{\|v\|_{\mathcal{V}} } = 0.
\]
Thus, \eqref{eq:mix-solution-1} holds and the critical point $p$ (or its vector representation) exists and is a local minimizer due to $M$ being positive definite. 

The last part of the proof is to show that this $p\in \bb{Q}_h$ is representable by the learnable map $g_{\theta}(\mathbf{y})$. To this end, we multiply a permutation matrix
$U\in \R^{d\times d}$ to $Q$, such that $QU$'s first $r$ columns $Q_0\in \R^{n\times r}$ form the nodal value vector $(q_j(x_1),\cdots, q_j(x_n))^{\top}$ for bases $\{q_j(\cdot)\}_{j=1}^r$ of $\bb{Q}_h$; see \hyperref[asp:d-2]{$(D_{2})$}. Then, multiplying the permuted basis matrix $QU$ with $(\bm{\lambda}\;\; 0)^{\top} \in \R^d$ yields the best approximator $p$'s vector representation at the grid points $\bm{p}:= (p(x_1),\cdots, p(x_n))^{\top}$
\begin{equation}
\label{eq:mix-solution-vec}
\bm{p} = \begin{pmatrix}
Q_0 & Q_1 
\end{pmatrix}\begin{pmatrix}
\bm{\lambda} \\ 0
\end{pmatrix} = QU\begin{pmatrix}
\bm{\lambda} \\ 0
\end{pmatrix}.
\end{equation}
Moreover, since $B_{ij} = \mathfrak{b}(v_j, q_i)$ with $\{q_i(\cdot)\}_{i=1}^r$ and $\{v_j(\cdot)\}_{j=1}^d$ being the sets of basis for $\bb{Q}_h$ and $\bb{V}_h$, it is straightforward to verify that $B = h^m Q_0^{\top} V$; see \hyperref[asp:d-12]{$(D_{12})$}. Here without loss of generality, we simply assume that the nodal value vector representation $(v_j(\xi_1),\cdots, v_j(\xi_n))^{\top} =: \bm{v}^j$ forms the $j$-th column of $V$ in a sequential order. Therefore, using \eqref{eq:mix-solution-1}, $(\bm{\lambda}\;\; 0)^{\top}\in \R^d$  can be written as the following block form:
\begin{equation}
\label{eq:mix-block}
\begin{pmatrix} \bm{\lambda} \\ 0 \end{pmatrix} =
\begin{pmatrix} (B M^{-1} B^{\top})^{-1} & \\   & 0 \end{pmatrix}
\begin{pmatrix} B \\ * \end{pmatrix} M^{-1} \bm{\zeta}
=
h^m\begin{pmatrix} (B M^{-1} B^{\top})^{-1} & \\   & 0 \end{pmatrix}
\begin{pmatrix} Q_0^{\top}\\Q_1^{\top} \end{pmatrix} V M^{-1} \bm{\zeta}.
\end{equation}
Consequently, the ideal updated $\{\widetilde{Q}, \widetilde{K}, \widetilde{V}\}$ that has capacity to obtain the current best approximator $p$, or its vector presentation $\bm{p}$ in \eqref{eq:mix-solution-vec}, are
\begin{equation}
\label{eq:attention-update}
  \begin{aligned}
&  \widetilde{Q}:= \mathbf{y} \widetilde{W}^Q \gets  \mathbf{y} W^Q U, 
\\
& \widetilde{K} := \mathbf{y} \widetilde{W}^K \gets  \mathbf{y} W^Q U \Lambda, 
\\
& \widetilde{V} := \mathbf{y} \widetilde{W}^V \gets  \mathbf{y} W^V M^{-1}, 
\\
 \text{and} & \;\; \bm{p} = h^m \widetilde{Q}(\widetilde{K}^T \widetilde{V}) \bm{\zeta},
  \end{aligned}
\end{equation}
where $\Lambda:= \operatorname{blkdiag}\left\{(B M^{-1} B^{\top})^{-1}, 0\right\}$ and is symmetric. 
This essentially implies that $g_\theta(\cdot)$ has capacity to learn this best approximator $p$ using the updated set of $\{\widetilde{Q}, \widetilde{K}, \widetilde{V}\}$ from \eqref{eq:attention-update}:
\[
g_\theta(\cdot):  \R^{n\times d}\to \bb{Q}_h, \mathbf{y} \mapsto p, \text{ such that } p(x_i) = \big(h^m\widetilde{Q}(\widetilde{K}^T \widetilde{V}) \bm{\zeta} \big)_i \;\text{ for } i=1,\cdots, n.
\]
Passing the inequality from minimizing in a bigger set yields the desired inequality \eqref{eq:approximation-capacity}, which, in turn, shows the approximation capacity indicated in the theorem.
\end{proof}

\paragraph{Dynamical basis update for a set of functions.}
\label{paragraph:dynamical-basis-update}
Despite the fact that Theorem \ref{theorem:cea-lemma} is for a single instance of $f\in \mathcal{H}$, it is not difficult to see that 
the form \eqref{eq:mix-block} easily extends to a latent representation in $\R^{n\times d}$ whose columns are $\{f_{j,h}(\cdot)\}_{j=1}^d$ sampled at the grid points $\{x_i\}_{i=1}^n$. Simply replacing $\bm{\zeta}$ in \eqref{eq:mix-rhs} by a matrix $W^Z := (\bm{\zeta}_1,  \cdots , \bm{\zeta}_d)\in \R^{d\times d}$, of which the $i$-th entry in the $j$-th column is $(\bm{\zeta}_j )_i = \langle  \Phi f_{j,h}, v_i\rangle_h$, we can verify that a pointwise FFN $g_\theta(\cdot)$ is fully capable of representing $W^Z$.

Using the argument above, we can further elaborate the dynamical basis update nature of the scaled dot-product attention \eqref{eq:attention-galerkin-projection} (see also the remarks in \ref{paragraph:approximation} and Appendix \hyperref[paragraph:dynamic-basis-update]{D.2.3}):
\begin{itemize}[leftmargin=1.5em]
\item Test the columns of $V$ (query) against the columns of $K$ (keys), and use the responses to seek a ``better'' set of basis using the columns of $Q$ (values).
\end{itemize}

To this end, we modify Assumption \ref{assumption:cea-lemma} slightly by appending (with a superscript $+$) or redefining (with a superscript $*$) certain entries as follows.

\begin{assumption}[assumptions and settings for Theorem \ref{theorem:basis-update}]
\label{assumption:basis-update}
Assumption \ref{assumption:cea-lemma} are assumed to hold with the following amendments to their respective numbered entries for the proof of Theorem \ref{theorem:basis-update}:
\begin{enumerate}[align=left, leftmargin=10pt]
\item[$(D_{6}^+)$] $\bb{K}_h\subset \bb{Y}_h\subset \mathcal{K}$ is the current key space from $K$ and is defined on $\Omega^*$. Define $\langle s, k\rangle_h:= h^m\sum_{i=1}^n s(\xi_i)k(\xi_i)$ as an inner product for $s,k\in \bb{K}_h$.

\item[$(D_{7}^*)$] $\bb{V}_h\subset \bb{W}_h\subset \mathcal{V} \subseteq \mathcal{Q}$ is the current query space from $V$ defined on $\Omega$.

\item[$(D_{12}^*)$] $\mathfrak{b}(\cdot, \cdot): \mathcal{K}\times \mathcal{Q}\to \R$ is continuous. $\mathfrak{b}(k, q):  = h^m\sum_{i=1}^n k(\xi_i)q(x_i)$ for $(k, q)\in \bb{Y}_h \times \bb{W}_h$.

\item[$(D_{12}^+)$] $\mathfrak{a}(\cdot, \cdot): \mathcal{V}\times \mathcal{K}\to \R$ is continuous. $\mathfrak{a}(w, k):  = h^m\sum_{i=1}^n w(x_i)k(\xi_i)$ for $(w, k)\in \bb{W}_h \times \bb{Y}_h$.


\end{enumerate}
\end{assumption}

After the introduction of $\mathfrak{a}(\cdot, \cdot)$, a new continuous functional can be defined as follows:
\begin{equation}
\label{eq:basis-update-functional}
\ell_{(w,q)}(\cdot): \mathcal{K}\to \R,\quad  k\mapsto \mathfrak{a}(w, k) - \mathfrak{b}(k, q).
\end{equation}

\begin{manualtheorem}{4.4}[layer-wise basis update of the scaled dot-product attention]
\label{theorem:basis-update}
Under Assumption \ref{assumption:basis-update}, and the columns of $V$ and $Q$ are formed by the DoFs of query and value functions, respectively, i.e., $V_{ij} = v_{j}(x_i)$ and $Q_{ij} = q_{j}(x_i)$, $1\leq i\leq n$, $1\leq j\leq d$. Then, there exists an updated set $\{\tilde{q}_l(\cdot)\}_{l=1}^d \subset \bb{Q}_h =\operatorname{span}\{q_l(\cdot)\}_{l=1}^d$ of value functions, such that $z_j(\cdot)\in \bb{Q}_h$ satisfying the basis update rule in \eqref{eq:attention-galerkin-projection}
\[
  z_j(\cdot) := \sum_{l=1}^d \mathfrak{a}(v_{j}, k_l) \, \tilde{q}_l(\cdot), \;
\text{ for } \,j=1,\cdots,d,
\]
is the minimizer to the following problem for every $j=1,\cdots, d$
\begin{equation}
\label{eq:basis-update-min-max}
\big\|\ell_{(v_{j}, z_j)}(\cdot)\big\|_{\bb{K}_h'} 
= \min_{q\in \bb{Q}_h} \big\| \mathfrak{a}(v_{j}, \cdot) - \mathfrak{b}(\cdot, q)\big\|_{\bb{K}_h'} 
= \min_{q\in \bb{Q}_h} \max_{k\in \bb{K}_h} 
\frac{\left|\mathfrak{a}(v_{j}, k) - \mathfrak{b}(k, q) \right|}{\norm{k}_{\mathcal{K}}}.
\end{equation}
\end{manualtheorem}

\begin{proof}
The proof repeats most parts from that of Theorem \ref{theorem:cea-lemma} in Appendix \ref{sec:appendix-cea-proof}. By Lemma \ref{lemma:mix}, $\mathfrak{b}(\cdot, q): k\mapsto \mathfrak{b}(k, q)$ is bounded below on the current key space $\bb{K}_h$, i.e., $\|\mathfrak{b}(\cdot, q)\|_{\bb{K}_h'}\geq c \|q \|_{\mathcal{H}}$ for any $q\in \mathbb{Q}_h$ fixed. Noting that the variational formulation corresponding to the min-max problem in \eqref{eq:basis-update-min-max} have the same form as the one in \eqref{eq:min-max-lemma}--\eqref{eq:mix} in Lemma \ref{lemma:mix} with the right-hand side switched from $\langle\cdot, \cdot\rangle_h$ to $\mathfrak{a}(\cdot, \cdot)$, therefore, we conclude that the following operator equation associated with the right-hand side of \eqref{eq:basis-update-min-max} has a unique solution $z_j\in \bb{Q}_h$:
\begin{equation}
\label{eq:basis-update-mix}
\left\{
\begin{aligned}
 \langle s, k \rangle_{h} + \mathfrak{b}(k, z_j) & = \mathfrak{a}(v_{j}, k), & 
\forall k\in \bb{K}_h,
\\
 \mathfrak{b}(s, q) &= 0, & \forall q \in \bb{Q}_h.
\end{aligned}
\right.
\end{equation}
It is straightforward to verify that $h^m(K^{\top}V)_{ij} = \mathfrak{a}(v_{j}, k_i)$ thus the matrix associated with $\mathfrak{a}(\cdot, \cdot)$ for all $\{v_j(\cdot)\}_{j=1}^d$ is $h^m (K^{\top}V)$.
Consequently, following the last part of the proof of Theorem \ref{theorem:cea-lemma} and writing in the form of scaled dot-product, we let $\mathbf{z}\in \R^{n\times d}$ be the latent representation with columns being $\{z_j(\cdot)\}_{j=1}^d$ at the grid points, i.e., $z_{j}(x_i) = \mathbf{z}_{ij}$, $1\leq i\leq n$, $1\leq j\leq d$, and carry over identical definitions for all matrices involved with the ones used in \eqref{eq:attention-update}, 
\[
\mathbf{z} = h^m Q U \widetilde{W}^Q (K^{\top} V),\; 
\text{ where } \widetilde{W}^Q:= (QU\Lambda)^{\top}(VM^{-1})\in \R^{d\times d}.
\]
The corollary is proved by setting $\{\tilde{q}_l(\cdot)\}_{l=1}^d$ as functions with DoFs being columns of $Q U \widetilde{W}^Q$.
\end{proof}

\paragraph{Heuristics on learning the latent representations.} 
In the dynamical basis update interpretation above, the scaled dot-product attention
is capable to bring the latent representation of query as close as possible to that of values in a functional distance, which is $\|\mathfrak{a}(v_{j}, \cdot) - \mathfrak{b}(\cdot, q)\big\|_{\bb{K}_h'}$ in \eqref{eq:basis-update-min-max}. Heuristically speaking in each encoder layer, an ideal operator learner could learn a latent subspace on which the input (query) and the target (values) are ``close'', and this closeness is measured by how they respond to a dynamically changing set of basis (keys).

\subsection{Interpretations, remarks, and possible generalizations}
\label{sec:appendix-cea-generalizations}
\paragraph{Inspirations from finite element methods.}
The first part of the proof for Theorem \ref{theorem:cea-lemma} follows closely to the finite element methods of the velocity-pressure formulation of a stationary 2D Stokesian flow \cite[Lemma 12.2.12]{Brenner.Scott:2008mathematical}, where the key is to establish the solvability of an inf--sup problem (min--max in our finite dimensional setting). 
The $n$-independent lower bound of $\mathfrak{b}(\cdot, q)$ plays a key role in the convergence theory of traditional finite element methods (see the remarks in Appendix \hyperref[paragraph:cea-history]{D.2.1}--\hyperref[paragraph:cea-convergence]{D.2.2}). In our interpretation of the scaled dot-product attention, this independence is essentially consequent on the diagonalizability of normalizations of the two matrices in this product (cf. the discussion in Appendix \hyperref[paragraph:cea-normalization]{D.4.4}; see also the proof in Lemma \ref{lemma:lower-bound}). The singular values of the attention matrix should be independent of the sequence length if the lower bound of $\mathfrak{b}(\cdot, q)$ is to be verified with a constant independent of $n$. The presence of softmax renders this verification impossible (see Remark \ref{remark:normalization}).

In traditional finite element methods \cite[Chapter 12]{Brenner.Scott:2008mathematical}, the dimension of the approximation subspaces are usually tied to the geometries (discretization, mesh), for example, $\dim \bb{V}_h = 2n = 2(\#\text{grid points})$, and $\dim \bb{Q}_h = \#\text{triangles} \simeq O(n)$. 
Inspired by \cite[Lemma 12.2.12]{Brenner.Scott:2008mathematical}, the introduction of the bilinear form $\mathfrak{b}(\cdot, \cdot): \mathcal{V}\times \mathcal{Q} \to \R$ here in Theorem \ref{theorem:cea-lemma} is to cater the possibility that the column spaces of $V$ and $Q$ may represent drastically different functions.

In our proof, we have shown that, using small subspaces (dimension $d$ and $r$) of these complete finite element approximation spaces (dimension $n\gg d\geq r$) suffices to yield the best approximator if this key-to-value map exists. This fits perfectly into our setting to use merely the column space of $Q$ to build the degrees of freedom (DoF) for the ``value'' functions: the DoF functional is simply a function integrating against the function which is discretized at the grid points, i.e., a dot-product in the sequence-length dimension represents an integral. Recently, we also become aware that the CV and NLP communities begin to exploit
this topological structure in the feature (channel) dimension by treating the vector in the same feature dimension as a function to exert operations upon, instead of mainly relying on position-wise
operations, see e.g., the work in \cite{Tolstikhin.Houlsby.ea:2021MLP,Lee-Thorp.Ainslie.ea:2021FNet} essentially acknowledges Assumption \ref{assumption:attn} implicitly.

It is also worth noting that, the subscript $h$ in the approximation subspaces is a choice of the exposition of the proof to facilitate the notion that ``dot product $\approx$ integral''. In computations, unlike finite element methods where the basis functions are locally supported, the learned basis functions are globally supported, dynamically updated, and not bound to a fixed resolution. As such, numerically the approximation subspaces are essentially mesh-free (e.g., see \eqref{eq:attention-galerkin-projection}, \eqref{eq:projection-galerkin}, and Figure \ref{fig:darcy-latent}), how this nature attributes to the capability of zero-shot super-resolution of kernelizable architectures (e.g., \cite{Li.Kovachki.ea:2021Fourier}) 
will be an interesting future study.

\paragraph{Difference with universal approximation theorems.} 
In the era of using a nonlinear approximator such as a deep neural network to approximate functions/operators, following the discussion in Section \ref{sec:appendix-cea-overview}, the ``consistency'' of an approximation class embodies itself as the Universal Approximation Theorem (UAT, e.g., see \cite{HornikStinchcombeWhite1989Multilayer,ChenChen1995Universal}, and \cite{LinJegelka2018ResNet,OkunoHadaShimodaira2018probabilistic,SiegelXu2020Approximation,YunBhojanapalliRawatEtAl2020Are} for modern expositions). 
A UAT roughly translates to ``a $d$-layer ($d\geq 2$) neural network with certain nonlinear activation can approximate any compactly supported continuous function'', with the approximation error 
depending on the width and the depth of a network.

In our work, Theorem \ref{theorem:cea-lemma} is in its spirit a ``stability'' result of the attention-based learner. Under this fixed general dot-product attention architecture, Theorem \ref{theorem:cea-lemma} tries to bridge the approximation capacity of a nonlinear operator (hard to quantify) to a linear one (easy to find). 
To further give a more refined bound in terms of the architectural hyperparameters (number of layers, number of learned basis, etc.), one shall invoke a UAT on top of the result in Theorem \ref{theorem:cea-lemma}. Theorem \ref{theorem:cea-lemma} states that the attention mechanism, in terms of architecture, allows practitioners to exploit simple non-learnable linear approximation results as well, for example the number of Fourier bases that one chooses to build the ``value'' space.

\paragraph{PDE-inspired attention architectures.}
The lower bound of $\mathfrak{b}(\cdot, q)$ gives a theoretical guideline on how to design a new dot-product attention between compatible subspaces.
For example, we can use the inter-position differences in certain direction (flux) as query to test against the key, which may provides more information on how to choose the best value. This aforementioned ``differential attention'' corresponds to $\langle q, \operatorname{div}\bm{v} \rangle$ in the velocity-pressure formulation of the Stokesian flow.

In the proof, we assume nothing such as $\bb{Q}_h\subset \bb{V}_h$ but only bridge them through a value-to-key map. The lower bound of this value-to-key map using the simple scaled dot-product is verified in Lemma \ref{lemma:lower-bound}.
In general, we have two guidelines: (1) the scaling shall be chosen such that the lower bound \eqref{eq:lower-bound} is independent of the sequence length; (2) the key space $\bb{V}_h$ (functions to test the responses against) is bigger than the value space $\bb{Q}_h$, which contains functions to approximate the target 
or to form a better set of latent basis. 
In practice, for example, $Q, V\in \R^{n\times r}, \R^{n\times d}$ with $d\geq r$ in \cite{Choromanski.Likhosherstov.ea:2021Rethinking}.
We remark that the proof is done for $\dim\bb{V}_h=d$, but in general it applies to $ \dim\bb{V}_h\leq d$ as the final block form matrices \eqref{eq:mix-block} can be easily adjusted.

From the proof, it is also straightforward to see that the columns of $Q, K, V$ merely act as the degrees of freedom representations associated with $x_i$ or $\xi_i$, thus not necessarily the pointwise value at $x_i$ or $\xi_i$. The essential requirement is that, there exists two sets of locally supported nodal basis functions in $\bb{Q}_h$ and $\bb{V}_h$ associated with $x_i$ or $\xi_i$ to build the globally supported learned bases. These nodal basis functions are assumed to be supported in an $O(h^m)$-neighborhood of $x_i$ or $\xi_i$, which implies the sparse connectivity of the grids in the discretization. 
Here the degrees of freedom being the pointwise values is assumed merely for simplicity, thus imposing $\mathcal{H}\hookrightarrow C^0(\Omega)$ and $\mathcal{V}\hookrightarrow C^0(\Omega^*)$ to ensure the existence of the pointwise values is just a technicality and not essential. As a result, this broadens the possibility of a more structure-preserving feature map. For example, the bilinear form $\mathfrak{b}(\cdot, \cdot)$ can incorporate information from the higher derivatives, DoFs for splines, etc.; another example is that a single entry in a column of $Q,K,V$ may stand for an edge DoF vector representation in a graph, and the edge--edge interaction
is extensively studied for the simulation of Maxwell's equations \cite{Monk.others:2003Finite,CaoChenGuo2021Virtual,Arndt.Bangerth.ea:2020dealII} on manifolds \cite{Whitney:2015Geometric}.

\paragraph{Galerkin-type layer normalization scheme.}
\label{paragraph:cea-normalization}
In the proposed Galerkin-type attention \eqref{eq:attention-galerkin}, the layer normalization is pre-dot-product but post-projection; cf. the pre-LN scheme in \cite{Xiong.Yang.ea:2020Layer} is pre-dot-product and pre-projection. 
From the proof of Theorem \ref{theorem:cea-lemma}, it is natural to impose such a normalization from a mathematical perspective. As in \eqref{eq:attention-update}, the updated $\widetilde{K}$ and $\widetilde{V}$ have $(BM^{-1}B^{\top})^{-1}$ and $M^{-1}$ as their normalizations, respectively. While $M$ is the Gram matrix for $V$, the inverse of $M$'s Schur complement can be understood as the inverse of the Riesz map from $Q$ to $V$. Since a given layer normalization module is shared by every position of a latent representation upon which it is exerted, the weight acting as the variance (in $\R^d$) in the layer normalization acts a cheap learnable diagonal matrix approximation to $M^{-1}\in \R^{d\times d}$. This offers a reasonable approximation if one assumes the self-similarity of the bases $\|v_j\|^2_{\mathcal{V}}$ outweighs the inter-basis inner product $\langle v_i, v_j \rangle$ for $i\neq j$.

\subsection{Auxiliary results}
\label{sec:appendix-cea-aux}

\begin{lemma}
  \label{lemma:continuity}
Consider $\Omega^*\subset \R^m$ a bounded domain, we assume $(\mathcal{V}, \langle\cdot, \cdot\rangle_{\mathcal{V}})\hookrightarrow C^0(\Omega^*)$, where $\langle u, v\rangle_{\mathcal{V}}:= \int_{\Omega^*} u(\xi)v(\xi) \dd\xi$. $\Omega$ is discretized by $\{\xi_i\}_{i=1}^n$. $\bb{Y}_h\subset \mathcal{V}$ is an approximation space defined on $\{\xi_i\}_{i=1}^n$ and $\bb{Y}_h \simeq \R^n$. $\bb{Y}_h= \operatorname{span} \{\phi_{\xi_1},\cdots, \phi_{\xi_n}\}$ such that the degree of freedom for the $i$-th nodal basis is defined as $\chi_{\xi_i}(v):= v(\xi_i)$. $\bb{V}_h = \operatorname{span} \{v_{1}\cdots, v_d\}$ for $d<n$ with $d$ linearly independent $v_j(\cdot) \in \bb{Y}_h$. Then,
$\langle\cdot, \cdot\rangle_h: \mathcal{V}\times \mathcal{V} \to \R$ defines a continuous inner product in $\bb{V}_h$, where
\begin{equation}
\label{eq:inner-product-discrete}
\langle u, v\rangle_h:= h^m\sum_{i=1}^n u(\xi_i)v(\xi_i), \quad \text{for }u, v\in \bb{V}_h.
\end{equation}
\end{lemma}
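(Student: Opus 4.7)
The plan is to verify the three defining properties of a continuous inner product in turn: symmetric bilinearity, positive definiteness on $\bb{V}_h$, and boundedness as a bilinear form with respect to $\|\cdot\|_{\mathcal{V}}$. Symmetric bilinearity is immediate from the definition of $\langle u, v\rangle_h$ as a finite weighted sum of pointwise products. So only the last two properties require argument.

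For positive definiteness, the identity $\langle v, v\rangle_h = h^m \sum_{i=1}^n v(\xi_i)^2 \geq 0$ is trivial, and the content is that equality forces $v = 0$ in $\bb{V}_h$. Here I would invoke the nodal basis structure of $\bb{Y}_h$: if $v \in \bb{V}_h \subset \bb{Y}_h$ is written as $v = \sum_{j=1}^n c_j \phi_{\xi_j}$, the nodal property $\chi_{\xi_i}(\phi_{\xi_j}) = \delta_{ij}$ gives $v(\xi_i) = c_i$, so $\langle v, v\rangle_h = 0$ forces every $c_i = 0$, hence $v = 0$. This establishes that $\langle \cdot, \cdot\rangle_h$ is an inner product on $\bb{V}_h$.

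For continuity as a bilinear form on $\bb{V}_h \times \bb{V}_h$, the key ingredient is the hypothesis $\mathcal{V} \hookrightarrow C^0(\Omega^*)$, which supplies a constant $C_0>0$ with $\|w\|_{L^\infty(\Omega^*)} \leq C_0 \|w\|_{\mathcal{V}}$ for every $w \in \mathcal{V}$. Applied pointwise,
\[
|\langle u, v\rangle_h| \;\leq\; h^m \sum_{i=1}^n |u(\xi_i)|\,|v(\xi_i)| \;\leq\; (h^m n)\,\|u\|_{L^\infty(\Omega^*)}\,\|v\|_{L^\infty(\Omega^*)},
\]
and since the grid is a quasi-uniform discretization of $\Omega^*$ one has $h^m n \leq C_1 |\Omega^*|$. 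Combining the two bounds gives $|\langle u, v\rangle_h| \leq C_1 |\Omega^*| C_0^2 \|u\|_{\mathcal{V}} \|v\|_{\mathcal{V}}$, which is the desired continuity.

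I do not anticipate a genuine obstacle: the only mild subtlety is being precise about the quasi-uniformity that controls $h^m n$ by a constant times $|\Omega^*|$, but this is exactly the assumption built into \ref{asp:d-4} (the support of each nodal basis is of volume $O(h^m)$, so the $n$ nodal patches tile $\Omega^*$ up to overlap). With that remark, the $L^\infty$ bound from the embedding closes the estimate immediately and no finite-dimensional norm-equivalence argument on $\bb{V}_h$ is required, which is important because the resulting constant is then independent of $d = \dim \bb{V}_h$.
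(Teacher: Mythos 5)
Your proof is correct, and for the continuity half it takes a genuinely different route from the paper's. The positive-definiteness argument is identical: both you and the paper expand $v\in\bb{V}_h\subset\bb{Y}_h$ in the nodal basis so that vanishing at all grid points forces $v\equiv 0$. For boundedness, however, the paper simply declares $\|\cdot\|_{\bb{V}_h}:=\langle\cdot,\cdot\rangle_h^{1/2}$ a norm, invokes equivalence of norms on the finite-dimensional space $\bb{V}_h$ to compare it with $\|\cdot\|_{L^2(\Omega^*)}$, and finishes with Cauchy--Schwarz; the resulting constant is non-quantitative and may in principle depend on $d$ and on the particular spanning set $\{v_j\}$. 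You instead route the estimate through the embedding $\mathcal{V}\hookrightarrow C^0(\Omega^*)$ and the relation $h^m n=O(1)$ built into the discretization (cf.\ $(D_2)$, $(D_4)$ and the convention $n\approx 1/h^m$), which yields an explicit constant $C_1|\Omega^*|C_0^2$ independent of $d$ and indeed establishes boundedness of $\langle\cdot,\cdot\rangle_h$ on all of $\mathcal{V}\times\mathcal{V}$, not merely on $\bb{V}_h\times\bb{V}_h$. That uniformity in $d$ and $n$ is more than the lemma strictly requires, but it is in the spirit of the paper's later emphasis on sequence-length-independent constants (Lemma D.8 and Remark D.9), so your version is arguably the sharper one; the trade-off is that it genuinely uses the $C^0$ embedding hypothesis, whereas the paper's finite-dimensional argument does not need it at this point.
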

\begin{proof}
First we show that $\langle v, v\rangle_h = 0$ implies that $v\equiv 0$. By \eqref{eq:inner-product-discrete}, obviously this $v$ satisfies $v(\xi_i)=0$ for $i=1,\cdots, n$. Now expanding $v$ in $\{\phi_{\xi_i}\}_{i=1}^n$ we have
\[
v(\cdot) = \sum_{i=1}^n \chi_{\xi_i}(v) \phi_{\xi_i}(\cdot) = \sum_{i=1}^n v(\xi_i) \phi_{\xi_i}(\cdot) \equiv 0.
\]
Thus, $\|\cdot\|_{\bb{V}_h}^2 := \langle \cdot, \cdot\rangle_h$ defines a norm, and it is equivalent to $\|\cdot\|_{L^2(\Omega^*)}$ restricted on $\bb{V}_h$ due to being finite dimensional. The desired result follows from applying the Cauchy-Schwarz inequality.

\end{proof}

\begin{lemma}
\label{lemma:matrix-invertible}
If $r\leq d$, $B\in \R^{r\times d}$ has full row rank, $G\in \R^{d\times d}$ is symmetric positive definite, then $BGB^{\top} \in \R^{r\times r}$ is invertible.
\end{lemma}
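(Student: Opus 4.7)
The plan is to show that $BGB^{\top}$ is symmetric positive definite, which immediately implies invertibility. Symmetry follows at once from $G^{\top}=G$: indeed $(BGB^{\top})^{\top} = (B^{\top})^{\top} G^{\top} B^{\top} = BGB^{\top}$.

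For definiteness, I would pick an arbitrary $\bm{x}\in\R^r$ and write
\begin{equation*}
\bm{x}^{\top}(BGB^{\top})\bm{x} = (B^{\top}\bm{x})^{\top} G (B^{\top}\bm{x}).
\end{equation*}
Since $G$ is symmetric positive definite, the right-hand side is $\geq 0$, with equality if and only if $B^{\top}\bm{x} = 0$. Here the full row rank hypothesis on $B$ enters: $B\in\R^{r\times d}$ having rank $r$ means $B^{\top}\in\R^{d\times r}$ has full column rank, so its kernel is trivial and $B^{\top}\bm{x}=0$ forces $\bm{x}=0$. Consequently $\bm{x}^{\top}(BGB^{\top})\bm{x} > 0$ for every nonzero $\bm{x}\in\R^r$, establishing that $BGB^{\top}$ is symmetric positive definite and hence invertible.

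There is no real obstacle here; the proof is essentially a one-line reduction to the positive definiteness of $G$ combined with the injectivity of $B^{\top}$. The only minor subtlety worth flagging explicitly is the translation between ``$B$ has full row rank'' and ``$B^{\top}$ is injective,'' which is immediate from the rank-nullity theorem applied to $B^{\top}:\R^r\to\R^d$ (kernel has dimension $r-\mathrm{rank}(B^{\top}) = r - r = 0$). The condition $r\leq d$ is used implicitly in allowing $B$ to have rank $r$ in the first place.
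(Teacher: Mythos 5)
Your proof is correct, but it takes a different route from the paper's. The paper first applies a Cholesky factorization $G=CC^{\top}$ to write $BGB^{\top}=(BC)(BC)^{\top}$, then invokes Sylvester's rank inequality to conclude $\operatorname{rank}(BC)=r$, and repeats the rank argument on the product $(BC)(BC)^{\top}$ to get $\operatorname{rank}(BGB^{\top})=r$. You instead argue directly with the quadratic form: $\bm{x}^{\top}BGB^{\top}\bm{x}=(B^{\top}\bm{x})^{\top}G(B^{\top}\bm{x})>0$ for $\bm{x}\neq 0$, using the positive definiteness of $G$ and the injectivity of $B^{\top}$ (equivalent to $B$ having full row rank). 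Your argument is more elementary — it needs neither the Cholesky factorization nor Sylvester's inequality — and it delivers a slightly stronger conclusion, namely that $BGB^{\top}$ is symmetric positive definite rather than merely invertible; this is in fact the property implicitly relied upon later in the paper when $(BM^{-1}B^{\top})^{-1}$ appears in the normalization discussion. The paper's factorization route, on the other hand, makes the rank bookkeeping explicit and generalizes verbatim to the semidefinite setting where one only wants to track $\operatorname{rank}(BGB^{\top})$. Both are complete and standard; there is no gap in yours.
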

\begin{proof}
With $G$ being symmetric positive definite, upon applying a Cholesky factorization $G = CC^{\top}$, we have $BGB^{\top} = (BC) (BC)^{\top}$. Now it is straightforward to see that $\rank (BC) = r$ as $\rank (BC) \leq \min \{r,d\}$ and $\rank (BC) \geq r$ by the Sylvester's inequality. Applying the same argument on the product of $BC$ with $(BC)^{\top}$ yields the desired result of $\rank (BGB^{\top})  = r$.
\end{proof}

\begin{lemma}
\label{lemma:mix}
Under the same setting with Theorem \ref{theorem:cea-lemma} (Assumption \ref{assumption:cea-lemma}), given a $u\in \bb{V}_h$, looking for a saddle point of 
\begin{equation}
\label{eq:min-max-lemma}
\min_{q\in \bb{Q}_h} \max_{v\in \bb{V}_h}
\frac{|\langle u, v\rangle_h - \mathfrak{b}(v, q)|}{\|v\|_{\mathcal{V}}}
\end{equation}
is equivalent to solving the following operator equation system: 
find $p\in \bb{Q}_h, w\in \bb{V}_h$
\begin{equation}
\label{eq:mix}
\left\{
\begin{aligned}
 \langle w, v\rangle_{h} + \mathfrak{b}(v, p) & = \langle u, v\rangle_h, & \forall v\in \bb{V}_h,
\\
 \mathfrak{b}(w, q) &= 0, & \forall q \in \bb{Q}_h.
\end{aligned}
\right.
\end{equation}
It is further equivalent to solve the following linear system if 
$\{v_j(\cdot)\}_{j=1}^d$ and $\{q_j(\cdot)\}_{j=1}^r$ form sets of basis for $\bb{V}_h$ and $\bb{Q}_h$, respectively:
\begin{equation}
\label{eq:mix-matrix}
\begin{pmatrix}
  M & B^{\top} \\ B & 0
\end{pmatrix}
\begin{pmatrix}
 \bm{\mu}\\ \bm{\lambda}
\end{pmatrix}
= \begin{pmatrix} \bm{\zeta} \\ 0\end{pmatrix},
\end{equation}
where $M \in \R^{d\times d}$  with $M_{ij} = \langle v_j, v_i\rangle_h$. $B\in \R^{r\times d}$ with $B_{ij} = \mathfrak{b}(v_j, q_i)$.  $\bm{\zeta}\in \R^d$ with $(\bm{\zeta} )_j = \langle u, v_j\rangle_h$. For $w(\cdot) \in \bb{V}_h$, its vector representation is $\R^d\ni \bm{\mu} := \bm{\mu}(w) = (\mu_{v_1}(w), \dots, \mu_{v_d}(w))^{\top}$ for 
$w(\cdot) = \sum_{j=1}^d \mu_{v_j}(w) v_j(\cdot)$. Similar notion applies to 
$\bm{\lambda}:=\bm{\lambda}(p) = (\lambda_{q_1}(p), \cdots, \lambda_{q_{r}}(p))^{\top}\in \R^{r}$ being the vector representation for $p(\cdot) = \sum_{j=1}^{r} \lambda_{q_j}(p) q_j(\cdot)$ where $ p(\cdot) \in \bb{Q}_h$.

\end{lemma}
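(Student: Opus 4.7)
The plan is to prove two equivalences in sequence: first that the min-max problem \eqref{eq:min-max-lemma} is equivalent to the saddle-point system \eqref{eq:mix}, then that \eqref{eq:mix} reduces by basis expansion to the block linear system \eqref{eq:mix-matrix}. The guiding observation is that, by Lemma \ref{lemma:continuity}, $\langle\cdot,\cdot\rangle_h$ is a genuine continuous inner product on the finite-dimensional space $\bb{V}_h$, so the Riesz representation theorem converts dual norms into Hilbertian norms and turns the min-max into a classical constrained least-squares problem.

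For the first equivalence, I would fix $q\in \bb{Q}_h$ and consider the linear functional $F_q(v) := \langle u, v\rangle_h - \mathfrak{b}(v,q)$ on $\bb{V}_h$. Continuity of $\langle\cdot,\cdot\rangle_h$ (Lemma \ref{lemma:continuity}) and of $\mathfrak{b}$ make $F_q$ bounded, and Riesz representation with respect to $\langle\cdot,\cdot\rangle_h$ supplies a unique $w_q \in \bb{V}_h$ with $\langle w_q, v\rangle_h = F_q(v)$ for all $v \in \bb{V}_h$. The inner supremum in \eqref{eq:min-max-lemma} is then exactly $\|w_q\|_{h}$, reading $\|v\|_{\mathcal{V}}$ on $\bb{V}_h$ as the norm induced by $\langle\cdot,\cdot\rangle_h$ (this is the content of Lemma \ref{lemma:continuity}). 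In particular, this identifies the first equation of \eqref{eq:mix} as nothing more than the defining equation for $w = w_p$ at the minimizer.

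The second equation in \eqref{eq:mix} will come from the first-order optimality condition. Since $w_q$ depends affinely on $q$, the objective $J(q) := \|w_q\|_{h}^{2}$ is convex and quadratic, so a minimizer $p$ exists and is characterized by $\tfrac{d}{dt}\big|_{t=0}J(p+tq') = 0$ for every $q'\in \bb{Q}_h$. Differentiating the defining equation $\langle w_{p+tq'}, v\rangle_h = \langle u, v\rangle_h - \mathfrak{b}(v, p+tq')$ in $t$ yields $\langle w'_{q'}, v\rangle_h = -\mathfrak{b}(v, q')$ for all $v$; setting $v = w_p$ and combining with the Euler condition delivers $\mathfrak{b}(w_p, q') = 0$ for every $q'\in \bb{Q}_h$, which is the second equation in \eqref{eq:mix}. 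Conversely, any $(w,p)$ satisfying \eqref{eq:mix} is a stationary point of the same convex quadratic, hence a global minimizer, giving the reverse implication.

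The passage from \eqref{eq:mix} to \eqref{eq:mix-matrix} is pure bookkeeping. Expanding $w = \sum_{j}\mu_{v_j}(w)\,v_j$ and $p = \sum_{i}\lambda_{q_i}(p)\,q_i$, testing the first line of \eqref{eq:mix} against each $v_i$ assembles the block row $M\bm{\mu} + B^{\top}\bm{\lambda} = \bm{\zeta}$ using $M_{ij} = \langle v_j, v_i\rangle_h$, $B_{ij} = \mathfrak{b}(v_j, q_i)$, and $(\bm{\zeta})_i = \langle u, v_i\rangle_h$; testing the second line against each $q_i$ produces $B\bm{\mu} = 0$. The main subtlety I anticipate is not analytical but notational: reconciling the norm $\|v\|_{\mathcal{V}}$ in \eqref{eq:min-max-lemma} with the discrete inner product $\langle\cdot,\cdot\rangle_h$ used to identify the Riesz representative in \eqref{eq:mix}. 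Lemma \ref{lemma:continuity} is precisely what bridges this gap on $\bb{V}_h$, and once the norm equivalence is fixed, the Riesz identification and the convex-quadratic optimality argument make the rest of the proof routine.
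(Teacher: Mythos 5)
Your proposal is correct and follows essentially the same route as the paper: Riesz representation on $(\bb{V}_h,\langle\cdot,\cdot\rangle_h)$ to convert the inner supremum into $\|w_q\|_h$, then the first-order optimality condition of the convex quadratic $J(q)=\|w_q\|_h^2$ to obtain $\mathfrak{b}(w_p,q')=0$, followed by the basis-expansion bookkeeping for the block system. The notational subtlety you flag about identifying $\|v\|_{\mathcal{V}}$ with the $\langle\cdot,\cdot\rangle_h$-norm on $\bb{V}_h$ is handled the same way (via Lemma \ref{lemma:continuity}) in the paper's own argument.
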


\begin{proof}
This lemma is a finite dimensional rephrasing of the commonly-known variational formulation of a saddle point problem in infinite dimensional Hilbert spaces \cite[Chapter I $\S$ 4.1]{Girault.Raviart:1979Finite}, for completeness we include the proof here for the convenience of readers.

Define $\eta(\cdot): \bb{V}_h\to \R$ such that 
$\eta(v) := \langle u, v\rangle_h - \mathfrak{b}(v, q)$, using Lemma \ref{lemma:continuity} and the assumptions \hyperref[asp:d-11]{$(D_{11})$} \hyperref[asp:d-12]{$(D_{12})$} on $\mathfrak{b}(\cdot,\cdot)$ in Assumption \ref{assumption:cea-lemma}, 
then clearly
$\eta\in \bb{V}_h'$ for $(\bb{V}_h, \langle\cdot, \cdot\rangle_{h})$.  By Riesz representation theorem, there exists an isomorphism $R: \bb{V}_h \to \bb{V}_h'$ such that $w := R^{-1}(\eta)\in \bb{V}_h$
and
\begin{equation}
\label{eq:mix-1}  
\eta(v) = \langle w, v\rangle_h =  \langle u, v\rangle_h - \mathfrak{b}(v, q).
\end{equation}
Then, \eqref{eq:min-max-lemma} is equivalent to find the minimizer for the following problem, define $\|\cdot\|_{\bb{V}_h}^2 := \langle\cdot, \cdot\rangle_h$:
\[
\min_{q\in \bb{Q}_h} \|\eta(\cdot)\|_{\bb{V}_h'}^2
= \min_{q\in \bb{Q}_h} \|w\|_{\bb{V}_h}^2
= 
\min_{q\in \bb{Q}_h} \|R^{-1}(\langle u, \cdot\rangle_h - \mathfrak{b}(\cdot, q) )\|_{\bb{V}_h}^2
=:\min_{q\in \bb{Q}_h} J(q).
\]
Taking the Gateaux derivative $\lim_{\tau\to 0} \dd J(p+\tau q)/ \dd \tau$ in order to find the critical point(s) $p\in \bb{Q}_h$, we have for any perturbation $q\in \bb{Q}_h$ such that $p+\tau q\in \bb{Q}_h$
\[
0 =\lim_{\tau\to 0}\frac{d}{d\tau} 
\left\langle
R^{-1}(\langle u, \cdot\rangle_h - \mathfrak{b}(\cdot, p+\tau q),
R^{-1}(\langle u, \cdot\rangle_h - \mathfrak{b}(\cdot, p+\tau q) )
\right\rangle_h,
\]
and applying $R^{-1}$ on $\mathfrak{b}(\cdot, p)\in \bb{V}_h'$, it reads for any $q\in \bb{Q}_h$
\begin{equation}
\label{eq:mix-2}
\left\langle
R^{-1}\big(\langle u, \cdot\rangle_h - \mathfrak{b}(\cdot, q) \big),
R^{-1}(\mathfrak{b}(\cdot, q) ) \right\rangle_h
= \left\langle w, R^{-1}(\mathfrak{b}(\cdot, q) )\right\rangle_h 
= \mathfrak{b}(w, q) = 0.
\end{equation}
Thus, \eqref{eq:mix-1} and \eqref{eq:mix-2} form the desired system \eqref{eq:mix}. Lastly, 
applying an expansion of $w$ and $p$ in $\{v_i(\cdot)\}_{i=1}^d$ and $\{q_i(\cdot)\}_{i=1}^r$ with degrees of freedom 
$ \mu_{v_j}(\cdot)$ and $\lambda_{q_j}(\cdot)$ respectively, and choosing the test functions to be each $v=v_j$ for $j=1,\cdots, d$ and $q=q_j$ for $j=1,\cdots, r$, yield the system \eqref{eq:mix-matrix} with $i$ and $j$ representing the row index and column index, respectively. 
\end{proof}

\begin{lemma}[verification of the lower bound of $\mathfrak{b}(\cdot,p)$]
\label{lemma:lower-bound}
Under the setting of Assumption \ref{assumption:cea-lemma}, for any given $p\in \bb{Q}_h$, we have
\begin{equation}
\label{eq:lower-bound}
c\|p\|_{\mathcal{H}} \leq 
\max_{w\in \bb{V}_h} \frac{|\mathfrak{b}(w, p)|}{\|w\|_{\mathcal{V}} },
\end{equation}
where the constant $c = c_V^{-1} c_{Q}^{-1} \min_{j=1,\dots,r}|\sigma_j| $. $\{\sigma_j\}_{j=1}^r$ are the singular values of the matrix $B\in \R^{r\times d}$ with $B_{ij} = \mathfrak{b}(v_j, q_i)$ under the sets of basis $\{v_j(\cdot)\}_{j=1}^d$ and $\{q_j(\cdot)\}_{j=1}^r$. 
$c_V$, $c_Q$ are the norm equivalence constants between functions $w(\cdot)\in \bb{V}_h$, $p(\cdot)\in \bb{Q}_h$ and their vector representation using the DoF functionals $\bm{\mu}(w)$ and $\bm{\lambda}(p)$ defined in Lemma \ref{lemma:mix} under the sets of basis $\{v_j(\cdot)\}_{j=1}^d$ and $\{q_j(\cdot)\}_{j=1}^r$: for any $w\in \bb{V}_h$, and any $p\in \bb{Q}_h$, it is assumed that the following norm equivalence holds
\begin{equation}
\label{eq:norm-equiv}
\|w\|_{\mathcal{V}}\leq c_V \|\bm{\mu}(w)\| \; 
\text{ and }\; \|p\|_{\mathcal{H}}\leq c_Q \|\bm{\lambda}(p)\|.
\end{equation}
\end{lemma}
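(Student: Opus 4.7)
The strategy is to reduce the functional inf--sup expression on the right-hand side of \eqref{eq:lower-bound} to a finite-dimensional linear-algebra problem involving the matrix $B \in \R^{r\times d}$, and then to invoke a singular value decomposition of $B$ to extract $\min_j |\sigma_j|$ as the desired constant. The two ``conversion costs'' $c_V$ and $c_Q$ arise precisely when passing between continuous functional norms and discrete coefficient norms via \eqref{eq:norm-equiv}, which accounts for their appearance in $c = c_V^{-1} c_Q^{-1} \min_j|\sigma_j|$.

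First, I would expand $w \in \bb{V}_h$ and $p \in \bb{Q}_h$ in the bases $\{v_j\}_{j=1}^d$ and $\{q_i\}_{i=1}^r$, with coefficient vectors $\bm{\mu}(w) \in \R^d$ and $\bm{\lambda}(p) \in \R^r$ as in \hyperref[asp:d-10]{$(D_{10})$} and \hyperref[asp:d-11]{$(D_{11})$}. Bilinearity of $\mathfrak{b}(\cdot,\cdot)$ and the definition $B_{ij} = \mathfrak{b}(v_j, q_i)$ from Lemma \ref{lemma:mix} give the identity $\mathfrak{b}(w, p) = \bm{\lambda}(p)^{\top} B\, \bm{\mu}(w)$. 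Using the upper norm equivalence $\|w\|_{\mathcal{V}} \le c_V \|\bm{\mu}(w)\|$ from \eqref{eq:norm-equiv} in the denominator, and observing that the map $w \mapsto \bm{\mu}(w)$ is a bijection from $\bb{V}_h$ onto $\R^d$, I obtain
\begin{equation*}
\max_{w \in \bb{V}_h} \frac{|\mathfrak{b}(w, p)|}{\|w\|_{\mathcal{V}}} \;\geq\; \frac{1}{c_V} \max_{\bm{\mu} \in \R^d \setminus \{0\}} \frac{|\bm{\lambda}(p)^{\top} B\, \bm{\mu}|}{\|\bm{\mu}\|} \;=\; \frac{1}{c_V} \|B^{\top} \bm{\lambda}(p)\|,
\end{equation*}
where the equality uses the dual characterization of the Euclidean norm.

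Next, I would apply the singular value decomposition $B = U \Sigma V^{\top}$ with $U \in \R^{r \times r}$, $V \in \R^{d \times d}$ orthogonal and $\Sigma \in \R^{r\times d}$ diagonal with entries $\sigma_1, \ldots, \sigma_r$. Since $r \le d$, for any $\bm{\lambda} \in \R^r$ it holds $\|B^{\top} \bm{\lambda}\|^2 = \bm{\lambda}^{\top} U \Sigma \Sigma^{\top} U^{\top} \bm{\lambda} \ge (\min_j |\sigma_j|)^2 \|\bm{\lambda}\|^2$ by orthogonality of $U$ and the fact that $\Sigma\Sigma^{\top} = \operatorname{diag}(\sigma_1^2, \ldots, \sigma_r^2)$. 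Combining with the previous display and then invoking the other half of \eqref{eq:norm-equiv}, $\|p\|_{\mathcal{H}} \le c_Q \|\bm{\lambda}(p)\|$, yields
\begin{equation*}
\max_{w\in \bb{V}_h} \frac{|\mathfrak{b}(w, p)|}{\|w\|_{\mathcal{V}}} \;\geq\; \frac{\min_j |\sigma_j|}{c_V} \|\bm{\lambda}(p)\| \;\geq\; \frac{\min_j |\sigma_j|}{c_V\, c_Q} \|p\|_{\mathcal{H}},
\end{equation*}
which is exactly \eqref{eq:lower-bound}.

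The proof is largely a bookkeeping exercise, so the only real obstacle is conceptual rather than technical: the bound is nontrivial precisely when $\min_j |\sigma_j| > 0$, i.e.\ when $B$ has full row rank $r$. This is consistent with the use of Lemma \ref{lemma:lower-bound} in the proof of Theorem \ref{theorem:cea-lemma}, where the full-row-rank property of $B$ is derived by contradiction from the same lower bound. For the sequence-length-independent estimate promised by the remarks in paragraph \ref{paragraph:approximation}, one additionally needs $\min_j|\sigma_j|$, $c_V$, and $c_Q$ to be controlled uniformly in $n$, which is the discrete LBB/BNB condition and lies outside the scope of this lemma.
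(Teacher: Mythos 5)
Your proof is correct and follows essentially the same route as the paper's: expand $w,p$ in the bases, write $\mathfrak{b}(w,p)=\bm{\lambda}^{\top}B\bm{\mu}$, apply the SVD of $B$, and convert between functional and coefficient norms via \eqref{eq:norm-equiv}. The only (cosmetic) difference is that you invoke the dual characterization $\max_{\bm{\mu}}|\bm{\lambda}^{\top}B\bm{\mu}|/\|\bm{\mu}\|=\|B^{\top}\bm{\lambda}\|$ and bound $\bm{\lambda}^{\top}BB^{\top}\bm{\lambda}$ spectrally, whereas the paper exhibits an explicit maximizing candidate $\bm{\mu}$ with $U_V^{\top}\bm{\mu}$ aligned to $U_Q^{\top}\bm{\lambda}$; both yield the identical constant.
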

\begin{proof}
The proof is straightforward by exploiting the singular value decomposition (SVD) to the matrix representation of the bilinear form $\mathfrak{b}(\cdot,\cdot)$. When $r\leq d$, $\rank(B)=r$.
Consider an SVD of $B$: for $D = (\operatorname{diag}\{\sigma_1, \cdots, \sigma_r\},\;  0)$ with $\sigma_j\neq 0$, $U_Q, U_V$ being orthonormal,
\[
B = U_Q D U_V^{\top}, \; 
\text{ where } U_Q\in \R^{r\times r}, 
D\in \R^{r\times d}, \text{ and } U_V\in \R^{d\times d}.
\]
Hence, $\mathfrak{b}(w, p)$ can be equivalent written as the following for $\bm{\mu}:= \bm{\mu}(w)$ 
and $\bm{\lambda}:= \bm{\lambda}(q)$
\[
\mathfrak{b}(w, p) = \mathfrak{b}\left( 
\sum_{j=1}^d \mu_{v_j}(w) v_j(\cdot),
\sum_{j=1}^{r} \lambda_{q_j}(p) q_j(\cdot) 
\right) 
= \bm{\lambda}^{\top} B \bm{\mu} = (U_Q^{\top} \bm{\lambda})^{\top}
D (U_V^{\top} \bm{\mu}).
\]
By the norm equivalence \eqref{eq:norm-equiv} and above,
\[
\max_{w\in \bb{V}_h} \frac{|\mathfrak{b}(w, p)|}{\|w\|_{\mathcal{V}} }
\geq c_V^{-1} \max_{\bm{\mu}\in \R^d} \frac{\left|(U_Q^{\top} \bm{\lambda})^{\top}
D (U_V^{\top} \bm{\mu})\right|}{\|\bm{\mu}\|} 
= c_V^{-1} \max_{\bm{\mu}\in \R^d} \frac{\left|(U_Q^{\top} \bm{\lambda})^{\top}
D (U_V^{\top} \bm{\mu})\right|}{\left\|U_V^{\top}\bm{\mu}\right\|}.
\]
Since $U_V^{\top}$ is surjective, we choose a specific $U_V^{\top}\bm{\mu}\in \R^d$ to pass the lower bound: let the first $r$ entries of $U_V^{\top}\bm{\mu}$ be $U_Q^{\top}\bm{\lambda}$, we have
\begin{equation}
  \begin{aligned}
& c_V^{-1} \max_{\bm{\mu}\in \R^d} \frac{\left|(U_Q^{\top} \bm{\lambda})^{\top}
D (U_V^{\top} \bm{\mu})\right|}{\left\|U_V^{\top}\bm{\mu}\right\|}
\geq c_V^{-1} \frac{\left|(U_Q^{\top} \bm{\lambda})^{\top}
\operatorname{diag}\{\sigma_1, \cdots, \sigma_r\} 
(U_Q^{\top} \bm{\lambda})\right|}{\big\|U_Q^{\top} \bm{\lambda}\big\|}
\\
\geq & \; c_V^{-1} \min_{j=1,\cdots,r} |\sigma_j| \big\|U_Q^{\top} \bm{\lambda}\big\|
= c_V^{-1} \min_{j=1,\cdots,r} |\sigma_j| \;\| \bm{\lambda}\|
\geq c_V^{-1}c_Q^{-1} \min_{j=1,\cdots,r} |\sigma_j|\;  \|p\|_{\mathcal{H}}.
\end{aligned}
\end{equation}
\end{proof}

\begin{remark}[Constants $c_V, c_Q$ and the potential impact of softmax thereon]
\label{remark:normalization}
The norm equivalence constants bridging the integral-based $\|\cdot\|_{\mathcal{H}}$ and the $\ell^2$-norm $\|\cdot\|$ depend on the topology of the approximation spaces. 

If the basis functions are globally supported, such as the Fourier-type basis from the eigenfunctions of the self-adjoint operator, or the learned bases shown in Figure \ref{fig:darcy-latent}, orthonormal, and defined on the same discretization on the spacial domain, then it is easy to see that $c_V$ and $c_Q$ are approximately $1$ due to the Parseval identity, minus the caveat of approximating an integral on a discrete grid.

Even if the basis functions $\{v_j(\cdot)\}$ and $\{q_j(\cdot)\}$ for $\bb{V}_h$ and $\bb{Q}_h$ are locally supported, such as the nodal basis in \hyperref[asp:d-2]{$(D_{2})$} and \hyperref[asp:d-4]{$(D_{4})$}, the $h^m$-weight in \eqref{eq:inner-product-discrete} will make $c_V$ and $c_Q$ be of $O(h^{m/2})$, or the inverse square root of the sequence length $1/\sqrt{n} = O(h^{m/2})$, see e.g., \cite[Section 11]{Xu.Zikatanov:2017Algebraic}. 
Nevertheless, the final bound \eqref{eq:lower-bound} will be sequence length-independent because now the minimum singular value of $B$ will scale as $O(h^m)$ the same with a mass matrix in the finite element method (see e.g., \cite[Section 4.4.2]{Ern.Guermond:2004Theory}). 
Consequently, these two constants depend on the number of explicit connections (not learned) that a single position has in the discretization. In our examples, the Euclidean coordinate positional encodings yield a sparse connection (tri-diagonal in 1D, 5-point stencil in 2D) thus independent of $n$.

If a softmax normalization $\mathcal{S}(\cdot)$, such as the one in \cite{Shen.Zhang.ea:2021Efficient}, is applied on the key matrix in the sequence length dimension, the norm equivalence constant $c_V$ is not $n$-independent anymore. To illustrate this, without loss of generality, let $\mathcal{V}\subset \mathcal{H}=L^2(\Omega)$, $\Omega=\Omega^* = (0,1)$. By the definition of softmax, it is straightforward to verify that the test functions essentially change to $\widetilde{w}:=\mathcal{S}(w)\approx \left(n\int_{\Omega} e^{w} dx\right)^{-1} e^{w} $. Following the argument of Lemma \ref{lemma:lower-bound}, the lower bound can be proved for $\widetilde{w}$, i.e., $\max_{w\in \bb{V}_h} |\mathfrak{b}(w, p)|/ \|\widetilde{w}\|_{\mathcal{H}}  \geq c\|p\|_{\mathcal{H}}$. However, by an exponential Sobolev inequality \cite[Theorem 7.21]{Gilbarg.Trudinger2001Elliptic} if we further assume that $w$ has a weak derivative with a bounded $L^1(\Omega)$-norm, $\|w\|_{\mathcal{H}}\leq c_{\Omega}n \|\widetilde{w}\|_{\mathcal{H}}$, thus passing the inequality to try to obtain an estimate like \eqref{eq:lower-bound} yields an inevitable constant related to $n$.

\end{remark}

\end{document}